\documentclass[11pt]{article}
\usepackage{amsmath, amsthm, amssymb, bbm, xparse, xspace}
\usepackage[margin=1in]{geometry}
\usepackage[round]{natbib}
\usepackage[dvipsnames,table]{xcolor}
\usepackage{hyperref}
\hypersetup{
    colorlinks=true,
    linkcolor=blue,
    citecolor=blue,
    urlcolor=blue,
    filecolor=blue
}
\usepackage[capitalise]{cleveref}
\crefname{subsection}{subsection}{subsections}
\crefname{assumption}{assumption}{assumptions}
\crefname{example}{example}{examples}
\usepackage{setspace}
\usepackage[shortlabels]{enumitem}
\usepackage{graphicx}
\usepackage{booktabs}
\usepackage{float}
\usepackage{multirow}
\usepackage{colortbl}
\usepackage{etoolbox}
\usepackage[most]{tcolorbox}
\usepackage{algorithm}
\usepackage{algpseudocode}
\usepackage{tikz}
\usepackage{subcaption}
\usetikzlibrary{arrows.meta, positioning, shapes.geometric, fit, calc}

\usepackage[english]{babel}
\usepackage[autostyle, english = american]{csquotes}
\MakeOuterQuote{"}

\theoremstyle{plain}
\newtheorem{theorem}{Theorem}

\newtheorem{corollary}{Corollary}

\theoremstyle{definition}

\newtheorem{example}{Example}

\newcommand{\bE}{\mathbb{E}}
\newcommand{\bP}{\mathbb{P}}
\newcommand{\bR}{\mathbb{R}}

\usepackage{color-edits}
\addauthor{Will}{blue}
\addauthor[Jackie]{jb}{magenta}
\addauthor[Tianyi]{tp}{green}
\addauthor{Note}{red}

\title{AI Agents for Inventory Control: \\ Human-LLM-OR Complementarity}

\author{Jackie Baek\thanks{Stern School of Business, New York University, \href{mailto:baek@stern.nyu.edu}{baek@stern.nyu.edu}}
\and
Yaopeng Fu\thanks{Columbia University, \href{mailto:yf2726@columbia.edu}{yf2726@columbia.edu}}
\and
Will Ma\thanks{Graduate School of Business and Data Science Institute, Columbia University, \href{mailto:wm2428@gsb.columbia.edu}{wm2428@gsb.columbia.edu}}
\and
Tianyi Peng\thanks{Graduate School of Business and Data Science Institute, Columbia University, \href{mailto:tianyi.peng@columbia.edu}{tianyi.peng@columbia.edu}}
}
\date{}

\begin{document}
\maketitle

\begin{abstract}
Inventory control is a fundamental operations problem in which ordering decisions are traditionally guided by theoretically grounded operations research (OR) algorithms.  However, such algorithms often rely on rigid modeling assumptions and can perform poorly when demand distributions shift or relevant contextual information is unavailable. Recent advances in large language models (LLMs) have generated interest in AI agents that can reason flexibly and incorporate rich contextual signals, but it remains unclear how best to incorporate LLM-based methods into traditional decision-making pipelines.

We study how OR algorithms, LLMs, and humans can interact and complement each other in a multi-period inventory control setting. We construct \textsc{InventoryBench}, a benchmark of over 1{,}000 inventory instances spanning both synthetic and real-world demand data, designed to stress-test decision rules under demand shifts, seasonality, and uncertain lead times. Through this benchmark, we find that OR-augmented LLM methods outperform either method in isolation, suggesting that these methods are complementary rather than substitutes.

We further investigate the role of humans through a controlled classroom experiment that embeds LLM recommendations into a human-in-the-loop decision pipeline. Contrary to prior findings that human-AI collaboration can degrade performance, we show that, on average, human-AI teams achieve higher profits than either humans or AI agents operating alone.  Beyond this population-level finding, we formalize an \emph{individual-level complementarity effect} and derive a distribution-free lower bound on the fraction of individuals who benefit from AI collaboration; empirically, we find this fraction to be substantial.

Taken together, our results provide evidence that effective inventory management benefits from a complementary system in which OR provides heuristics and calculators, LLMs contribute world knowledge and contextual reasoning, and human judgment adds value beyond automated decision making.
\end{abstract}

\section{Introduction} \label{sec:introduction}
Inventory control is the fundamental problem of ordering supply to match demand.
To solve this problem, the field of operations research (OR) has developed a rich toolkit of heuristics and approximations for handling supply and demand uncertainty, forming the backbone of modern inventory management systems (see \Cref{sec:inv_mgmt}).
These methods---base-stock policies, newsvendor models, $(s, S)$ rules, and their data-driven variants---are theoretically grounded, computationally efficient, and widely deployed in practice.
However, they rely on fixed modeling assumptions (e.g., stationary demand, known lead times) and on historical data that may not reflect the current environment.
When demand distributions shift, contextual information is difficult to encode in a formal model, or supply conditions change unexpectedly, these heuristics can perform poorly.
For this reason, the core of modern inventory management is not purely algorithmic but human-in-the-loop \citep[see e.g.][]{boute2021digital}: OR tools generate recommendations, and human experts make the final call, applying contextual judgment to accept, adjust, or override what the algorithm suggests.

Recent advances in large language models (LLMs) introduce a fundamentally different form of reasoning into this pipeline. Unlike traditional OR algorithms, LLMs can incorporate natural-language context---product descriptions, calendar information, qualitative market signals---and draw on broad world knowledge to form reasonable judgments even when the formal model is incomplete or the environment shifts in unexpected ways. At the same time, LLMs are a nascent technology for operational decision-making, and it remains unclear when they help, how they should be deployed alongside existing OR heuristics, and how they interact with human decision makers.

This paper studies the \emph{complementarity} among OR algorithms, LLM-based agents, and humans in a multi-period inventory control setting.
The ways in which these three components can be combined are diverse: an OR heuristic can feed a structured recommendation to an LLM, which decides whether to follow or override it; alternatively, an LLM can generate demand forecasts or contextual estimates that an OR algorithm acts on.
When humans enter the loop, further design choices arise: a human can retain full decision authority and use AI-generated recommendations as advisory input, or delegate operational decisions to an AI system while providing high-level strategic guidance at periodic checkpoints.
Understanding which methods work best, and why, requires systematic experimentation across a range of problem environments.
We organize our investigation into two parts.

\textbf{Part 1: OR $\times$ LLM interactions.}
We begin by understanding the interaction between OR and LLM without a human in the loop.
We consider four methods of combining these tools:
\begin{itemize}[nosep]
    \item \textbf{OR} alone: a simple OR pipeline that does data-driven demand estimation followed by a well-established inventory heuristic;
    \item \textbf{LLM} alone: the LLM directly makes all inventory decisions;
    \item \textbf{OR$\to$LLM}: the OR pipeline above is described to an LLM, who treats its output as a recommendation but can override it in the final decision;
    \item \textbf{LLM$\to$OR}: an LLM estimates all uncertain parameters such as demand, and these are fed into the well-established inventory heuristic to make the final decision.
\end{itemize}
To understand the strengths and weaknesses of each method, we construct \textsc{InventoryBench}, a benchmark of 1{,}320 instances (720~based on synthetic data, 600~based on real data) spanning non-stationary demand patterns (changepoints, trends, seasonality), varying cost structures (leading to different under- vs.\ over- stocking tradeoffs), and three lead-time regimes (zero, fixed, and stochastic with lost orders).
We evaluate all four methods using three frontier LLMs (Gemini~3~Flash, Grok~4.1~Fast, GPT-5~Mini). 
Table~\ref{tab:intro_1a} summarizes the performance for Gemini~3~Flash.

\begin{table}
\caption{Summary of normalized reward (mean $\pm$ 95\% CI). Higher is better.}
\label{tab:intro_summary}
\centering
\small
\begin{minipage}[t]{0.48\textwidth}
\subcaption{OR $\times$ LLM: 1{,}320 \textsc{InventoryBench} instances (Gemini 3 Flash).}\label{tab:intro_1a}
\centering
\begin{tabular}{lcc}
\toprule
Method & Mean & 95\% CI \\
\midrule
OR & 0.445 & $\pm$0.018 \\
LLM & 0.494 & $\pm$0.018 \\
OR $\to$ LLM & \textbf{0.538} & $\pm$0.016 \\
LLM $\to$ OR & 0.501 & $\pm$0.018 \\
\bottomrule
\end{tabular}
\end{minipage}%
\hfill
\begin{minipage}[t]{0.48\textwidth}
\subcaption{Human-in-the-loop: classroom experiment (69 participants, 3 instances).}\label{tab:intro_1b}
\centering
\begin{tabular}{lcc}
\toprule
Setting & Mean & 95\% CI \\
\midrule
Mode A (OR$\to$Human) & 0.466 & $\pm$0.025 \\
Mode B (OR$\to$LLM$\to$Human) & \textbf{0.534} & $\pm$0.020 \\
Mode C (OR$\to$LLM + Guidance) & 0.464 & $\pm$0.018 \\
\midrule
OR (no human) & 0.376 & --- \\
OR$\to$LLM (no human) & 0.482 & $\pm$0.007 \\
\bottomrule
\end{tabular}
\end{minipage}
\vspace{-1.6em}
\end{table}

We find that combining OR and LLM is broadly beneficial: both combination methods (OR$\to$LLM and LLM$\to$OR) outperform either method in isolation.
The OR$\to$LLM pipeline achieves the best overall performance (0.538), a 21\% improvement over OR alone.
Drilling into the results, the two components have complementary strengths.
LLMs excel at detecting demand shifts, incorporating product-specific world knowledge (e.g., seasonality of swimwear), and identifying supply disruptions such as lost orders---capabilities absent from the OR heuristic.\footnote{One could, in principle, augment the OR method's data-driven estimation with statistical changepoint detection, trend fitting, or other extensions, but it is difficult to capture all possibilities in one general methodology.  Our goal here is to let LLMs represent that "one general methodology", and contrast it with the precision of the simple data-driven estimation.}
OR, in turn, provides the mathematical precision needed for base-stock calculations under stable conditions and long lead times, whereas LLMs can struggle with inventory pipeline tracking and sometimes falsely identify demand shifts when it is pure noise.
Our results also suggest that LLMs alone are less "calibrated" to the under- vs.\ over- stocking tradeoffs implied by different cost structures, echoing the message of \citet{liu2025large}.

\textbf{Part 2: Human-in-the-loop experiment.}
Next, we study how humans should interact with these tools.
We build a web-based inventory game and conduct a controlled classroom experiment with 69 participants.
Each participant plays three real-data instances under three collaboration modes with varying decision authority:
\begin{itemize}[nosep]
\item \textbf{Mode~A} (OR$\to$Human): the human treats the output of the OR pipeline as a recommendation, and makes the final decision;
\item \textbf{Mode~B} (OR$\to$LLM$\to$Human): the human treats the output of the OR$\to$LLM method as a recommendation, reads the LLM's reasoning, and then makes the final decision;
\item \textbf{Mode~C} (OR$\to$LLM + Human Guidance): the LLM decides as in OR$\to$LLM, but the human can provide strategic guidance every few periods.
\end{itemize}
Figure~\ref{fig:modes} shows the decision panels for each mode.
Mode--instance assignments are randomized so that each participant experiences each mode once.

\begin{figure}
    \centering
    \begin{subfigure}[b]{0.32\textwidth}
        \includegraphics[width=\textwidth]{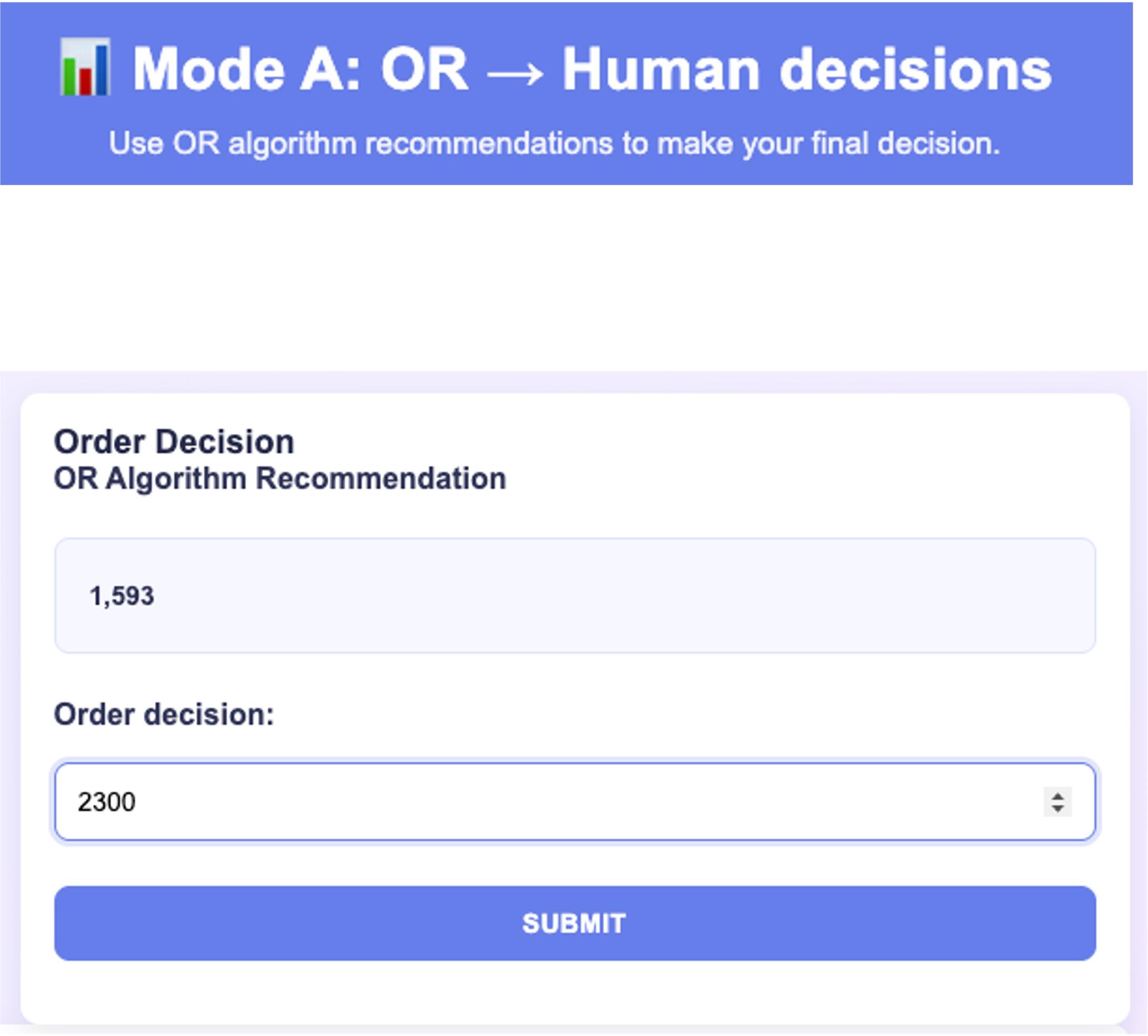}
        \caption{Mode~A}
        \label{fig:modeA}
    \end{subfigure}
    \hfill
    \begin{subfigure}[b]{0.32\textwidth}
        \includegraphics[width=\textwidth]{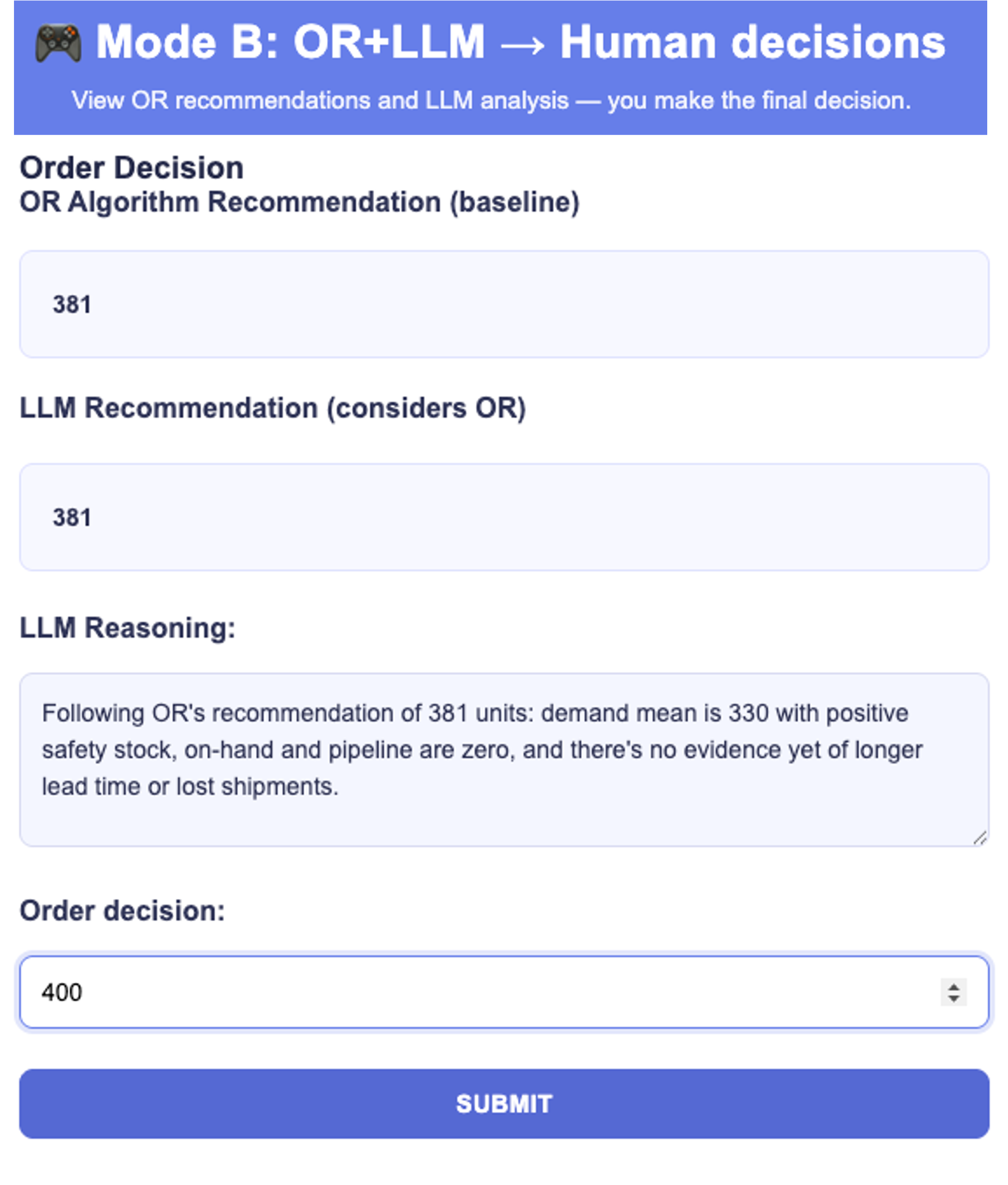}
        \caption{Mode~B}
        \label{fig:modeB}
    \end{subfigure}
    \hfill
    \begin{subfigure}[b]{0.32\textwidth}
        \includegraphics[width=\textwidth]{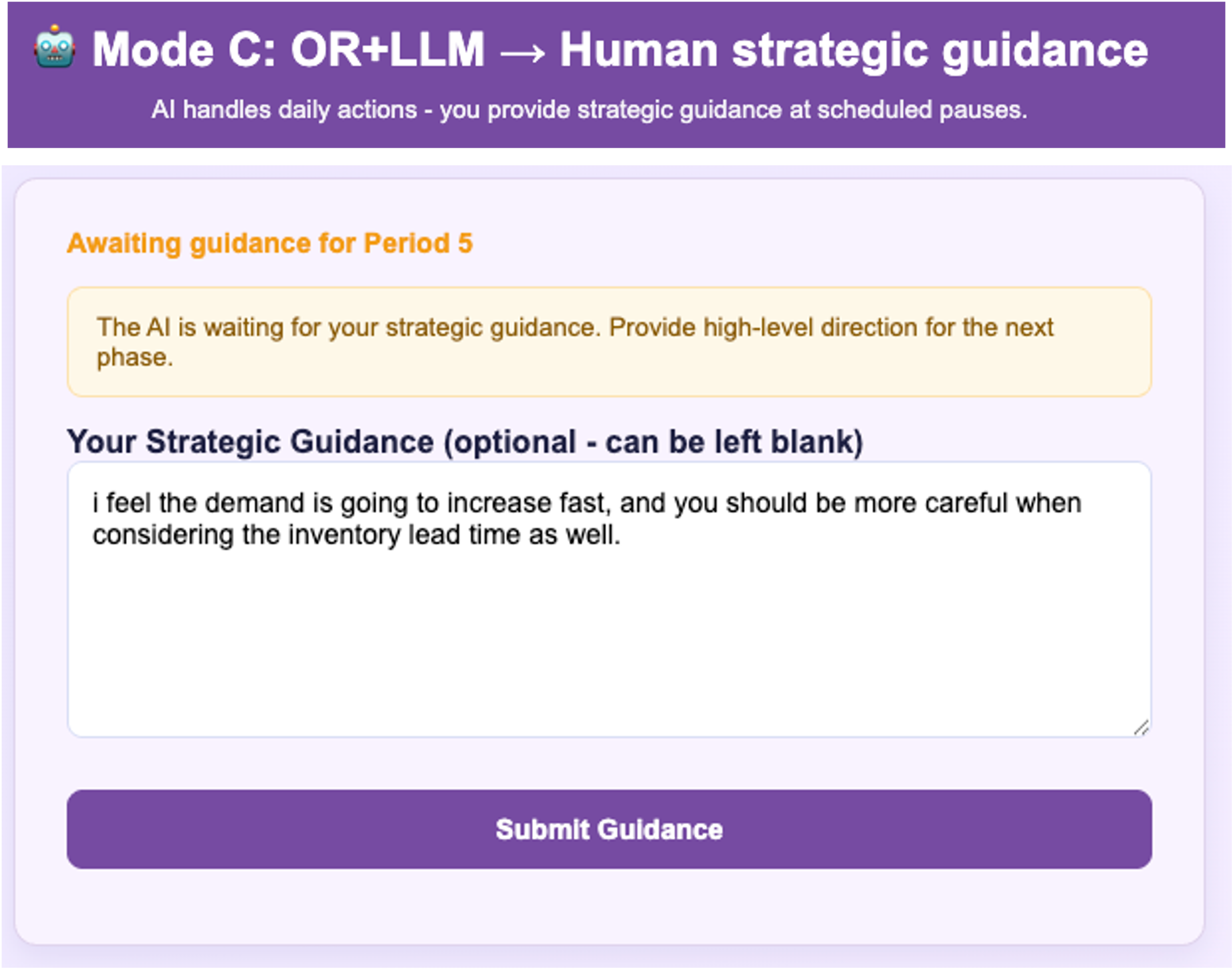}
        \caption{Mode~C}
        \label{fig:modeC}
    \end{subfigure}
    \caption{Decision panels for the three collaboration modes.
    \textbf{Mode~A}: the participant sees the OR recommendation and enters a final order.
    \textbf{Mode~B}: the participant sees the OR-augmented LLM recommendation and reasoning before entering a final order.
    \textbf{Mode~C}: the AI makes ordering decisions autonomously; the participant provides optional strategic guidance at scheduled pauses (every 4 periods).}
    \label{fig:modes}
\end{figure}

Table~\ref{tab:intro_1b} summarizes the results.
Mode~B achieves the best overall performance, significantly outperforming both Mode~A and Mode~C.
Crucially, human collaboration modes also outperform their automated counterparts: Mode~A outperforms OR alone and Mode~B outperforms OR$\to$LLM, demonstrating that human judgment adds value beyond what automated methods alone can achieve.
(These differences are all statistically significant according to a pre-registered experiment, as we describe in \Cref{sec:game_results}.)

Beyond this population-level finding, we ask whether \emph{individual} participants genuinely benefit from AI collaboration, or whether the aggregate gains simply reflect a selection effect where weaker participants follow the AI and stronger ones ignore it, without anyone actually exceeding what they or the AI could achieve alone.

We formalize a notion of \emph{individual-level complementarity}: whether a person's collaborative performance exceeds the better of their solo performance and the AI's solo performance. 
Because each participant is observed in only one condition (with or without AI), we cannot directly measure this for any specific individual. 
However, we prove a theorem (\cref{thm:ce_lower_bound}) that provides a distribution-free lower bound on the fraction of individuals who experience positive complementarity. 
The key insight is that this bound can be estimated by comparing the distribution of collaborative performance to the distribution of solo performance across different people, without needing to observe the same person under both conditions.
Applying this bound to our data separately for each instance, we estimate that at least 30\% to 60\% of individuals experience strictly positive complementarity, depending on the instance.

\textbf{Overall.}
Taken together, our results provide a systematic study of the emerging question of how OR algorithms, LLMs, and humans should interact for operational decision-making (Figure~\ref{fig:complementarity}).
On one hand, LLMs add substantial value to inventory control: they detect demand regime changes, incorporate world knowledge, and identify supply disruptions---capabilities that are generally difficult for traditional OR algorithms.
On the other hand, the traditional players remain essential: OR algorithms provide the mathematical precision that LLMs lack for base-stock calculations, and humans provide situational judgment and serve as a safeguard---our analysis shows that humans are able to step in precisely when the LLM fails to reason about the situation correctly, e.g.\ when they fail to identify lost orders.
The finding that humans add value is particularly noteworthy: achieving genuine human–AI complementarity is widely recognized as difficult, with systematic evidence showing that human–AI teams frequently fail to outperform the better of human and AI acting independently \citep{bansal2021does,vaccaro2024combinations,hemmer2025complementarity}.
Our results suggest that inventory control---with its sequential decisions, delayed feedback, and rich contextual signals---is a setting where human judgment genuinely complements AI.

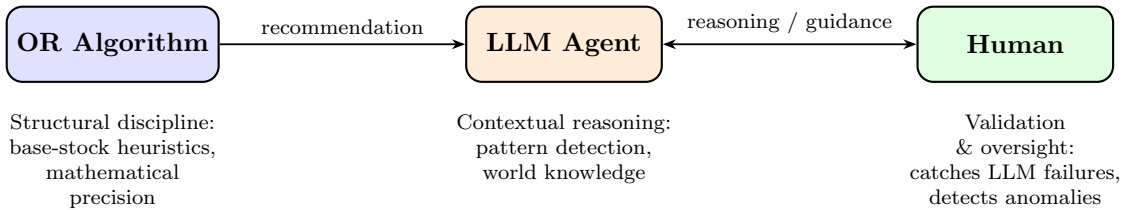
\begin{figure}
\centering
\begin{tikzpicture}[
    agent/.style={draw, rounded corners=6pt, minimum width=2.6cm, minimum height=1.0cm, align=center, font=\small\bfseries, line width=0.8pt},
    capability/.style={font=\scriptsize, align=center, text width=3cm},
    arrow/.style={-{Stealth[length=5pt]}, line width=0.7pt},
    doublearrow/.style={{Stealth[length=5pt]}-{Stealth[length=5pt]}, line width=0.7pt},
]

\node[agent, fill=blue!12] (OR) at (0, 0) {OR Algorithm};
\node[agent, fill=orange!15] (LLM) at (6, 0) {LLM Agent};
\node[agent, fill=green!12] (Human) at (12, 0) {Human};

\node[capability, below=8pt of OR] {Structural discipline:\\base-stock heuristics,\\mathematical precision};
\node[capability, below=8pt of LLM] {Contextual reasoning:\\pattern detection,\\world knowledge};
\node[capability, below=8pt of Human] {Validation \& oversight:\\catches LLM failures,\\detects anomalies};

\draw[arrow] (OR) -- node[above, font=\scriptsize] {recommendation} (LLM);
\draw[doublearrow] (LLM) -- node[above, font=\scriptsize] {reasoning / guidance} (Human);

\end{tikzpicture}
\caption{The complementary interaction pattern of OR algorithms, LLM agents, and humans. %
}
\label{fig:complementarity}
\end{figure}
\textbf{Open-source benchmark and game.}
As a by-product, we release two resources.
First, \textsc{InventoryBench}\footnote{\textsc{InventoryBench} is available at \url{https://tianyipeng.github.io/InventoryBench/}.}---our benchmark of 1{,}320 inventory instances, together with a public leaderboard to track progress---is available for the community to test and compare frontier LLMs, fine-tuning approaches, advanced OR heuristics, and other machine learning methods---we welcome researchers to try and share their approaches.
Second, we open-source the web-based \textsc{AI-Human Inventory Game}\footnote{\textsc{AI-Human Inventory Game} is available at \url{https://github.com/TianyiPeng/AI-human-inventory-game.git}.} used in our experiment (Figure~\ref{fig:game_interface}), which can serve as a basis to be further developed for teaching and research in supply chain management.

\paragraph{Roadmap and summary of contributions.}
We design LLM agents for inventory control (\Cref{sec:algorithms}), construct a benchmark of 1{,}320 instances to evaluate them (\Cref{sec:instances}), and build a web-based game for human--AI interaction (\Cref{sec:classroom_setup}).
\emph{Findings without humans.}
The OR$\to$LLM pipeline achieves the best overall performance (\Cref{sec:metric_overall_result}), but disaggregating by lead time (\Cref{sec:disagg_lead_time}), instance family (\Cref{sec:disagg_instance_family}), and critical fractile (\Cref{sec:calibration_cr}) reveals the complementary strengths of each component: OR excels at handling long deterministic lead times and avoids overfitting to noise, while LLMs excel under stochastic lead times and at detecting sudden demand shifts and leveraging world knowledge.
\emph{Findings with humans.}
Pre-registered classroom experiments yield clean, statistically significant separations between collaboration modes (\Cref{sec:game_results}), demonstrating that both the LLM and the human add value to the decision pipeline.
We develop a theoretical framework for measuring human--AI complementarity (\Cref{sec:complementarity_theory}) and apply it to establish individual-level complementarity in our setting (\Cref{sec:complementarity}).
Finally, we provide illustrative anecdotes of LLM reasoning (\Cref{sec:llm_reasoning_anecdotes}) and analyze the mechanisms through which humans improve upon LLM recommendations (\Cref{sec:mechanisms}).

\section{Related Work} \label{sec:relatedWork}

\subsection{Supply Chain and Inventory Management} \label{sec:inv_mgmt}

Inventory can be the first topic mentioned in operations and supply chain management textbooks \citep[e.g.]{SimchiLevi2008Designing,snyder2019fundamentals}, and is also a problem in which the uncertainty in supply and demand can be used to test the judgment and contextual knowledge of humans or LLMs.  We consider a canonical inventory model in which there are lead times and lost sales (i.e., unmet demands cannot be recouped), both the norm in practice, and this exact model is widely deployed at modern enterprises to recommend inventory decisions \citep{qi2023practical,madeka2022deep,liu2023ai,xie2025deepstock}, noting that it is generally unnecessary to optimize different items' inventories jointly.
Academically, even optimizing for a single item is notoriously difficult \citep{Zipkin2008OldNew,Zipkin2008Structure}, which is why there is a large OR literature on heuristics for this problem \citep{BijvankVis2011Review,HuhJanakiraman2009OrderUpTo,GoldbergEtAl2016Asymptotic}.
We make our heuristic of choice the capped base-stock policy \citep{Xin2021CappedBaseStock}, a recent development that is simple and powerful, and considered the state of the art in several papers since \citep{XinGoldberg2022Approx,LyuZhangXin2024UCB,alvo2023neural}.

We note that some of the aforementioned papers use Reinforcement Learning to solve this inventory model, often with high-dimensional covariates.  We are essentially using the LLM's contextual knowledge in place of these high-dimensional covariates, leaving the integration of LLMs with Reinforcement learning to future work.

\subsection{LLMs for Operational Decisions}

There is a growing literature on using LLMs to
formulate optimization problems \citep{ahmaditeshnizi2024optimus,zhou2025auto,huang2025orlm} 
or to directly
make operational decisions \citep{LongSimchiLeviCalmonCalmon2025Autonomous,backlund2025vendingbench,kumar2025performance,fish2025econevals}.
Our paper lies somewhere in-between, where we are relying on the structure and calculation of simple OR heuristics, and testing the ability of LLMs to integrate contextual world knowledge into them.  More precisely, our paper differs from the literature by: (i) considering OR heuristics instead of exact optimization solvers; (ii) exploiting the world knowledge and generality of LLMs to adapt to changing or anomalous conditions, instead of using them to do the formulation of the optimization problem.
Our punchline that OR+LLM is best echoes what is found in \citet{duan2025ask,baek2026evaluatingllmpersonagenerateddistributions}, but in very different contexts---\citet{duan2025ask} also study inventory but use the LLM to extract uncertain parameters (e.g., holding cost) instead of make daily decisions, while \citet{baek2026evaluatingllmpersonagenerateddistributions} focus on single-period problems and use the LLM only to generate data for a standard downstream optimization.
More broadly, the power of combining OR with LLMs has been discussed in \citet{dai2025assured,cohen2025supply,hu2025ai}.

\subsection{Human-AI Decision-making for Operations}

Importantly, our paper tests how OR+LLM integrates with humans, and to the best of our knowledge, breaks new ground by testing this in a controlled lab experiment.  In particular, we compare Mode A (OR$\to$Human) to Mode B (OR$\to$LLM$\to$Human), which empirically measures the value add of the LLM when a human makes the final decision, even though this idea has been seen in enterprise settings long before \citep{li2023large}.
The indispensability of humans in operational pipelines is discussed in \citet{boute2021digital}, and they distinguish between human control vs.\ oversight, which is exactly what we test in Modes B vs.\ C, in an inventory management problem.

A broader literature shows that achieving genuine human–AI complementarity is difficult. Surveys of empirical studies find mixed results \citep{lai2023towards}, a meta-analysis finds that human–AI teams frequently fail to outperform the better of the two components operating independently \citep{vaccaro2024combinations}.
Recent works have formalized theoretical models of complementarity to design better systems or to provide fundamental limits of human-AI performance 
\citep{wilder2020learning,donahue2022human,peng2025no,guo2024decision,guo2025value,hemmer2025complementarity}.
We add to this literature by formalizing notions of population and individual-level complementarity effects.  

Within operations, a line of work studies how humans interact with algorithms in practice.
\citet{fildes2009effective} analyze demand forecasts from four supply-chain companies and find that small judgmental adjustments to algorithmic forecasts tend to decrease accuracy, while larger adjustments tend to improve it.
\citet{sun2022predicting} study packing decisions at a warehouse and demonstrate the value of an algorithm designed to anticipate and incorporate human deviations.
A complementary stream studies how to improve better human--AI systems through modifying how human decisions are elicited \citep{ibrahim2021eliciting}, providing algorithm transparency \citep{balakrishnan2026human}, modifying systems loads \citep{snyder2026algorithm} or modifying the algorithms \citep{mclaughlin2024designing,bastani2026improving,grand2026best}.

\section{Problem Formulation, Algorithms, and Instances} \label{sec:game_description}

We study a standard (see \Cref{sec:inv_mgmt}) multi-period inventory control problem in which a decision maker manages a single product over a finite horizon of $T$ periods.

\paragraph{State and dynamics.}
In each period $t = 1, 2, \dots, T$, events occur in the following sequence:
\begin{enumerate}
    \item \textbf{Observe state and context.} The decision maker knows the on-hand inventory $I_t$, the history of past demands, the history of past orders and the arrival times of the orders that have arrived, and contextual information $x_t$ (a free-form text string that may include static product descriptions and calendar dates, which give information about upcoming demand).
    \item \textbf{Place an order.} The decision maker chooses an order quantity $q_t \ge 0$.
    \item \textbf{Receive arrivals.} Shipments from earlier orders arrive. Each order placed at time $\tau \le t$ has a lead time $\ell_\tau \in \{0,1,2,\ldots\} \cup \{\infty\}$, which is deterministic but unknown to the decision maker at the time the order is placed. If $\tau + \ell_\tau = t$, then order $q_\tau$ arrives and is added to on-hand inventory; if $\ell_\tau = \infty$, the order never arrives (is lost).
    Let $A_t = \sum_{\tau \le t : \tau + \ell_\tau = t} q_\tau$ denote total arrivals in period $t$.
    \item \textbf{Realize demand.} Demand $d_t$ is realized and observed (even under a stockout, i.e., we assume \emph{uncensored} demands). Sales equal $s_t = \min\{d_t,\, I_t + A_t\}$, and unsatisfied demand is lost.
    \item \textbf{Update inventory.} Leftover inventory carries over to the next period:
    \[
    I_{t+1} \;=\; I_t + A_t - s_t \;=\; \max\{0,\, I_t + A_t - d_t\}.
    \]
\end{enumerate}

\paragraph{Objective.} The per-period profit equals sales revenue $p \cdot s_t$ minus holding cost $h \cdot I_{t+1}$, where $p > 0$ is the fixed per-unit profit margin and $h > 0$ is the fixed per-unit per-period holding cost.
The decision maker's objective is to maximize total profit over the horizon,
\[
\sum_{t=1}^T \left( p \cdot s_t \;-\; h \cdot I_{t+1} \right).
\]
An important parameter to highlight is the \emph{critical fractile} $\rho:=p/(p+h)$, which is a ratio in $[0,1]$ indicating how desirable it is to overstock (instead of understock).

\paragraph{Initialization.}
Each instance begins with five realized demand values $\{d_{-4}, d_{-3}, d_{-2}, d_{-1}, d_0\}$ provided as historical data.
The initial state is $I_1 = 0$ (no on-hand inventory) with no in-transit orders.
The decision maker is informed of the profit margin $p$, the holding cost $h$, and an \textit{anticipated} lead time $L$.
However, the actual lead times $\{\ell_t\}_{t=1}^T$ may differ from $L$ and are revealed only as orders arrive (or fail to arrive).

\subsection{LLM Agents} \label{sec:algorithms}

\paragraph{OR heuristic.} We first describe an OR-only baseline, which will be used by some of the LLM agents.  This baseline takes all historical data to build a per-period demand distribution (mean \& standard deviation).  It then determines a \textit{base-stock} level (i.e., inventory "target") based on this per-period distribution, the anticipated lead time $L$, and the critical fractile $\rho$ (which determines the tradeoff between under- vs.\ over- stocking).  Finally, the order quantity is decided by subtracting the total in-transit inventory from this base-stock target, and the order quantity is also \textit{capped} from above to avoid unstable inventory flow.  All in all, we end up with a data-driven \textit{capped base-stock policy} \citep{Xin2021CappedBaseStock}, which is considered a simple and powerful heuristic (see \Cref{sec:inv_mgmt}), that is fully explained in \Cref{sec:baseline}.

\paragraph{LLM agent setup.}
Next, we design LLM agents that interact with the inventory environment as sequential decision-makers.
Figure~\ref{fig:agent_architecture} illustrates the OR$\to$LLM agent architecture, which has three components: a structured system prompt, a stateless per-period decision loop, and a carry-over insight mechanism for cross-period memory.

\begin{figure}
\centering
\begin{tikzpicture}[
    >=Stealth,
    node distance=0.6cm and 0.8cm,
    box/.style={draw, rounded corners=3pt, minimum height=0.7cm, align=center, font=\small},
    sysbox/.style={box, fill=gray!12, minimum width=7.5cm, text width=7.3cm},
    inputbox/.style={box, fill=blue!8, minimum width=6.5cm},
    outputbox/.style={box, fill=green!8, minimum width=5.2cm},
    llmbox/.style={box, fill=orange!12, minimum width=14cm, minimum height=1.0cm, font=\normalsize\bfseries},
    insightbox/.style={box, fill=yellow!15, minimum width=6.5cm},
    lbl/.style={font=\scriptsize\itshape, text=gray!70},
]

\node[sysbox] (role) {Role \& objective: maximize $\sum_t (p \cdot s_t - h \cdot I_{t+1})$};
\node[sysbox, below=0.25cm of role] (mechanics) {Game mechanics: period sequence, observation delay, lead-time definition};
\node[sysbox, below=0.25cm of mechanics] (ormath) {OR baseline math \& limitations (stationary demand, promised lead time, no lost-order detection)};
\node[sysbox, below=0.25cm of ormath] (checklist) {Decision checklist \& output format};

\node[draw=gray!60, dashed, rounded corners=5pt, fit=(role)(checklist), inner sep=4pt, label={[lbl, anchor=south]above:System prompt (fixed at initialization)}] (sysprompt) {};

\node[insightbox, right=1.0cm of role] (insights) {Carry-over insights from prior periods};
\node[inputbox, below=0.25cm of insights] (obs) {Observation: $I_t$, in-transit inventory, \\ demand history, conclude msg};
\node[inputbox, below=0.25cm of obs] (orrec) {OR recommendation: $B_t$, $\mathrm{IP}_t$, $\bar{d}$, $s_d$, cap};

\node[draw=gray!60, dashed, rounded corners=5pt, fit=(insights)(orrec), inner sep=4pt, label={[lbl, anchor=south]above:User message (changes each period $t$)}] (usermsg) {};

\node[llmbox, below=1.6cm of $(checklist.south)!0.5!(orrec.south)$] (llm) {LLM (single call per period)};

\node[outputbox, below=0.6cm of llm.south west, anchor=north west, xshift=0.3cm] (action) {Order quantity $q_t$ + rationale};
\node[insightbox, below=0.6cm of llm.south east, anchor=north east, xshift=-0.3cm] (newinsight) {Updated carry-over insights};

\draw[->, thick, gray!70] (sysprompt.south) -- (sysprompt.south |- llm.north);
\draw[->, thick, gray!70] (usermsg.south) -- (usermsg.south |- llm.north);
\draw[->, thick] (llm.south -| action.north) -- (action.north);
\draw[->, thick] (llm.south -| newinsight.north) -- (newinsight.north);

\draw[->, thick, dashed, blue!50] (newinsight.east) -- ++(1.1,0) |- ([yshift=0.15cm]insights.east);
\node[lbl, blue!60] at ([xshift=1.0cm, yshift=-0.3cm]newinsight.east) {next period};

\end{tikzpicture}
\caption{Architecture of the OR$\to$LLM agent. Each period involves a single LLM call with a fixed system prompt and a period-specific user message, where the "user" here need not involve a human. The agent outputs an order quantity with rationale and optionally updates carry-over insights, which persist to the next period's input.}
\label{fig:agent_architecture}
\end{figure}
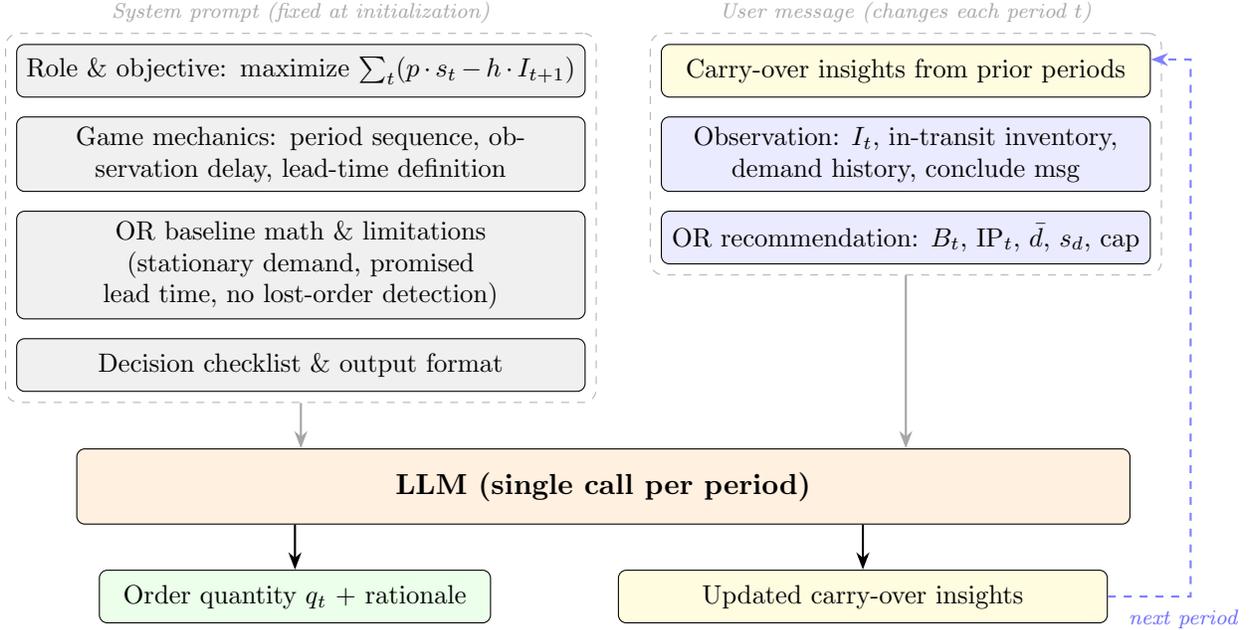

\paragraph{System prompt.}
Each agent receives a fixed system prompt describing the game mechanics (period execution sequence, observation delays, lead-time inference rules), the OR baseline's mathematical formulation, a four-step decision checklist, and the required JSON output format.
In the OR$\to$LLM mode, the prompt also explains the OR algorithm's assumptions---stationary demand, promised lead time, no lost-order detection---so the LLM knows when to override.
The complete system prompts are provided in \Cref{sec:llm_prompts}.

\paragraph{Per-period decision loop.}
Each period $t$ involves a single, stateless LLM call.
The user message contains: (1)~carry-over insights from prior periods, (2)~the current observation ($I_t$, in-transit inventory, demand history, period-conclusion messages), and (3)~the OR recommendation with statistics.
The agent returns a JSON response with the order quantity, a step-by-step rationale, a short human-readable summary, and an optional carry-over insight update.

\paragraph{Carry-over insights.}
To provide cross-period memory without maintaining a full conversation history over the $T$-period horizon, the agent can record concise memos about sustained discoveries (e.g., ``demand shifted from ${\sim}100$ to ${\sim}200$'' or ``lead time is 3 periods, not the promised 1'').
These are prepended to the next period's observation and can be updated or removed as conditions change.

\paragraph{Approaches.}
In Part~1 (Section~\ref{sec:alg_experiment}), we evaluate four methods: \textbf{OR} (data-driven capped base-stock policy), \textbf{LLM} (agent alone, no OR input), \textbf{OR$\to$LLM} (agent receives and may override OR recommendation), and \textbf{LLM$\to$OR} (inputs to capped base-stock policy are determined by LLM instead of directly by data; see \Cref{sec:llm_to_or_interface}).
All LLM-based methods share the same prompt structure shown in Figure~\ref{fig:agent_architecture}, adapted to their role: the LLM-only variant omits the OR baseline sections, while LLM$\to$OR modifies the output format to return parameter estimates instead of order quantities.
The human collaboration modes in Part~2 (Section~\ref{sec:classroom_setup}) build on the same OR$\to$LLM pipeline, adding a human who either makes the final decision (Mode~B) or provides strategic guidance (Mode~C).

\section{Part 1: Algorithmic Experiment} \label{sec:alg_experiment}

\subsection{\textsc{InventoryBench} Instances} \label{sec:instances}

An \emph{instance} provides one complete specification of the inventory control problem, defining: a finite horizon $T$; fixed realizations of all demands $\{d_t\}_{t=1}^T$ and lead times $\{\ell_t\}_{t=1}^T$; natural-language contextual information $\{x_t\}_{t=1}^T$; parameters $(p, h)$; and the initial data $\{d_{-4},\ldots,d_0\}$.
All randomness is resolved in advance to ensure full reproducibility and enable controlled comparisons across decision-making methods.

We construct two families of instances:
\begin{itemize}
\item \textbf{Synthetic instances} (720 total): designed from 10 parametric demand pattern families including stationary processes, abrupt mean shifts, gradual trends, seasonal patterns, etc.  Tests a method's ability to react to demand shifts and different patterns, which may be ad hoc. See Appendix~\ref{sec:synthetic_spec} for details.
\item \textbf{Real instances} (600 total): derived from historical sales data in the H\&M Personalized Fashion Recommendations dataset \citep{Kaggle_HM_Recommendations_2022}, using 200 distinct products across diverse categories (swimwear, knitwear, tailored clothing, basics, accessories).  Tests a method's ability to react to demand shifts and patterns that may be explainable by world knowledge, e.g.\ holidays and seasons.  See Appendix~\ref{sec:real_inst_spec} for details.
\end{itemize}

Each instance family is crossed with three lead-time configurations:
\begin{itemize}
\item $L = 0$: immediate delivery (orders arrive in the same period they are placed)
\item $L = 4$: fixed 4-period delay
\item Stochastic lead times: each order independently has realized lead time $\ell_t \in \{1, 2, 3, \infty\}$ with equal probability, where $\ell_t = \infty$ indicates a lost order
\end{itemize}

Orthogonally, each instance family is crossed with three possible $(p, h)$ values defining different critical fractiles: $\rho \in \{0.50, 0.80, 0.95\}$, representing different degrees of preference for overstocking.

\subsection{Overall results} \label{sec:metric_overall_result}

All results in the main body are reported for Gemini 3 Flash; analogous figures and tables for Grok 4.1 Fast and GPT-5 Mini appear in Appendix~\ref{sec:additional_model_results}, and detailed performance breakdowns for all three models appear in Appendix~\ref{sec:detailed_tables}.

\paragraph{Performance metric.} We refer to the total profit $\sum_{t=1}^T (p \cdot s_t - h \cdot I_{t+1})$ of a method on an instance as its \emph{reward}. To enable comparison across instances with different demand scales and parameters, we define
\[
\text{normalized reward} \;:=\; \max\!\left\{\; \frac{\text{reward}}{p \cdot \sum_{t=1}^T d_t},\; 0 \;\right\}.
\]
The denominator $p \cdot \sum_{t=1}^T d_t$ is the revenue obtained by capturing all demand while never holding any leftover inventory---a very optimistic upper bound on optimal profit. Clipping at zero prevents instances with large losses from producing extreme negative ratios (e.g., $-100$) that would dominate the average. For any set of instances, we report the mean normalized reward together with a 95\% confidence interval for the mean.

Figure~\ref{fig:gemini_overall} summarizes performance across all 1{,}320 instances, pooling real and synthetic instances.

\begin{figure}
\centering
\includegraphics[width=\textwidth]{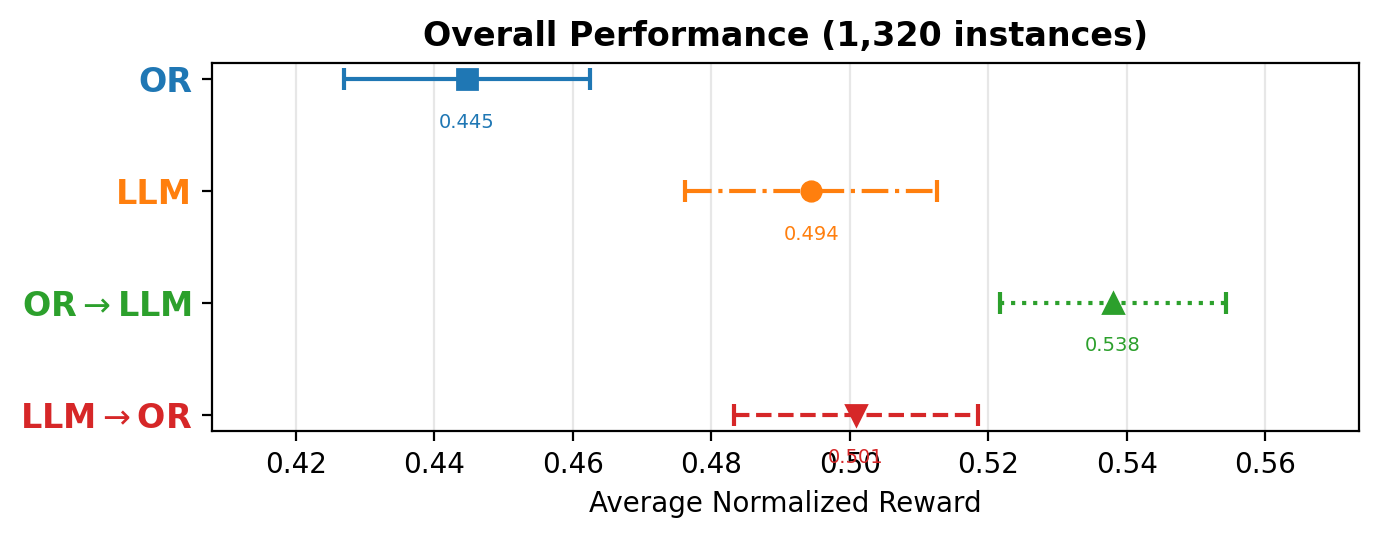}
\caption{Gemini 3 Flash: overall normalized reward (mean $\pm$ 95\% CI) across all 1{,}320 instances, by method. OR$\to$LLM achieves the highest mean (0.538), with the other hybrid method (LLM$\to$OR) coming second.}
\label{fig:gemini_overall}
\end{figure}

\subsection{Disaggregation by Lead Time} \label{sec:disagg_lead_time}

Figure~\ref{fig:gemini_by_leadtime} disaggregates performance by lead time, unveiling when different methods shine.

\begin{figure}
\centering
\includegraphics[width=\textwidth]{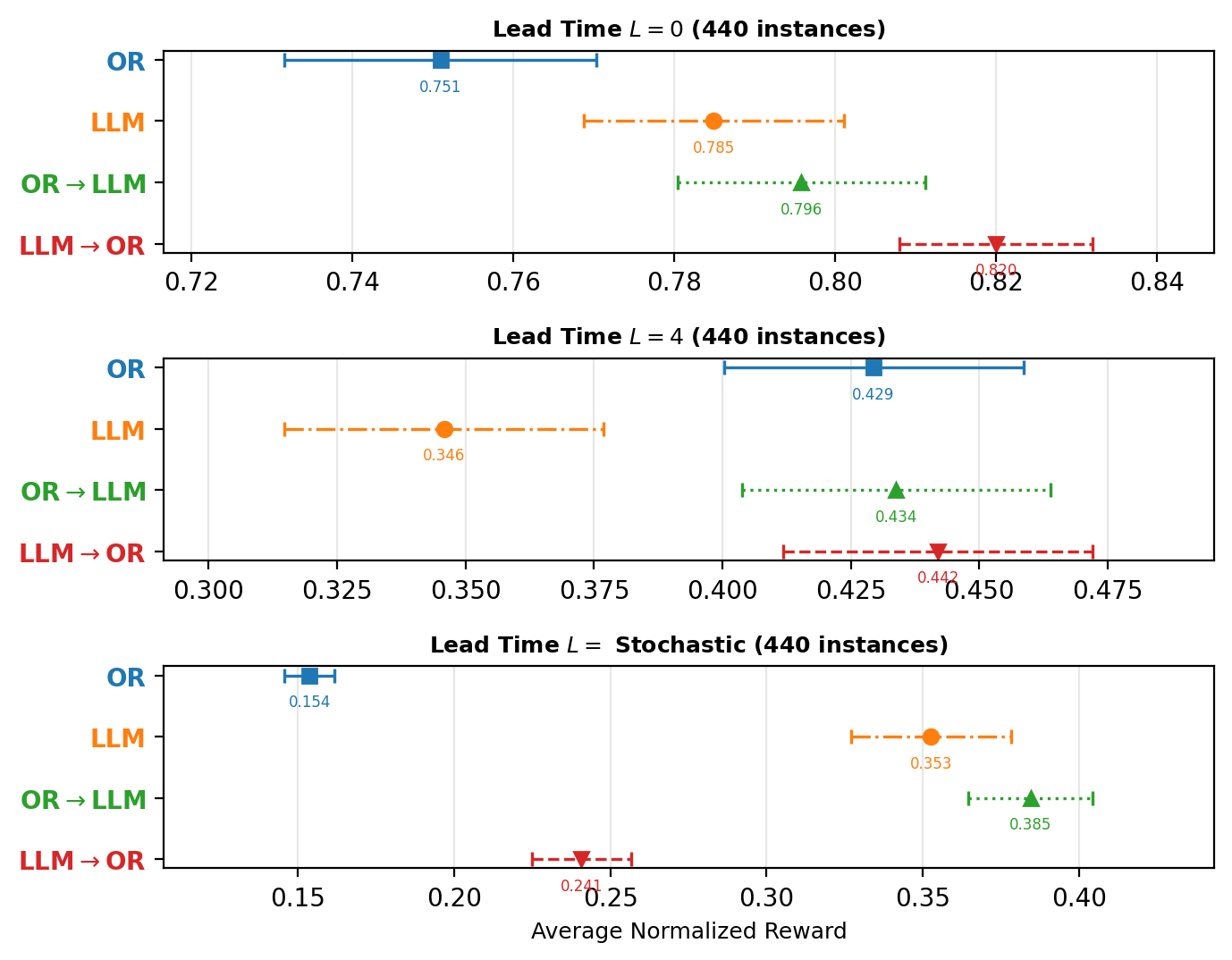}
\caption{Gemini 3 Flash: normalized reward by lead time setting (440 instances per setting).}
\label{fig:gemini_by_leadtime}
\end{figure}

Under deterministic lead times ($L = 0$ and $L = 4$), the primary value of the LLM lies in predicting demand shifts---using common-sense reasoning to detect ad hoc patterns in synthetic instances and world knowledge (e.g., seasonality) in real instances---while the final ordering decision is best left to the OR heuristic, which can translate these forecasts into precise capped base-stock calculations. This is why LLM$\to$OR is the strongest method under both settings. It really dominates when $L=0$, under which the OR heuristic is essentially perfect given a perfect prediction, so it makes sense that the LLM focuses fully on prediction, as in the LLM$\to$OR method.  $L=4$ is similar, but highlights more the downside of not having the structured calculations of the capped base-stock OR heuristic, where the LLM alone really lags behind.

The script flips under stochastic lead times. The capped base-stock heuristic is not designed for this out-of-distribution setting: it assumes a known, fixed lead time and has no mechanism to detect or adapt to lost orders. Methods in which OR computes the final order suffer accordingly (OR: 0.154, LLM$\to$OR: 0.218). By contrast, methods in which the LLM makes the final ordering decision prove far more robust (LLM: 0.353, OR$\to$LLM: 0.385), because general-purpose LLMs can reason about unexpected events---such as recognizing that an order has been lost---and adjust their behavior accordingly (see Example~\ref{ex:llm_lost_orders} for a reasoning anecdote).

\subsection{Disaggregation by Instance Family} \label{sec:disagg_instance_family}

Table~\ref{tab:gemini_by_pattern} disaggregates performance by synthetic demand pattern, restricting to deterministic lead times ($L \in \{0, 4\}$) so that differences reflect demand-forecasting ability rather than the lead time effects analyzed in \Cref{sec:disagg_lead_time}. LLM$\to$OR achieves the highest overall synthetic reward (.709) and the highest real-instance reward (.537), consistent with the aggregate findings above.

\definecolor{cBest}{RGB}{180,235,180}
\definecolor{cWorst}{RGB}{255,180,180}
\begin{table}
\centering
\caption{Gemini 3 Flash: normalized reward by synthetic demand pattern (See \Cref{sec:synthetic_spec} for definitions), each corresponding to 48 instances (8 instances for each of 3 critical fractiles and 2 lead times, with \textbf{stochastic lead time excluded}).  The \emph{All Synthetic} column averages over all 480 instances while the \emph{Real} column averages over 400 real instances (again excluding stochastic lead time). Cell shading per column: \colorbox{cBest}{green} = best, \colorbox{cWorst}{red} = worst. Best method per column in bold.}
\label{tab:gemini_by_pattern}
\resizebox{\textwidth}{!}{%
\begin{tabular}{@{}l cccccccccc |c |c@{}}
\toprule
& \rotatebox{70}{Stationary IID} & \rotatebox{70}{Mean $\uparrow$} & \rotatebox{70}{Mean $\downarrow$} & \rotatebox{70}{Trend $\uparrow$} & \rotatebox{70}{Trend $\downarrow$} & \rotatebox{70}{Variance Change} & \rotatebox{70}{Seasonal} & \rotatebox{70}{Multi Changepoint} & \rotatebox{70}{Spike/Dip} & \rotatebox{70}{Autocorr.} & \rotatebox{70}{\emph{All Synthetic}} & \rotatebox{70}{\emph{Real}} \\[-2pt]
& {\scriptsize(p01)} & {\scriptsize(p02)} & {\scriptsize(p03)} & {\scriptsize(p04)} & {\scriptsize(p05)} & {\scriptsize(p06)} & {\scriptsize(p07)} & {\scriptsize(p08)} & {\scriptsize(p09)} & {\scriptsize(p10)} & & \\
\midrule
OR & \cellcolor{cBest} \textbf{.802} & \cellcolor{cWorst!27!cBest} .774 & \cellcolor{cWorst} .557 & \cellcolor{cWorst} .626 & \cellcolor{cWorst} .521 & \cellcolor{cBest} \textbf{.706} & \cellcolor{cBest} \textbf{.676} & \cellcolor{cBest} \textbf{.700} & \cellcolor{cWorst!23!cBest} .651 & \cellcolor{cBest} \textbf{.756} & \cellcolor{cWorst!41!cBest} .677 & \cellcolor{cWorst} .486 \\
LLM & \cellcolor{cWorst} .653 & \cellcolor{cWorst} .703 & \cellcolor{cWorst!90!cBest} .567 & \cellcolor{cWorst!25!cBest} .749 & \cellcolor{cWorst!44!cBest} .597 & \cellcolor{cWorst} .583 & \cellcolor{cWorst} .585 & \cellcolor{cWorst} .628 & \cellcolor{cWorst} .595 & \cellcolor{cWorst} .637 & \cellcolor{cWorst} .630 & \cellcolor{cWorst!96!cBest} .488 \\
OR$\to$LLM & \cellcolor{cWorst!21!cBest} .771 & \cellcolor{cWorst!11!cBest} .789 & \cellcolor{cWorst!45!cBest} .611 & \cellcolor{cWorst!18!cBest} .761 & \cellcolor{cWorst!42!cBest} .600 & \cellcolor{cWorst!33!cBest} .666 & \cellcolor{cWorst!36!cBest} .644 & \cellcolor{cWorst!1!cBest} .699 & \cellcolor{cWorst!32!cBest} .644 & \cellcolor{cWorst!40!cBest} .708 & \cellcolor{cWorst!25!cBest} .689 & \cellcolor{cWorst!23!cBest} .525 \\
LLM$\to$OR & \cellcolor{cWorst!15!cBest} .780 & \cellcolor{cBest} \textbf{.800} & \cellcolor{cBest} \textbf{.656} & \cellcolor{cBest} \textbf{.791} & \cellcolor{cBest} \textbf{.657} & \cellcolor{cWorst!37!cBest} .660 & \cellcolor{cWorst!17!cBest} .661 & \cellcolor{cWorst!1!cBest} \textbf{.699} & \cellcolor{cBest} \textbf{.668} & \cellcolor{cWorst!32!cBest} .717 & \cellcolor{cBest} \textbf{.709} & \cellcolor{cBest} \textbf{.537} \\
\bottomrule
\end{tabular}%
}
\end{table}

However, OR alone is the best method for several synthetic families: Stationary IID (.802), where the OR heuristic's demand model is correctly specified, and the LLM may overfit to false patterns (see Example~\ref{ex:llm_overfits}). OR alone is also best for Variance Change (.706), Seasonal (.676), Multi Change-Point (.700), and Autocorrelated (.756), where these patterns are difficult for an LLM to anticipate.

The LLM-based methods excel precisely where changes are directional and describable: Mean Shift Up (.800 for LLM$\to$OR vs.\ .774 for OR), Mean Shift Down (.656 vs.\ .557), Trend Up (.791 vs.\ .626), and Trend Down (.657 vs.\ .521).  These families have the largest absolute gaps in the table, with LLM$\to$OR improving over OR by 3--17 percentage points.  This pattern is intuitive: an upward or downward shift in mean or trend is exactly the kind of structural break that an LLM can detect from context and translate into an updated demand forecast (see Example~\ref{ex:llm_demand_shift}).

On real H\&M instances, all LLM-involving methods outperform OR (.486), with LLM$\to$OR reaching .537.  Indeed, the LLMs leverage world knowledge to improve demand predictions beyond what the simple statistical OR heuristic can achieve (see Example~\ref{ex:llm_world_knowledge}).

\subsection{Calibration to Critical Fractiles} \label{sec:calibration_cr}

Finally, we test whether the LLM is able to change its under- vs.\ over- stocking tendencies depending on the critical fractile $\rho$ for the instance at hand.  To do so, we define each method's \emph{implicit critical fractile} as the fraction of periods in which it has excess inventory (as opposed to stocking out).  A higher implicit critical fractile implies a greater tendency to overstock, which should be desired for higher values of $\rho$ if the decision-making method is "calibrated".

We display the average implicit critical fractiles for the four methods, disaggregating instances by $\rho$.
\Cref{tab:gemini_implicit_cr} reveals a striking spectrum. The four methods can be ordered by how much LLM influence they contain---OR (none), LLM$\to$OR (LLM provides a forecast, OR makes the final order), OR$\to$LLM (OR provides a suggestion, LLM makes the final order), LLM (full LLM control)---and along this spectrum, responsiveness to~$\rho$ monotonically decreases. OR's implicit critical fractile rises from 0.470 to 0.597 as $\rho$ increases from 0.50 to 0.95 (range = 0.127); LLM$\to$OR's rises from 0.586 to 0.691 (range = 0.105); OR$\to$LLM's from 0.672 to 0.741 (range = 0.069); and LLM's from 0.736 to 0.764 (range = 0.028). Every method responds \emph{positively} to increases in~$\rho$, but the response attenuates as the final decision moves further from the OR heuristic.
We note that due to the lead times, it is impossible for the implicit critical fractile to match $\rho$ exactly, so we are just measuring the range to rank the methods.

\begin{table}
\centering
\caption{Gemini 3 Flash: implicit critical fractiles, averaged across the instances with a given value of $\rho$.}
\label{tab:gemini_implicit_cr}
\begin{tabular}{lcccc}
\toprule
\textbf{Method} & \textbf{$\rho=0.50$} & \textbf{$\rho=0.80$} & \textbf{$\rho=0.95$} & \textbf{Range} \\
\midrule
OR          & 0.470 & 0.527 & 0.597 & 0.127 \\
LLM                         & 0.736 & 0.752 & 0.764 & 0.028 \\
OR$\to$LLM                  & 0.672 & 0.712 & 0.741 & 0.069 \\
LLM$\to$OR                  & 0.586 & 0.643 & 0.691 & 0.105 \\
\bottomrule
\end{tabular}
\end{table}

\section{Part 2: Human-in-the-loop Experiment and Inventory Game}
In practice, human decision makers typically interact with OR algorithms while retaining final decision authority. As LLMs enter the loop, a natural question arises: how should these interactions be designed? Should humans provide high-level strategic guidance, or continue to make low-level operational decisions as before? More fundamentally, can human--AI collaboration generate performance gains that neither humans nor AI can achieve on their own---so that the two act as complements rather than substitutes?

We describe the experimental design and present the results, followed by a rigorous analysis of human-AI complementarity in Section~\ref{sec:complementarity_theory}.

\subsection{Game and Experiment Setup} \label{sec:classroom_setup}

We conduct a classroom experiment in which human participants play the inventory control problem from Section~\ref{sec:game_description} under the three human-AI collaboration modes introduced in Section~\ref{sec:introduction} (Figure~\ref{fig:modes}).

\begin{figure}
    \centering
    \includegraphics[width=0.85\textwidth]{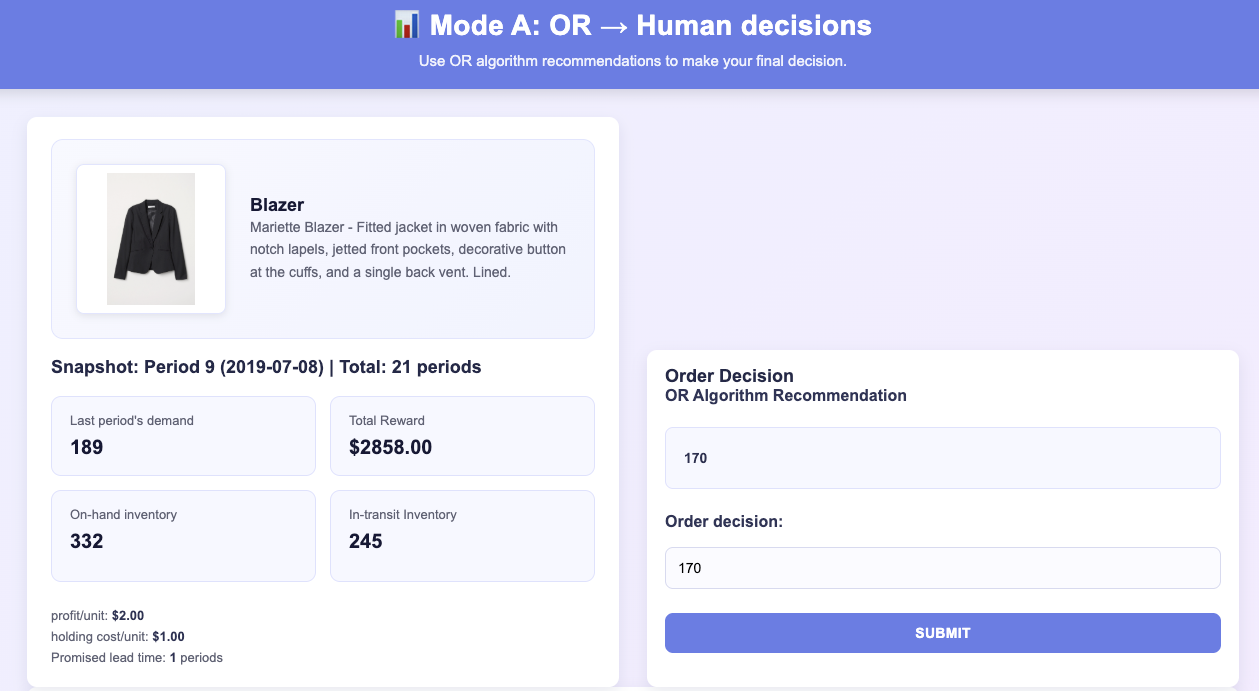}
    \caption{Decision panel of the inventory game. Each period, the participant sees the product description, current inventory state (on-hand, in-transit, cumulative reward), parameters, and the most recent demand, along with AI recommendations that vary by collaboration mode. See also Figure~\ref{fig:interface_bottom} in the appendix for the analytics panel.}
    \label{fig:game_interface}
\end{figure}

The web-based game interface (Figure~\ref{fig:game_interface}) displays all information needed for each ordering decision: the product description, current inventory state (on-hand, in-transit, cumulative reward), parameters ($p$, $h$, $L$), recent demand, and interactive charts of historical demand and inventory status.

\paragraph{Collaboration modes (treatments).}
The three modes vary how decision authority is shared:

\begin{enumerate}
    \item \textbf{Mode~A: OR $\rightarrow$ Human} (Figure~\ref{fig:modeA}).
    The participant sees the OR baseline recommendation and makes the final order decision.
    This represents the status quo where humans receive OR recommendations and retain full authority.

    \item \textbf{Mode~B: OR $\rightarrow$ LLM $\rightarrow$ Human} (Figure~\ref{fig:modeB}).
    The participant sees the OR recommendation, the LLM recommendation (from the same OR$\to$LLM pipeline as Part~1), and the LLM's written rationale, then makes the final decision.

    \item \textbf{Mode~C: OR  $\rightarrow$ LLM + Human guidance} (Figure~\ref{fig:modeC}).
    The LLM agent autonomously makes ordering decisions each period.
    The participant shifts to a strategic advisor role, providing optional free-form guidance at scheduled pauses (every four periods), which the LLM incorporates into subsequent decisions.
\end{enumerate}

\paragraph{Experiment procedure.}
Each participant plays three game instances (\Cref{sec:human_instances}), with one collaboration mode randomly assigned per instance so that each participant experiences each mode exactly once.
The complete decision trajectory---every order quantity, timestamp, and any text guidance---is recorded.
There were 69 participants and 187 subject--instance observations in total.\footnote{Some participants did not complete every instance due to time constraints or other factors.}
The game was voluntary with no identification saved or grade impact; hypotheses were pre-registered prior to the experiment.\footnote{See \href{https://aspredicted.org/gs5tm3.pdf}{https://aspredicted.org/gs5tm3.pdf}.}

\subsection{Results} \label{sec:game_results}

We first present the overall performance of five decision-making methods: three human collaboration modes (Modes A, B, and C) and two automated benchmarks (OR and OR$\rightarrow$LLM).
We consider normalized rewards as defined in \Cref{sec:metric_overall_result}.
\cref{tab:intro_1b} from \cref{sec:introduction} reports the normalized reward averaged across the three instances for each mode.
The results show that \textit{Mode~B achieves the best performance overall}, outperforming both the other human collaboration modes and the automated methods.
Moreover, the human collaboration modes outperform their non-human counterparts: Mode~A outperforms OR, and Mode~B outperforms OR$\rightarrow$LLM.
We now test whether these findings hold more rigorously using regression analysis.

\subsubsection{Comparing human collaboration modes} \label{sec:comparing_modes}

We run the following pooled OLS specification with subject fixed effects and instance fixed effects:
\[
Y_{ij} \;=\; \alpha_i \;+\; \beta_j \;+\; \tau_{c(i,j)} \;+\; \varepsilon_{ij},
\]
where $Y_{ij}$ is the normalized reward obtained by subject $i$ on instance $j$, $\alpha_i$ captures subject-level heterogeneity, $\beta_j$ captures instance difficulty, and $\tau_{c(i,j)}$ is the treatment effect for the decision-support mode used in that subject-instance pair. Standard errors are clustered at the subject level to account for within-subject correlation across instances. 
The omitted (baseline) category is Mode A on Instance 1.

\paragraph{Mode B $>$ Mode A.}
Mode B yields a statistically significant improvement over Mode A, with an estimated treatment effect $\widehat{\tau}_B = 0.0675$ (SE $=0.0185$, one-sided $p=0.00025$), which is positive and significant.
While the previous section showed that LLMs add value on top of OR algorithms, this result shows that adding LLM-based decision support on top of a traditional OR recommendation improves performance even when a human remains the final decision maker.
Since Mode A closely mirrors the status quo in many operational settings (OR provides a prescriptive recommendation and a human executes the final action), this finding suggests that incorporating an LLM layer can improve performance without removing humans from the decision process.

\paragraph{Mode B $>$ Mode C.}
Next, we compare the two modes with LLM support, Mode B and Mode C. We find a statistically significant difference between the two modes, with an estimated treatment effect $\widehat{\tau}_C - \widehat{\tau}_B = -0.0722$ (SE $=0.0177$); this difference is significant (two-sided $p=0.00012$).
This result indicates that \emph{where} the LLM is placed in the pipeline matters. In our experiment, LLMs appear to be most effective as a decision-support tool that the human can selectively follow, adjust, or override, rather than as the primary controller with only occasional human steering. 
At the same time, we acknowledge that this comparison is technology- and interface-dependent: stronger models, better mechanisms for translating human intent into concrete actions, or more frequent and structured guidance channels could narrow or potentially reverse the gap between Modes C and B.

\subsubsection{Comparing to no-human modes.} \label{sec:comparing_to_no_human_modes}

Next, we compare the human collaboration modes to the algorithm-only methods from \cref{sec:alg_experiment}. These comparisons isolate the incremental value of adding a human at the final step, holding the underlying method fixed: we compare Mode A to OR, and Mode B to OR$\rightarrow$LLM. For both methods, we evaluate the same three instances used in the human experiment. We run OR$\rightarrow$LLM 100 times per instance (to average over stochasticity in the LLM) and OR once per instance (since it is deterministic).

\paragraph{Human-in-the-loop adds value.}
We estimate regressions with instance fixed effects and an indicator for the human mode (Mode A or Mode B). Standard errors are cluster-robust, clustering human observations by subject and automated observations by run (see \Cref{sec:regression_details} for details). We conduct one-sided tests for improvement and find that the human modes outperform their non-human counterparts: \textbf{Mode A $>$ OR} ($p=0.0098$) and \textbf{Mode B $>$ OR$\rightarrow$LLM} ($p=0.000009$). 
These results indicate that human judgment adds meaningful value beyond what can be achieved by automated decision-making alone, whether based on traditional OR methods or enhanced with LLM capabilities.

\section{Human-AI Complementarity}

The previous section established that human collaboration with LLM support (Mode B) outperforms both traditional human decision-making with OR alone (Mode A) and fully automated LLM-based decisions (OR$\rightarrow$LLM). 
A key question arises: does collaboration create genuine synergy, or does it merely allow weaker decision-makers to adopt AI recommendations while stronger ones ignore them? 
In this section, we formalize this complementarity by developing metrics that distinguish between population-level and individual-level effects, and we apply them to our experiment results.

\subsection{Theoretical Framework} \label{sec:complementarity_theory}

\paragraph{Setup and notation.}
Consider a decision-making task with a fixed realization of all random variables (demand, lead times, etc.).
For a given person $i$, let:
\begin{itemize}
    \item $y_{H}^{(i)}$: the performance of person $i$ acting alone,
    \item $y_{AI}$: the performance of the AI acting alone,\footnote{
    We interpret $y_{AI}$ as a deterministic quantity representing expected AI performance (even though LLM outputs can be stochastic, its realization is not known ex-ante).}
    \item $y_{H+AI}^{(i)}$: the performance of person $i$ collaborating with AI.
\end{itemize}
We write $\bar{y}_{H} = \bE_i[y_{H}^{(i)}]$ and $\bar{y}_{H+AI} = \bE_i[y_{H+AI}^{(i)}]$ for the population averages over persons.

\paragraph{Population-level complementarity.}
Following \citet{hemmer2025complementarity}, the \emph{population-level complementarity effect} is defined as
\begin{align}
    \mathrm{CE} \;=\; \bar{y}_{H+AI} - \max\bigl(\bar{y}_{H},\, y_{AI}\bigr). \label{eq:ce}
\end{align}
When $\mathrm{CE} > 0$, the average human-AI team outperforms both the average human alone and the AI alone.

\paragraph{Individual-level complementarity.}
We are also interested in whether complementarity arises at the individual level. Define the \emph{individual-level complementarity effect} for individual $i$ as
\begin{align}
    \mathrm{CE}_i \;=\; y_{H+AI}^{(i)} - \max\bigl(y_{H}^{(i)},\, y_{AI}\bigr). \label{eq:ce_i}
\end{align}
A positive $\mathrm{CE}_i$ means that person $i$ genuinely benefits from interacting with the AI: their collaborative performance exceeds both what they could achieve alone and what the AI could achieve alone.

\paragraph{Population-level complementarity does not imply individual-level complementarity.}
Importantly, $\mathrm{CE} > 0$ does not imply the existence of any individual $i$ with $\mathrm{CE}_i > 0$. To see this, consider a population in which weak decision-makers (those with $y_{H}^{(i)} < y_{AI}$) simply adopt the AI's recommendation, achieving $y_{H+AI}^{(i)} = y_{AI}$, while strong decision-makers (those with $y_{H}^{(i)} > y_{AI}$) ignore the AI entirely, achieving $y_{H+AI}^{(i)} = y_{H}^{(i)}$. In this case, $y_{H+AI}^{(i)} = \max(y_{H}^{(i)}, y_{AI})$ for every person, so $\mathrm{CE}_i = 0$ for all~$i$. 
Yet the population average $\bar{y}_{H+AI}$ can still exceed $\max(\bar{y}_{H}, y_{AI})$. 
In such a case, population-level complementarity reflects a \emph{selection effect}, rather than true human-AI synergy in which the AI genuinely boosts individual performance beyond what either could achieve alone.

\paragraph{The missing-data challenge.}
Measuring $\mathrm{CE}_i$ directly requires observing both $y_{H}^{(i)}$ and $y_{H+AI}^{(i)}$ for the same person $i$, which is a fundamental missing-data problem: each person is observed in only one condition.\footnote{One approach in the literature is to let the human decide first, reveal the AI's recommendation, and ask whether the human would revise their decision. However, this protocol does not apply to settings with sequential, multi-step decisions, where earlier choices affect later states.}
Nevertheless, we can still estimate meaningful statistics about the distribution of $\mathrm{CE}_i$ across the population.

\paragraph{Statistical framework.}
For a person $i$ drawn from the population, define
\begin{align*}
    a(i) &\;=\; \max\bigl(y_{H}^{(i)},\, y_{AI}\bigr), \\
    b(i) &\;=\; y_{H+AI}^{(i)},
\end{align*}
and let $F_a$, $F_b$ denote their population CDFs, which can be estimated by sampling individuals in one condition each. Note that $\mathrm{CE}_i = b(i) - a(i)$.

\textbf{Average individual-level complementarity.}
The average individual-level complementarity effect is
\begin{align} \label{eq:cei_estimate}
\bE_i[\mathrm{CE}_i] \;=\; \bE[b] - \bE[a],
\end{align}
which can be estimated directly from the sample means of the two groups.

\textbf{Lower bound on the probability of positive complementarity.}
Although we cannot identify $\mathrm{CE}_i$ for any specific individual, we can provide a distribution-free lower bound on the fraction of individuals who benefit from human-AI collaboration.

\begin{theorem} \label{thm:ce_lower_bound}
Let $F_a$ and $F_b$ denote the CDFs of $a$ and $b$, respectively. Then for any $\delta \geq 0$,
\[
\bP\bigl[\mathrm{CE}_i > \delta\bigr] \;\geq\; \sup_{t \in \bR}\, \bigl(F_a(t) - F_b(t + \delta)\bigr).
\]
Moreover, this bound is tight: for any marginal distributions of $a$ and $b$, there exists a joint distribution of $(a,b)$ that achieves equality.
\end{theorem}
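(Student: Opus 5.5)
The lower bound follows from a union bound applied pointwise in $t$; tightness comes from a coupling. Both are Makarov-type bounds on the distribution of a difference with fixed marginals.

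\emph{Lower bound.} Fix $\delta \ge 0$ and $t \in \bR$. If $a(i) \le t$ and $b(i) > t+\delta$, then $\mathrm{CE}_i = b(i)-a(i) > \delta$. Hence
\[
\{a \le t\} \subseteq \{\mathrm{CE}_i > \delta\} \cup \{b \le t+\delta\},
\]
and so
\[
\bP[\mathrm{CE}_i > \delta] \;\ge\; \bP[a\le t] - \bP[b \le t+\delta] \;=\; F_a(t) - F_b(t+\delta).
\]
The left side does not depend on $t$, so we may take the supremum over $t$.

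\emph{Tightness.} Let $s = \sup_t (F_a(t) - F_b(t+\delta))$; note $s \ge 0$ by letting $t\to-\infty$. Define the shifted variable $b' = b - \delta$, with CDF $G(t) = F_b(t+\delta)$. Then $\mathrm{CE}_i > \delta$ is equivalent to $b' > a$, and $s = \sup_t (F_a(t) - G(t))$. We need a joint law of $(a, b')$ with the given marginals such that $\bP[b' > a] = s$; equivalently, $\bP[b' \le a] = 1 - s$.

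The plan is the classical Makarov construction. Take $U \sim \mathrm{Unif}[0,1]$ and set $a = F_a^{-1}(U)$ and $b' = G^{-1}(U + 1 - s \bmod 1)$. This is a "shifted comonotone" coupling: it pairs the top $1-s$ quantile mass of $b'$ against the bottom $1-s$ quantile mass of $a$, and wraps the rest. Both marginals are preserved, since a rotation of $U$ modulo 1 is still uniform.

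Now split on the value of $U$.
\begin{itemize}
\item On $\{U < s\}$ we have $b' = G^{-1}(U+1-s)$.
\item On $\{U \ge s\}$ we have $b' = G^{-1}(U - s)$.
\end{itemize}
On the second event, pick $u \in [s,1)$. By the definition of $s$, $G(t) \ge F_a(t) - s$ for every $t$. From this one derives $G^{-1}(u - s) \le F_a^{-1}(u)$, i.e.\ $b' \le a$ there. This event has probability $1 - s$, so $\bP[b' \le a] \ge 1-s$, i.e.\ $\bP[\mathrm{CE}_i > \delta] \le s$. Combined with the lower bound, this gives equality.

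The main obstacle is the quantile inequality $G^{-1}(u-s) \le F_a^{-1}(u)$ for general (non-continuous) CDFs, where generalized inverses and ties must be handled carefully. I would use the generalized inverse $F^{-1}(u) = \inf\{x : F(x) \ge u\}$ together with the Galois property $F^{-1}(u) \le x \iff u \le F(x)$. Setting $x = F_a^{-1}(u)$ gives $F_a(x) \ge u$, hence $G(x) \ge F_a(x) - s \ge u - s$, hence $G^{-1}(u-s) \le x$. The boundary case $u = s$ (where $G^{-1}(0) = -\infty$) only helps the inequality.

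Finally, we must check that the constructed joint law of $(a,b)$ is admissible. In the model, $a = \max(y_H, y_{AI})$ and $b = y_{H+AI}$, and the pair $(y_H, y_{H+AI})$ is unrestricted. So any coupling of $(a,b)$ with the given marginals is realizable, provided $a$'s marginal is supported on $[y_{AI}, \infty)$, which it already is.
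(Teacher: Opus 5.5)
Your argument is correct and is essentially the paper's proof: the union-bound step is the paper's Fr\'{e}chet inequality. Your rotated quantile coupling is an explicit form of the paper's construction, pairing the upper $1-s$ mass of $a$ comonotonically with the lower $1-s$ mass of $b-\delta$, and your direct check $G^{-1}(u-s)\le F_a^{-1}(u)$ replaces the paper's appeal to the monotone coupling theorem. One small slip in your verbal gloss: on $\{U\ge s\}$ the coupling pairs the \emph{bottom} $1-s$ quantile mass of $b'$ with the \emph{top} $1-s$ mass of $a$, not the reverse, as your formulas correctly show.
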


\begin{proof}[Proof sketch]
The lower bound follows from two observations. First, the event $\{b(i) > t+\delta\} \cap \{a(i) \leq t\}$ implies $\mathrm{CE}_i > \delta$. Second, the Fr\'{e}chet inequality gives $\bP[b(i) > t+\delta,\, a(i) \leq t] \geq (1-F_b(t+\delta)) + F_a(t) - 1 = F_a(t) - F_b(t+\delta)$. Optimizing over $t$ yields the bound. For tightness, one constructs a coupling achieving $\Pr(b-a>\delta) = \varepsilon$ by splitting the marginals into retained sub-distributions (coupled monotonically so that $b - a \leq \delta$ a.s.) and remaining mass (coupled arbitrarily); the lower bound then forces equality. See \Cref{sec:proof_complementarity} for the full proof.
\end{proof}

\begin{corollary} \label{cor:ce_positive}
Setting $\delta = 0$ in \cref{thm:ce_lower_bound} yields
$\bP\bigl[\mathrm{CE}_i > 0\bigr] \geq \sup_{t \in \bR}\,\bigl(F_a(t) - F_b(t)\bigr)$.
\end{corollary}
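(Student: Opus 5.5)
The statement to prove is \cref{cor:ce_positive}. It is an immediate specialization of \cref{thm:ce_lower_bound}, so the plan is simply to invoke that theorem with $\delta = 0$.

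First, I check that $\delta = 0$ is admissible: the theorem holds for every $\delta \geq 0$, and $0$ is in that range. Substituting $\delta=0$ gives
\[
\bP\bigl[\mathrm{CE}_i > 0\bigr] \;\geq\; \sup_{t\in\bR}\bigl(F_a(t) - F_b(t+0)\bigr) \;=\; \sup_{t\in\bR}\bigl(F_a(t)-F_b(t)\bigr),
\]
which is exactly the claim.

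For completeness, I would also note the direct argument, since it is just as short. Fix $t\in\bR$. The event $\{a(i)\le t\}\cap\{b(i)>t\}$ implies $b(i)>a(i)$, that is, $\mathrm{CE}_i>0$. By the Fréchet (union-bound) inequality,
\[
\bP[a\le t,\, b>t] \;\ge\; \bP[a\le t]+\bP[b>t]-1 \;=\; F_a(t)-F_b(t).
\]
Taking the supremum over $t$ then finishes the proof.

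There is no real obstacle here. The only point worth remarking on is that the supremum is always at least $0$, because $F_a(t)-F_b(t)\to 0$ as $t\to\pm\infty$. So the bound is never vacuous in sign; it simply gives $0$ when $F_a \le F_b$ pointwise. Tightness is inherited from the theorem but is not part of the corollary's statement, so I would not need to reprove it.
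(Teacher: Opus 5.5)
Your proposal is correct and follows the paper exactly: the corollary is obtained by substituting $\delta = 0$ into \cref{thm:ce_lower_bound}. Your optional direct argument is the paper's own proof of the theorem's lower bound specialized to $\delta = 0$, namely the inclusion $\{a \le t\} \cap \{b > t\} \subseteq \{\mathrm{CE}_i > 0\}$, followed by the Fr\'echet inequality and a supremum over $t$.
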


The bound in \cref{cor:ce_positive} has an intuitive graphical interpretation: $\bP[\mathrm{CE}_i > 0]$ is at least as large as the maximum vertical gap where $F_a$ lies above $F_b$---precisely the one-sided Kolmogorov--Smirnov statistic between the two distributions. This can be estimated from finite samples using the empirical CDFs of $a$ and $b$.

\subsection{Evaluating Complementarity} \label{sec:complementarity}

We apply the framework of the previous section to our experiment results.
We interpret ``AI'' as ``LLM'' in our experiments, and hence we use the following mapping:\footnote{We treat Mode~A (OR$\to$Human) as the ``human alone'' baseline, since it represents the classic status quo in which a human decision maker receives OR recommendations without LLM involvement. Any mode involving an LLM is considered AI collaboration.}
\begin{itemize}
    \item Human-only (H): Mode A (OR $\rightarrow$ Human) 
    \item AI-only (AI): OR $\rightarrow$ LLM 
    \item Human-AI team (H+AI): Mode B (OR $\rightarrow$ LLM $\rightarrow$ Human)
\end{itemize}

\subsubsection{Population-level complementarity.}

We evaluate the population-level complementarity effect defined in Section~\ref{sec:complementarity_theory}. 
We apply the framework separately to each of the three instances, with demand realizations and inventory arrivals held fixed across decision-making methods.
We plug in estimates of the population means $\bar{y}_H = \bE_i[y_H^{(i)}]$ and $\bar{y}_{H+AI} = \bE_i[y_{H+AI}^{(i)}]$ using sample averages over participants for each instance.
Averaging over the three instances gives
$\hat{\bar{y}}_{H+AI} = 0.5338$ and $\max\{\hat{\bar{y}}_H,\,y_{AI}\} = 0.4842$, so $$\widehat{\mathrm{CE}} = 0.0496.$$
To test whether this effect is significantly positive, we bootstrap over participants\footnote{The bootstrap is run with 10{,}000 replicates, resampled within each mode-instance cell.} and obtain a one-sided $p$-value of $0.00010$.
Thus, our results exhibit population-level human--AI complementarity: averaged across instances, the human--AI team outperforms the better of the human-only and AI-only benchmarks. In relative terms, our estimate corresponds to a $0.0496/0.4842 \approx 10.2\%$ gain over the best non-collaborative baseline.

\subsubsection{Individual-level complementarity.}
As discussed in Section~\ref{sec:complementarity_theory}, population-level complementarity does not, by itself, imply that any given individual benefits from collaborating with the AI. In principle, a positive $\mathrm{CE}$ could arise purely from a selection effect: lower-performing participants follow the LLM's recommendation and match the AI-only benchmark, while higher-performing participants ignore it and match their own human-only baseline. Under this explanation, no individual would achieve collaborative performance that exceeds the better of their standalone performance and the AI's performance.

We first estimate the average individual-level complementarity effect, $\bE_i[\mathrm{CE}_i]$, using \eqref{eq:cei_estimate}, applied separately to each instance.
Table~\ref{tab:avg_ce} reports the estimates and 95\% bootstrap confidence intervals.
Instance~1 shows a clearly positive effect whose 95\% CI excludes zero; Instances~2 and~3 are not statistically significant on their own.
The negative point estimate for Instance~2 is notable: the AI performs strongly on that instance, so 70\% of Mode~A participants score at or below the AI benchmark, making $b(i)$ a high bar to clear.

\begin{table}[h]
\centering
\begin{tabular}{lcc}
\toprule
Instance & $\hat{\bE}_i[\mathrm{CE}_i]$ & 95\% confidence interval \\
\midrule
1 & $0.048$ & $(0.004,\; 0.091)$ \\
2 & $-0.012$ & $(-0.037,\; 0.009)$ \\
3 & $0.022$ & $(-0.034,\; 0.074)$ \\
\bottomrule
\end{tabular}
\caption{Per-instance estimates of average individual-level complementarity $\bE_i[\mathrm{CE}_i]$. Bootstrap 95\% CIs use 10{,}000 replicates, resampled within each mode-instance cell.}
\label{tab:avg_ce}
\end{table}

Overall, the average individual-level complementarity effect is mixed across instances. To better understand the distribution of complementarity, we next estimate what fraction of individuals experience strictly positive complementarity.

\paragraph{Fraction who benefit from human-AI collaboration.}
We apply \cref{cor:ce_positive} separately to each instance to estimate the probability that a randomly drawn individual experiences strictly positive complementarity, $\Pr(\mathrm{CE}_i>0)$.
Since each participant is observed in only one condition, we cannot compute $\mathrm{CE}_i$ directly, but we can estimate the marginal distributions $P$ and $Q$ of $a$ and $b$ separately for each instance from the Mode B and Mode A participants assigned to it.

Table~\ref{tab:lower_bounds} reports the lower bound $\sup_{t}\,(\hat{F}_a(t) - \hat{F}_b(t))$ and 95\% bootstrap confidence intervals for each instance.
Across all three instances, individual-level complementarity is present: the lower bound is strictly positive and the 95\% confidence interval excludes zero in every case. The bounds range from 30.2\% to 60.5\%, indicating that a substantial fraction of individuals experience strictly positive complementarity.

\begin{table}[h]
\centering
\begin{tabular}{lcc}
\toprule
Instance & Lower bound on $\Pr(\mathrm{CE}_i > 0)$ & 95\% confidence interval \\
\midrule
1 & 60.5\% & (36.0\%, 80.5\%) \\
2 & 30.2\% & (12.0\%, 58.0\%) \\
3 & 35.9\% & (15.6\%, 66.0\%) \\
\bottomrule
\end{tabular}
\caption{Per-instance lower bounds on $\Pr(\mathrm{CE}_i > 0)$, estimated using \cref{cor:ce_positive}. Bootstrap 95\% CIs use 10{,}000 replicates, resampled within each mode-instance cell.}
\label{tab:lower_bounds}
\end{table}

This provides evidence that human–AI complementarity in our setting is not purely a population-level artifact driven by selection. A nontrivial fraction of participants neither simply follow the LLM's recommendation nor ignore it, but use the LLM's reasoning to make adjustments that improve upon both the human-only and AI-only benchmarks.
These lower bounds are conservative by construction; they represent the minimum fraction of beneficiaries consistent with the observed marginal distributions, and the true fractions may be substantially higher.

\subsection{How do Humans Add Value?}
\label{sec:mechanisms}

The preceding analysis established that human decision makers improve upon AI recommendations (Section~\ref{sec:comparing_modes}) and exhibit true complementarity with LLMs (Section~\ref{sec:complementarity}). We now ask: \emph{what specific capabilities enable humans to add value beyond what the LLM achieves alone?}

\paragraph{Humans and LLMs as co-pilots.}
Our evidence suggests that humans and LLMs function as \emph{co-pilots}: they add value through the same type of reasoning. Recall from Part~1 that LLMs improve upon OR algorithms through contextual, judgment-based capabilities—detecting demand regime changes, incorporating product seasonality knowledge, and identifying supply disruptions. Our experiments reveal that humans improve upon LLMs through the same channels.
Rather than contributing orthogonal skills, humans and LLMs are imperfectly correlated reasoners on a shared contextual dimension, which generates complementarity.

We support this claim using two complementary sources of evidence.
First, we study when humans override the LLM in Mode~B by examining deviations between the participant's final order and the LLM recommendation (Figure~\ref{fig:deviations_by_instance}). Second, we analyze the strategic guidance that participants provided in Mode~C (\cref{tab:human_reasoning_examples} in \Cref{sec:llm_reasoning_anecdotes}), where the LLM makes autonomous ordering decisions but humans provide high-level guidance every 4 periods. 
Despite coming from different collaboration modes, both sources of evidence converge on the same mechanisms that we describe below.

\begin{figure}
    \centering
    \includegraphics[width=0.9\textwidth]{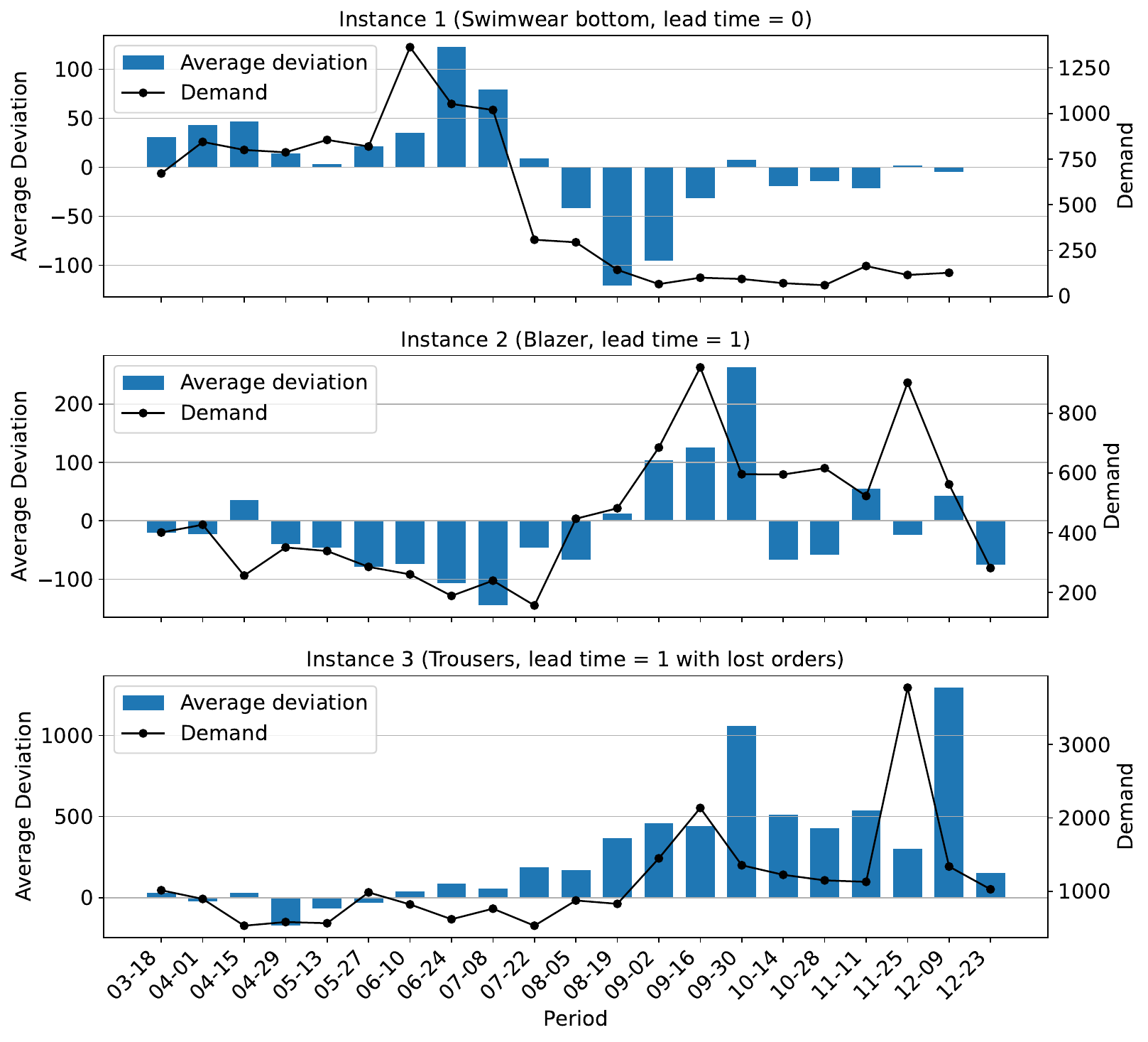}
    \caption{Average deviation of the participant's final decision from the LLM recommendation in Mode~B, by instance and period. The black line shows realized demand for each period.}
    \label{fig:deviations_by_instance}
\end{figure}

\paragraph{Reacting to demand trends and shocks.}
Across instances, we can see from Figure~\ref{fig:deviations_by_instance} that deviations often spike immediately after demand spikes, suggesting that participants intervene when recent demand realizations contain information not fully captured by the LLM's reasoning.
The first column of \cref{tab:human_reasoning_examples} (in \Cref{sec:llm_reasoning_anecdotes}) shows guidance messages in which participants explicitly direct the LLM's attention to recent demand changes.

\paragraph{Detecting anomalies: lost orders.}
Instance~3 exhibits sustained positive deviations in later periods (Figure~\ref{fig:deviations_by_instance}).
In this instance, the lead-time configuration was particularly challenging: orders would sometimes never arrive.
These ``lost orders'' were still counted as inventory-in-transit, which the OR algorithm factored into its order recommendation.
Both the LLM and the human can potentially detect such lost orders and write them off, increasing the order quantity relative to the OR recommendation.
However, the LLM does not always detect them and instead follows the OR recommendation, resulting in significant positive human deviations in the later periods of Instance~3.

\paragraph{Seasonal and world-knowledge reasoning.}
Instance~2 (blazer) features a demand pattern shaped by fashion seasonality: lower demand in summer months and higher demand heading into fall. \cref{fig:deviations_by_instance} shows that human deviations are negative during summer and positive in later periods as demand rises toward fall.
The second column of \cref{tab:human_reasoning_examples} (in \Cref{sec:llm_reasoning_anecdotes}) corroborates this with guidance messages that explicitly reference seasonality.

\paragraph{Summary.} Across all three instances, the pattern is consistent: humans add value through contextual, judgment-based reasoning—the same type of capability that makes LLMs valuable over OR. 
The co-pilot structure works because their contextual reasoning is correlated but not identical: correlated enough to agree when right, independent enough to catch each other's errors.

\section{Conclusion}

This paper provides a systematic study of how OR algorithms, LLMs, and humans can complement each other in multi-period inventory control.
On the algorithmic side, we find that combining OR and LLMs outperforms either alone, with OR$\to$LLM achieving the best overall performance on \textsc{InventoryBench}.
On the human side, our controlled experiment demonstrates that humans benefit most when they retain final decision authority with access to both the OR recommendation and the LLM's reasoning, and that humans add value beyond automated methods alone.
Our theoretical framework provides a distribution-free lower bound on the fraction of individuals who benefit from AI collaboration; applying it to our data, we find that at least 30--60\% of individuals experience strictly positive complementarity, depending on the instance.

Many directions remain open: richer agent architectures (e.g., LLMs paired with reinforcement learning or adaptive OR methods), different human collaboration structures, a broader range of participants, and extensions to other operational decision problems beyond inventory control.
We hope \textsc{InventoryBench} and the \textsc{AI-Human Inventory Game} serve as shared platforms for the community to systematically evaluate such methods going forward.

\section*{Acknowledgements}
We thank Omar Mouchtaki for pointing us to the H\&M dataset. We also acknowledge support from the Columbia AI Agents Initiative at Columbia Business School.

\bibliographystyle{abbrvnat}
\bibliography{bibliography}

\appendix
\crefalias{section}{appendix}
\crefalias{subsection}{subappendix}
\crefalias{subsubsection}{subsubappendix}

\section{OR Baseline: Data-driven Capped Base-stock Policy} \label{sec:baseline}

We consider a data-driven \textit{capped base-stock policy} \citep{Xin2021CappedBaseStock} as a default OR baseline to use for this inventory problem.  Indeed, because we have lead times and lost sales, exact optimization is intractable. The capped base-stock policy is a simple heuristic that orders up to a "target" inventory position (accounting for in-transit orders) while also implementing a cap to prevent the algorithm from ordering too much at any one time, which would cause unstable inventory flow.

At each period $t$, the data-driven version computes an order quantity as follows. Let $\hat{\mu}_t$ denote the estimated mean demand over the lead time horizon (specifically, the expected total demand from period $t$ through period $t + L$), and let $\hat{\sigma}_t$ denote the corresponding estimated standard deviation, recalling that $L$ is the anticipated lead time. These estimates are computed from the historical demand observations $\{d_{-4}, \ldots, d_0, d_1, \ldots, d_{t-1}\}$ as follows:
\begin{itemize}
    \item Let $\bar{d}$ be the empirical mean of all observed demands.
    \item Let $s_d$ be the empirical standard deviation of all observed demands.
    \item Set $\hat{\mu}_t = (1 + L) \bar{d}$ and $\hat{\sigma}_t = \sqrt{1 + L} \, s_d$, where the scaling factors $(1+L)$ and $\sqrt{1+L}$ arise from summing independent demands over $1+L$ periods \citep{snyder2019fundamentals}.
\end{itemize}

Define the \textit{critical fractile} $\rho = p / (p + h)$, which represents the optimal service level in the classical newsvendor model, and let $z^* = \Phi^{-1}(\rho)$ where $\Phi$ is the cumulative distribution function of a standard normal random variable. The \textit{base-stock level} is then $B_t = \hat{\mu}_t + z^* \hat{\sigma}_t$, which is the inventory target (for on-hand plus in-transit inventory) that balances overstocking and understocking costs.

To build intuition for readers unfamiliar with inventory theory: the critical fractile $\rho$ determines the optimal frequency of overstocking versus stocking out. Under an optimal policy, the decision maker carries leftover inventory on a $\rho$ fraction of periods and experiences a stockout on the remaining $1-\rho$ fraction. When $\rho$ is high (e.g., 0.95), stockouts are very costly relative to holding (because $p$ is high relative to $h$), so the optimal policy maintains a large safety buffer and stocks out rarely. This is generally the case, which is why Newsvendor literature tends to place asymmetric focus on high values of $\rho$.  When demand over the lead time horizon follows a normal distribution $N(\hat{\mu}_t, \hat{\sigma}_t^2)$ and lead time is zero, the base-stock level $B_t = \hat{\mu}_t + \Phi^{-1}(\rho)\,\hat{\sigma}_t$ is exactly optimal.

Let $\mathrm{IP}_t$ denote the \textit{inventory position} at time $t$, defined as on-hand inventory plus all outstanding orders:
\[
\mathrm{IP}_t \;=\; I_t + \sum_{\tau < t : \tau + \ell_\tau \ge t} q_\tau.
\]
The uncapped base-stock policy would order $q_t = \max\{B_t - \mathrm{IP}_t, 0\}$ to raise the inventory position to the target $B_t$. However, when lead times are positive, this can result in excessive orders during periods of high recent demand. To address this, we implement a \textit{capped} base-stock policy that limits each order to at most one period's worth of demand plus a safety margin:
\[
q_t \;=\; \min\left\{ \max\{B_t - \mathrm{IP}_t, 0\}, \; \frac{\hat{\mu}_t}{1+L} + \Phi^{-1}(0.95) \frac{\hat{\sigma}_t}{\sqrt{1+L}} \right\}.
\]
This cap prevents overreaction to short-term demand fluctuations while maintaining sufficient safety stock \citep{Xin2021CappedBaseStock}.  We use the 0.95 ratio as a default in the formula for the cap, irrespective of the critical fractile $\rho$.

\section{LLM Prompts} \label{sec:llm_prompts}

This appendix presents the system prompt used for the OR$\to$LLM agent. The LLM-only and LLM$\to$OR variants use the same prompt with minor modifications: the LLM-only variant omits the OR recommendation sections, and the LLM$\to$OR variant modifies the output format to return parameter estimates instead of order quantities.

Values in \texttt{\{braces\}} are filled in at initialization based on the instance configuration (e.g., product ID, promised lead time, historical demand samples).

\begin{tcolorbox}[colback=gray!5, colframe=gray!60, title={\textbf{System Prompt: Role \& Game Mechanics}}, fonttitle=\small, fontupper=\small]
\DisableQuotes\ttfamily
You control the vending machine for a single SKU "\{item\_id\}" while collaborating with an OR baseline. Maximize total reward $R_t = \text{Profit} \times \text{units\_sold} - \text{HoldingCost} \times \text{ending\_inventory}$ each period.\medskip

\textbf{Period execution sequence:}
\begin{enumerate}[leftmargin=*, nosep]
\item \textbf{VM Decision Phase:} You receive observation (including OR recommendation) and place orders for Period $N$
\item \textbf{Arrival Resolution:} Orders scheduled to arrive in Period $N$ are added to on-hand inventory
\item \textbf{Demand Resolution:} Customer demand is satisfied from on-hand inventory
\item \textbf{Period Conclusion:} System generates "Period $N$ conclude" message (visible in Period $N{+}1$)
\end{enumerate}
Important: Steps 2--4 happen AFTER your decision. You will see their results in the next period.\medskip

\textbf{Lead time definition.} Promised lead time: \{$L$\} period(s). An order placed in Period $N$ arrives during Period $(N{+}L)$'s arrival resolution, becomes visible in the ``Period $(N{+}L)$ conclude'' message, and is read at the start of Period $(N{+}L{+}1)$'s decision phase. There is always a 1-period observation delay. Actual lead time may differ from promised; orders may also be lost (never arrive).
\end{tcolorbox}

\begin{tcolorbox}[colback=blue!3, colframe=blue!40, title={\textbf{System Prompt: OR Baseline Explained to LLM}}, fonttitle=\small, fontupper=\small]
\DisableQuotes\ttfamily
The OR agent uses a capped base-stock policy:\medskip

\textbf{1. Demand estimation} (from historical samples $\xi_1, \ldots, \xi_n$):\\
\quad Empirical mean: $\bar{\mu} = \tfrac{1}{n}\sum_i \xi_i$, \quad std dev: $\bar{\sigma} = \sqrt{\tfrac{1}{n-1}\sum_i(\xi_i - \bar{\mu})^2}$\\
\quad Over lead time horizon: $\hat{\mu} = (1{+}L)\bar{\mu}$, \quad $\hat{\sigma} = \sqrt{1{+}L}\,\bar{\sigma}$\medskip

\textbf{2. Safety factor:} $q = p/(p+h)$, \quad $z^* = \Phi^{-1}(q)$\medskip

\textbf{3. Base stock:} $B = \hat{\mu} + z^* \hat{\sigma}$\medskip

\textbf{4. Capped order:} $q_t = \max\bigl(0,\, \min(B - \mathrm{IP}_t,\; \text{cap})\bigr)$\\
\quad where cap $= \hat{\mu}/(1{+}L) + \Phi^{-1}(0.95)\,\hat{\sigma}/\sqrt{1{+}L}$\medskip

\textbf{5. OR limitations:} Uses promised (not actual) lead time; weights all historical samples equally; cannot detect lost orders or regime shifts; assumes i.i.d.\ demand.\medskip

\rmfamily\textbf{Your role:} The OR recommendation is a data-driven baseline. Override it when you detect: actual vs.\ promised lead time discrepancies, demand regime changes, seasonality (from dates + product description), or lost shipments.
\end{tcolorbox}

\begin{tcolorbox}[colback=green!3, colframe=green!40, title={\textbf{System Prompt: Reasoning \& Output}}, fonttitle=\small, fontupper=\small]
\DisableQuotes\ttfamily
\textbf{Decision checklist:}
\begin{enumerate}[leftmargin=*, nosep]
\item Use world knowledge and SKU description to assess demand outlook.
\item Reconcile on-hand + pipeline with expected arrivals; flag overdue/lost shipments.
\item Inspect the OR recommendation (quantity + stats) and decide how to adapt it.
\item Justify final quantity by tying it to demand outlook, lead-time belief, and OR's baseline.
\end{enumerate}\medskip

\textbf{Carry-over insights:} Record only NEW, evidence-backed insights about sustained shifts (demand mean/variance, lead time, seasonality). Stay conservative; provide concrete stats. Remove insights once they stop being true.\medskip

\textbf{Output format} (JSON):\\
\quad \texttt{"rationale"}: full step-by-step analysis\\
\quad \texttt{"short\_rationale\_for\_human"}: 1--3 sentence summary\\
\quad \texttt{"carry\_over\_insight"}: new sustained discoveries, or ``''\\
\quad \texttt{"action"}: \{\texttt{"\{item\_id\}"}: quantity\}
\end{tcolorbox}

\paragraph{Human-in-the-loop additions.}
For Mode~B (OR$\to$LLM$\to$Human), the system prompt appends instructions for a two-stage interaction: in Stage~1, the agent provides its initial rationale and decision; in Stage~2, if the human provides feedback, the agent incorporates it and outputs a revised action.
For Mode~C (autonomous with strategic guidance), the prompt instructs the agent to follow any strategic guidance from the human supervisor that appears at the top of the observation.

\paragraph{Per-period user message.}
Each period, the user message provided to the LLM contains three blocks:
(1)~carry-over insights from prior periods (if any), formatted as a header listing each insight with its originating period;
(2)~the current observation from the game environment (on-hand inventory, in-transit orders, the previous period's conclusion message reporting arrivals, demand, sales, and ending inventory);
and (3)~the OR algorithm's recommendation with statistics (base-stock level, current inventory position, empirical demand mean and standard deviation, and order cap).

\section{LLM $\rightarrow$ OR Parameter Interface} \label{sec:llm_to_or_interface}

In the LLM $\rightarrow$ OR method, the LLM agent analyzes contextual information, demand history, and observed arrival patterns to provide parameter estimates that inform the OR algorithm's calculations. The OR algorithm then uses these LLM-provided estimates to compute the order quantity according to the capped base-stock policy described in Section~\ref{sec:game_description}.

The LLM can provide estimates for the following parameters:

\paragraph{Lead time ($L$).}
The LLM provides an estimate of the effective lead time for the current order. This estimate may differ from the anticipated lead time if the LLM detects supply disruptions, seasonal delays, or patterns in past delivery performance. For example, if recent orders have consistently arrived late or been lost, the LLM may recommend using a longer effective lead time or adjusting the demand forecast accordingly.

\paragraph{Mean demand over lead time horizon ($\hat{\mu}_t$).}
The LLM provides a forecast of the expected total demand from period $t$ through the period when the current order is expected to arrive (typically period $t + L$). This forecast may incorporate:
\begin{itemize}
    \item Pattern detection in recent demand (trends, changepoints, seasonality)
    \item Product-specific knowledge (e.g., swimwear demand peaks in spring/summer)
    \item Calendar effects (holidays, promotional periods)
    \item Context from the text description $x_t$
\end{itemize}
The LLM's forecast replaces or augments the simple empirical mean used by the default OR algorithm, allowing for adaptive forecasting that responds to non-stationary demand patterns.

\paragraph{Standard deviation over lead time horizon ($\hat{\sigma}_t$).}
The LLM provides an estimate of demand variability over the lead time horizon. This estimate may reflect recent changes in demand volatility or structural shifts in the demand process. For instance, if demand has become more erratic in recent periods, the LLM may recommend a higher $\hat{\sigma}_t$ to increase safety stock.

\paragraph{Implementation note.}
The LLM may choose to specify only $L$, in which case $\hat{\mu}_t$ and $\hat{\sigma}_t$ are computed using the OR baseline's default empirical estimates: $\hat{\mu}_t = (1+L)\bar{d}$ and $\hat{\sigma}_t = \sqrt{1+L} \, s_d$, where $\bar{d}$ and $s_d$ are the sample mean and standard deviation of observed demand. Alternatively, if the LLM provides $\hat{\mu}_t$ and $\hat{\sigma}_t$ directly, then the lead time estimate $L$ serves only as contextual information and is not used in the base-stock calculation. This allows the LLM to bypass the assumption of stationary, IID demand implicit in the default formulas.

Once the LLM provides these parameter estimates, the OR algorithm computes the base-stock level $B_t = \hat{\mu}_t + z^* \hat{\sigma}_t$ (where $z^* = \Phi^{-1}(\rho)$), determines inventory position $\mathrm{IP}_t$, and applies the capped base-stock policy to determine the order quantity $q_t$.

\section{Synthetic Instance Specifications} \label{sec:synthetic_spec}

This appendix provides the complete distributional specifications for all 40 synthetic demand variants used in the benchmark.
We generate 10 demand pattern families, each with 4 parametric variants, yielding 40 unique distributions. For each, we create 2 independent demand realizations, cross them with 3 cost ratios ($p{:}h = 1{:}1$, $4{:}1$, $19{:}1$), and evaluate under 3 lead time settings ($L=0$, $L=4$, stochastic), producing $40 \times 2 \times 3 \times 3 = 720$ synthetic instances in total. Each instance consists of a $T=50$ period test trajectory and 5 historical training samples. The 10 demand pattern families are: (1) stationary IID, (2--3) abrupt mean shifts (increase/decrease at period 16), (4--5) gradual trends (increasing/decreasing), (6) variance changes, (7) seasonal/cyclical patterns, (8) multiple changepoints, (9) temporary spikes or dips, and (10) autocorrelated AR(1) processes.

Several of these patterns feature ad hoc demand shifts of the kind commonly encountered in operations management games---for example, the step-function demand increase in the Beer Game \citep{sterman1989modeling} or the non-stationary demand environment in the Mexico-China sourcing game \citep{AllonVanMieghem2010MexicoChina}---while others feature more predictable seasonality or gradual trends. The OR baseline described in \Cref{sec:baseline} treats demand as stationary and IID, so it cannot detect any of these patterns. One could, in principle, augment the OR method with statistical changepoint detection, trend estimation, or models for correlated processes, but each extension requires its own methodology and careful tuning---there is no single off-the-shelf OR procedure that handles all 10 pattern families. By contrast, an LLM can attempt pattern recognition out of the box, without case-specific statistical machinery. Example~\ref{ex:llm_demand_shift} illustrates this capability with an example in which the LLM detects a mean shift, confirms the regime change over multiple periods, and adapts its forecast accordingly.

\subsection*{Generation Procedure}

Demand realizations are generated as follows. For normal distributions, $D_t \sim N(\mu(t), \sigma(t))$, truncated at zero and rounded to the nearest integer. For uniform distributions, $D_t \sim \text{Uniform}[a, b]$, similarly truncated and rounded. For AR(1) processes, $D_t = \varphi \cdot D_{t-1} + c + \varepsilon_t$ where $\varepsilon_t \sim N(0, \sigma)$ and $D_0 = 100$. Changepoint patterns apply the appropriate segment's distribution based on period $t$.

Training data generation depends on the pattern type. For stationary patterns (p01), training consists of 5 i.i.d.\ samples from the distribution. For changepoint patterns (p02, p03, p06, p08, p09), training consists of 5 i.i.d.\ samples from the \emph{first segment only}---so the algorithm has no advance notice of the regime change. For trend (p04, p05), seasonal (p07), and AR(1) patterns (p10), training consists of sequential samples at $t = 1, 2, 3, 4, 5$. All random number generation uses base seed 42 for reproducibility.

\subsection*{Pattern Specifications}

\paragraph{Pattern 1: Stationary IID (p01).}
Constant distribution throughout all periods.

\begin{center}
\small
\begin{tabular}{@{}llll@{}}
\toprule
\textbf{Variant} & \textbf{Distribution} & \textbf{Parameters} \\
\midrule
v1 & Normal & $\mu = 100$, $\sigma = 25$ \\
v2 & Normal & $\mu = 100$, $\sigma = 40$ \\
v3 & Normal & $\mu = 100$, $\sigma = 15$ \\
v4 & Uniform & $a = 50$, $b = 150$ \\
\bottomrule
\end{tabular}
\end{center}

\paragraph{Pattern 2: Mean Increase at $t=16$ (p02).}
Sudden increase in mean demand at period 16.

\begin{center}
\small
\begin{tabular}{@{}lllll@{}}
\toprule
\textbf{Variant} & \textbf{Before} ($t \le 15$) & \textbf{After} ($t \ge 16$) & \textbf{Change} \\
\midrule
v1 & $N(100, 25)$ & $N(200, 35)$ & $+100\%$ mean \\
v2 & $N(100, 25)$ & $N(150, 30)$ & $+50\%$ mean \\
v3 & $N(100, 25)$ & $N(300, 50)$ & $+200\%$ mean \\
v4 & $N(100, 25)$ & $N(200, 25)$ & $+100\%$ mean, same $\sigma$ \\
\bottomrule
\end{tabular}
\end{center}

\paragraph{Pattern 3: Mean Decrease at $t=16$ (p03).}
Sudden decrease in mean demand at period 16.

\begin{center}
\small
\begin{tabular}{@{}lllll@{}}
\toprule
\textbf{Variant} & \textbf{Before} ($t \le 15$) & \textbf{After} ($t \ge 16$) & \textbf{Change} \\
\midrule
v1 & $N(100, 25)$ & $N(50, 18)$ & $-50\%$ mean \\
v2 & $N(100, 25)$ & $N(70, 20)$ & $-30\%$ mean \\
v3 & $N(100, 25)$ & $N(30, 15)$ & $-70\%$ mean \\
v4 & $N(150, 30)$ & $N(80, 22)$ & $-47\%$ mean \\
\bottomrule
\end{tabular}
\end{center}

\paragraph{Pattern 4: Increasing Trend (p04).}
Gradual increase in demand over time.

\begin{center}
\small
\begin{tabular}{@{}llll@{}}
\toprule
\textbf{Variant} & $\boldsymbol{\mu(t)}$ & $\boldsymbol{\sigma(t)}$ \\
\midrule
v1 & $100t$ & $25\sqrt{t}$ \\
v2 & $50 + 3t$ & $20$ \\
v3 & $100 \times 1.05^t$ & $25$ \\
v4 & $100 + 2t$ & $25\sqrt{t}$ \\
\bottomrule
\end{tabular}
\end{center}

\paragraph{Pattern 5: Decreasing Trend (p05).}
Gradual decrease in demand over time.

\begin{center}
\small
\begin{tabular}{@{}llll@{}}
\toprule
\textbf{Variant} & $\boldsymbol{\mu(t)}$ & $\boldsymbol{\sigma(t)}$ \\
\midrule
v1 & $\max(200 - 3t,\; 50)$ & $25$ \\
v2 & $200 \times 0.97^t$ & $20$ \\
v3 & $\max(150 - 2t,\; 30)$ & $20$ \\
v4 & $200 / \sqrt{t}$ & $15$ \\
\bottomrule
\end{tabular}
\end{center}

\paragraph{Pattern 6: Variance Change at $t=16$ (p06).}
Change in demand variability at period 16.

\begin{center}
\small
\begin{tabular}{@{}lllll@{}}
\toprule
\textbf{Variant} & \textbf{Before} ($t \le 15$) & \textbf{After} ($t \ge 16$) & \textbf{Change} \\
\midrule
v1 & $N(100, 25)$ & $\text{Uniform}[0, 200]$ & Normal $\to$ Uniform \\
v2 & $N(100, 25)$ & $N(100, 50)$ & $\sigma$ doubles \\
v3 & $N(100, 50)$ & $N(100, 20)$ & $\sigma$ decreases \\
v4 & $\text{Uniform}[50, 150]$ & $N(100, 15)$ & Uniform $\to$ Normal \\
\bottomrule
\end{tabular}
\end{center}

\paragraph{Pattern 7: Seasonal/Cyclical (p07).}
Periodic demand with sinusoidal variation. All variants use $\sigma = 25$.

\begin{center}
\small
\begin{tabular}{@{}lllll@{}}
\toprule
\textbf{Variant} & $\boldsymbol{\mu(t)}$ & \textbf{Period} & \textbf{Amplitude} \\
\midrule
v1 & $100 + 30\sin(2\pi t/10)$ & 10 & 30 \\
v2 & $100 + 50\sin(2\pi t/5)$ & 5 & 50 \\
v3 & $100 + 40\sin(2\pi t/25)$ & 25 & 40 \\
v4 & $100 \times (1 + 0.3\sin(2\pi t/10))$ & 10 & 30\% mult.\ \\
\bottomrule
\end{tabular}
\end{center}

\paragraph{Pattern 8: Multiple Changepoints (p08).}
Two changepoints creating three distinct demand regimes.

\begin{center}
\small
\begin{tabular}{@{}llllll@{}}
\toprule
\textbf{Variant} & $t \in [1,15]$ & $t \in [16,35]$ & $t \in [36,50]$ & \textbf{Pattern} \\
\midrule
v1 & $N(100, 25)$ & $N(150, 30)$ & $N(80, 20)$ & Up then down \\
v2 & $N(100, 25)$ & $N(60, 20)$ & $N(140, 30)$ & Down then up \\
v3 & $N(100, 25)$ & $N(100, 50)$ & $N(100, 20)$ & Variance only \\
v4 & $N(80, 20)$ & $N(120, 25)$ & $N(100, 22)$ & Mild fluctuation \\
\bottomrule
\end{tabular}
\end{center}

\paragraph{Pattern 9: Temporary Spike/Dip (p09).}
Temporary demand anomaly followed by return to baseline.

\begin{center}
\small
\begin{tabular}{@{}llllll@{}}
\toprule
\textbf{Variant} & $t \in [1,15]$ & $t \in [16,25]$ & $t \in [26,50]$ & \textbf{Pattern} \\
\midrule
v1 & $N(100, 25)$ & $N(200, 35)$ & $N(100, 25)$ & Temporary surge \\
v2 & $N(100, 25)$ & $N(50, 18)$ & $N(100, 25)$ & Temporary dip \\
v3 & $N(100, 25)$ & $N(250, 40)$ & $N(120, 28)$ & Surge $\to$ new normal \\
v4 & $N(100, 25)$ & $N(40, 15)$ & $N(80, 22)$ & Dip $\to$ partial recovery \\
\bottomrule
\end{tabular}
\end{center}

\paragraph{Pattern 10: Autocorrelated AR(1) (p10).}
Demand follows $D_t = \varphi \cdot D_{t-1} + c + \varepsilon_t$, with $\varepsilon_t \sim N(0, \sigma)$ and $D_0 = 100$. The long-run mean is $c/(1 - \varphi) = 100$ for all variants.

\begin{center}
\small
\begin{tabular}{@{}lllll@{}}
\toprule
\textbf{Variant} & $\boldsymbol{\varphi}$ & $\boldsymbol{c}$ & $\boldsymbol{\sigma}$ & \textbf{Behavior} \\
\midrule
v1 & 0.7 & 30 & 20 & Strong positive autocorrelation \\
v2 & 0.5 & 50 & 25 & Moderate positive \\
v3 & 0.3 & 70 & 30 & Weak positive \\
v4 & $-0.3$ & 130 & 25 & Negative (alternating) \\
\bottomrule
\end{tabular}
\end{center}

\section{Real Instance Specifications} \label{sec:real_inst_spec}

The real portion of our benchmark comprises 600 instances derived from the H\&M Personalized Fashion Recommendations dataset \citep{Kaggle_HM_Recommendations_2022}.
We selected 200 distinct articles (SKUs), each with weekly aggregated sales data spanning 52 weeks: the first 5 weeks serve as historical training data and the remaining 47 weeks form the test trajectory ($T=47$). Each article is evaluated under the same 3 lead time settings (0, 4, and stochastic), and cost ratios are uniformly randomly assigned (with fixed seed) corresponding to $\rho \in \{0.50, 0.80, 0.95\}$ ($p{:}h = 1{:}1$, $4{:}1$, and $19{:}1$), distributed evenly, producing $200 \times 3 = 600$ instances.

\subsection*{Data Preprocessing}

We preprocessed the raw H\&M transaction data as follows. We aggregated individual transactions into weekly sales counts per article for all of 2019, then applied two filters: (i) articles must have positive sales in at least 50 of 52 weeks (eliminating items with frequent stockouts or discontinuations), and (ii) articles must have stable prices, defined as a max-to-min weekly average price ratio of at most 1.2 after excluding up to 4 outlier holiday weeks (eliminating items whose demand is confounded by heavy discounting or price changes). We assume that observed sales is the true demand. From the articles passing both filters, we selected the top 200 by total annual sales volume. For each article, the first 5 weeks (2019-01-07 through 2019-02-04) form the training set and the remaining weeks (2019-02-11 through 2019-12-30) form the test trajectory. Each test instance also includes a text product description drawn from the H\&M article metadata (product name, type, color, garment group, and a natural-language detail description), which the LLM can read but the OR algorithm ignores. The contextual information for each week also includes the actual real dates.

\subsection*{LLM World Knowledge}

A key advantage of LLM-based methods on real instances is their ability to leverage world knowledge about product seasonality. Example~\ref{ex:llm_world_knowledge} provides an illustrative example in which the LLM reasons about swimwear demand using calendar dates and seasonal knowledge to adjust its forecasts throughout the year.

\section{Additional Model Results} \label{sec:additional_model_results}

This appendix presents the full results for Grok 4.1 Fast and GPT-5 Mini, analogous to the Gemini 3 Flash results in \Cref{sec:alg_experiment}.

The qualitative findings for Grok 4.1 Fast are broadly consistent with Gemini. OR$\to$LLM is the best method overall (.514), followed by LLM$\to$OR (.493), mirroring Gemini's overall ranking. The same structural patterns hold across lead time settings: LLM$\to$OR dominates under deterministic lead times, while OR$\to$LLM excels under stochastic lead times where the LLM's ability to detect lost orders is decisive. On the pattern table (which excludes stochastic lead times), LLM$\to$OR achieves the highest synthetic (.707) and real (.539) rewards; OR alone remains best on families where its stationary demand model is well-specified (Stationary IID, Variance Change, Seasonal, Multi Changepoint, Autocorrelated), while LLM-based methods excel on directional demand shifts (Mean Shift Up/Down, Trend Up/Down). The calibration spectrum also mirrors Gemini: responsiveness to $\rho$ decreases with more LLM influence, with ranges of 0.127 (OR), 0.101 (LLM$\to$OR), 0.078 (LLM), and 0.070 (OR$\to$LLM).

GPT-5 Mini is the weakest of the three models overall, with OR$\to$LLM (.461), LLM (.460), and LLM$\to$OR (.458) all performing similarly. The near-parity suggests that GPT-5 Mini tends to ignore the OR recommendation, so that incorporating OR adds little. This is also reflected in its calibration: the LLM and OR$\to$LLM ranges are only 0.037 and 0.042, respectively, indicating near-constant stocking behavior that largely disregards the critical fractile.

\subsection{Grok 4.1 Fast}

\begin{figure}[H]
\centering
\includegraphics[width=\textwidth]{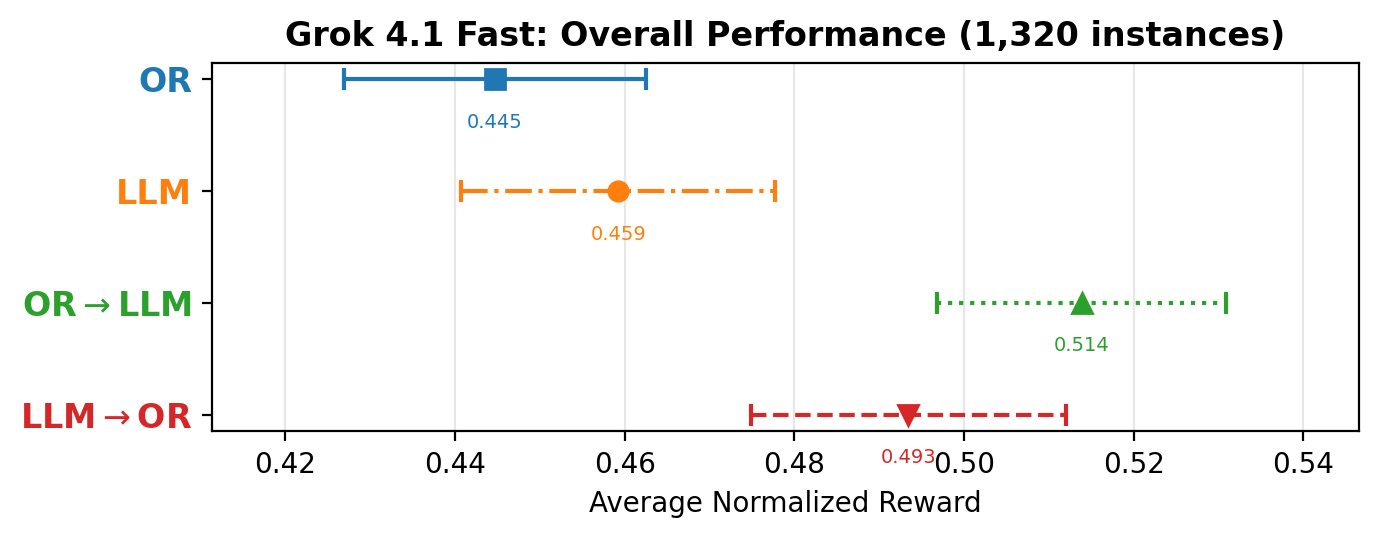}
\caption{Grok 4.1 Fast: overall normalized reward (mean $\pm$ 95\% CI) across all 1{,}320 instances, by method.}
\label{fig:grok_overall}
\end{figure}

\begin{figure}[H]
\centering
\includegraphics[width=\textwidth]{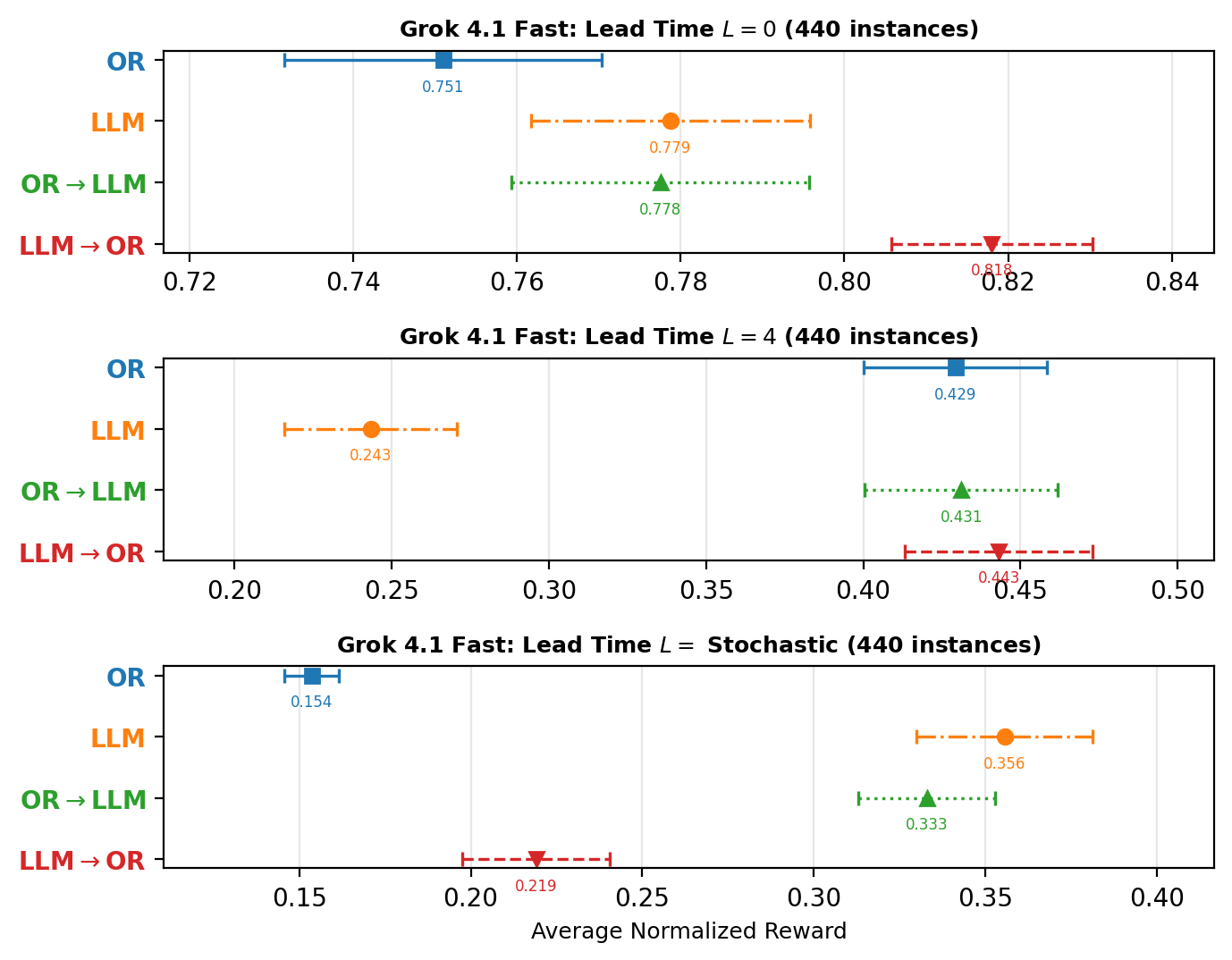}
\caption{Grok 4.1 Fast: normalized reward by lead time setting (440 instances per setting).}
\label{fig:grok_by_leadtime}
\end{figure}

\begin{table}[H]
\centering
\caption{Grok 4.1 Fast: normalized reward by synthetic demand pattern (See \Cref{sec:synthetic_spec} for definitions), each corresponding to 48 instances (8 instances for each of 3 critical fractiles and 2 lead times, with \textbf{stochastic lead time excluded}).  The \emph{All Synthetic} column averages over all 480 instances while the \emph{Real} column averages over 400 real instances (again excluding stochastic lead time). Cell shading per column: \colorbox{cBest}{green} = best, \colorbox{cWorst}{red} = worst. Best method per column in bold.}
\label{tab:grok_by_pattern}
\resizebox{\textwidth}{!}{%
\begin{tabular}{@{}l cccccccccc |c |c@{}}
\toprule
& \rotatebox{70}{Stationary IID} & \rotatebox{70}{Mean $\uparrow$} & \rotatebox{70}{Mean $\downarrow$} & \rotatebox{70}{Trend $\uparrow$} & \rotatebox{70}{Trend $\downarrow$} & \rotatebox{70}{Variance Change} & \rotatebox{70}{Seasonal} & \rotatebox{70}{Multi Changepoint} & \rotatebox{70}{Spike/Dip} & \rotatebox{70}{Autocorr.} & \rotatebox{70}{\emph{All Synthetic}} & \rotatebox{70}{\emph{Real}} \\[-2pt]
& {\scriptsize(p01)} & {\scriptsize(p02)} & {\scriptsize(p03)} & {\scriptsize(p04)} & {\scriptsize(p05)} & {\scriptsize(p06)} & {\scriptsize(p07)} & {\scriptsize(p08)} & {\scriptsize(p09)} & {\scriptsize(p10)} & & \\
\midrule
OR & \cellcolor{cBest} \textbf{.802} & \cellcolor{cWorst!27!cBest} .774 & \cellcolor{cWorst!60!cBest} .557 & \cellcolor{cWorst} .626 & \cellcolor{cWorst} .521 & \cellcolor{cBest} \textbf{.706} & \cellcolor{cBest} \textbf{.676} & \cellcolor{cBest} \textbf{.700} & \cellcolor{cWorst!9!cBest} .651 & \cellcolor{cBest} \textbf{.756} & \cellcolor{cWorst!23!cBest} .677 & \cellcolor{cWorst!49!cBest} .486 \\
LLM & \cellcolor{cWorst} .607 & \cellcolor{cWorst} .686 & \cellcolor{cWorst} .493 & \cellcolor{cWorst!57!cBest} .697 & \cellcolor{cWorst!99!cBest} .522 & \cellcolor{cWorst} .512 & \cellcolor{cWorst} .543 & \cellcolor{cWorst} .573 & \cellcolor{cWorst} .536 & \cellcolor{cWorst} .610 & \cellcolor{cWorst} .578 & \cellcolor{cWorst} .431 \\
OR$\to$LLM & \cellcolor{cWorst!6!cBest} .790 & \cellcolor{cWorst!15!cBest} .788 & \cellcolor{cWorst!32!cBest} .602 & \cellcolor{cBest} \textbf{.789} & \cellcolor{cWorst!42!cBest} .604 & \cellcolor{cWorst!14!cBest} .678 & \cellcolor{cWorst!9!cBest} .665 & \cellcolor{cWorst!8!cBest} .689 & \cellcolor{cWorst!12!cBest} .647 & \cellcolor{cWorst!17!cBest} .732 & \cellcolor{cWorst!7!cBest} .698 & \cellcolor{cWorst!44!cBest} .492 \\
LLM$\to$OR & \cellcolor{cWorst!12!cBest} .779 & \cellcolor{cBest} \textbf{.806} & \cellcolor{cBest} \textbf{.653} & \cellcolor{cWorst!12!cBest} .770 & \cellcolor{cBest} \textbf{.665} & \cellcolor{cWorst!20!cBest} .666 & \cellcolor{cWorst!16!cBest} .654 & \cellcolor{cWorst!3!cBest} .696 & \cellcolor{cBest} \textbf{.662} & \cellcolor{cWorst!25!cBest} .719 & \cellcolor{cBest} \textbf{.707} & \cellcolor{cBest} \textbf{.539} \\
\bottomrule
\end{tabular}%
}
\end{table}

\begin{table}[H]
\centering
\caption{Grok 4.1 Fast: implicit critical fractiles, averaged across the instances with a given value of $\rho$.}
\label{tab:grok_implicit_cr}
\begin{tabular}{lcccc}
\toprule
\textbf{Method} & \textbf{$\rho=0.50$} & \textbf{$\rho=0.80$} & \textbf{$\rho=0.95$} & \textbf{Range} \\
\midrule
OR          & 0.470 & 0.527 & 0.597 & 0.127 \\
LLM                         & 0.747 & 0.796 & 0.825 & 0.078 \\
OR$\to$LLM                  & 0.672 & 0.709 & 0.742 & 0.070 \\
LLM$\to$OR                  & 0.643 & 0.700 & 0.744 & 0.101 \\
\bottomrule
\end{tabular}
\end{table}

\subsection{GPT-5 Mini}

\begin{figure}[H]
\centering
\includegraphics[width=\textwidth]{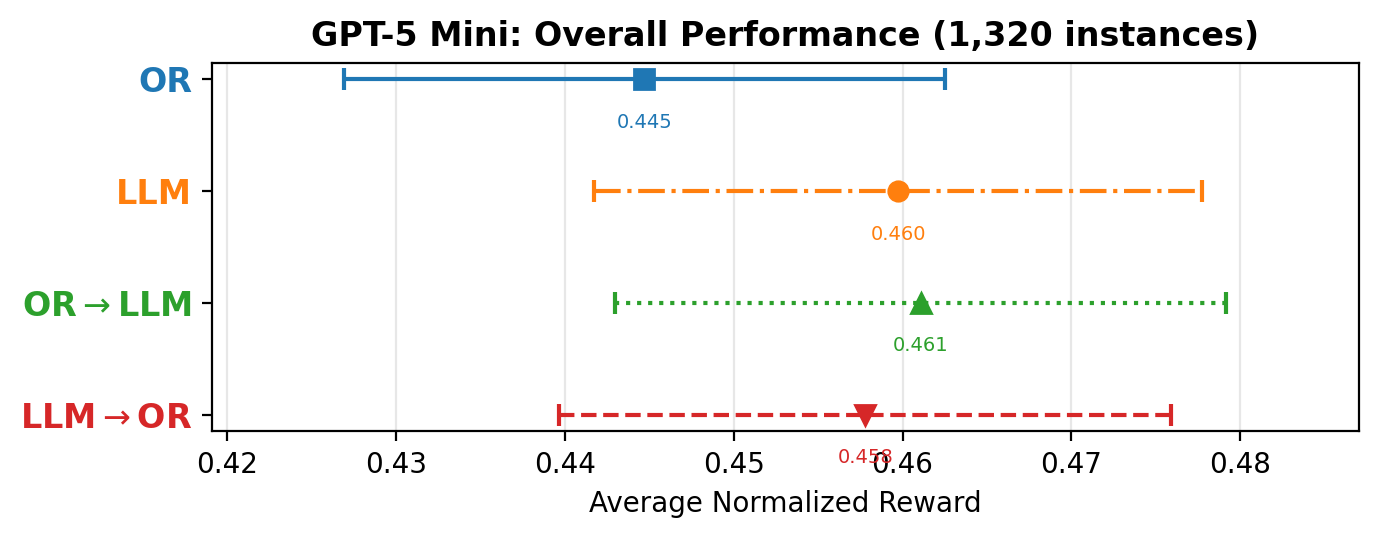}
\caption{GPT-5 Mini: overall normalized reward (mean $\pm$ 95\% CI) across all 1{,}320 instances, by method.}
\label{fig:gpt5m_overall}
\end{figure}

\begin{figure}[H]
\centering
\includegraphics[width=\textwidth]{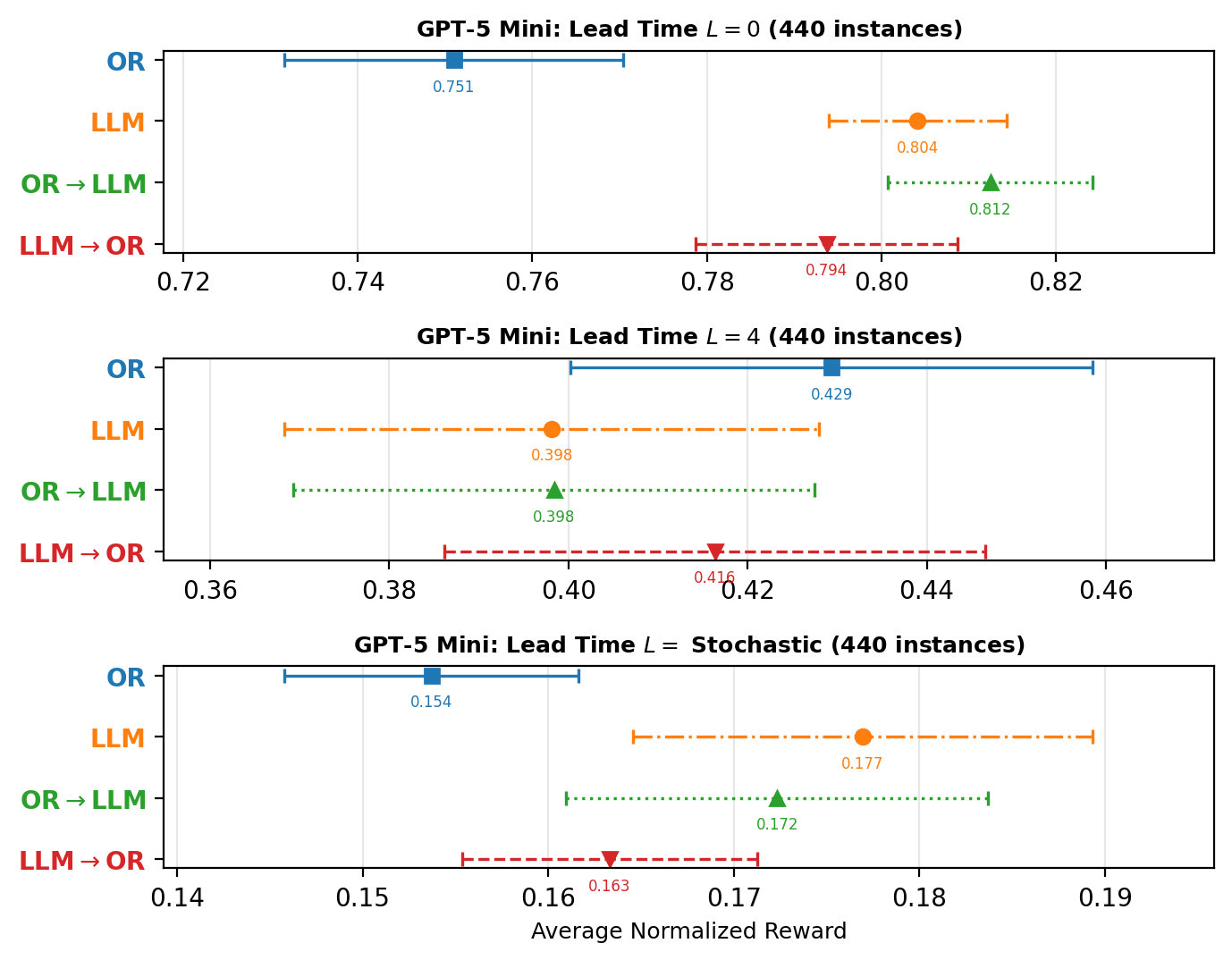}
\caption{GPT-5 Mini: normalized reward by lead time setting (440 instances per setting).}
\label{fig:gpt5m_by_leadtime}
\end{figure}

\begin{table}[H]
\centering
\caption{GPT-5 Mini: normalized reward by synthetic demand pattern (See \Cref{sec:synthetic_spec} for definitions), each corresponding to 48 instances (8 instances for each of 3 critical fractiles and 2 lead times, with \textbf{stochastic lead time excluded}).  The \emph{All Synthetic} column averages over all 480 instances while the \emph{Real} column averages over 400 real instances (again excluding stochastic lead time). Cell shading per column: \colorbox{cBest}{green} = best, \colorbox{cWorst}{red} = worst. Best method per column in bold.}
\label{tab:gpt5m_by_pattern}
\resizebox{\textwidth}{!}{%
\begin{tabular}{@{}l cccccccccc |c |c@{}}
\toprule
& \rotatebox{70}{Stationary IID} & \rotatebox{70}{Mean $\uparrow$} & \rotatebox{70}{Mean $\downarrow$} & \rotatebox{70}{Trend $\uparrow$} & \rotatebox{70}{Trend $\downarrow$} & \rotatebox{70}{Variance Change} & \rotatebox{70}{Seasonal} & \rotatebox{70}{Multi Changepoint} & \rotatebox{70}{Spike/Dip} & \rotatebox{70}{Autocorr.} & \rotatebox{70}{\emph{All Synthetic}} & \rotatebox{70}{\emph{Real}} \\[-2pt]
& {\scriptsize(p01)} & {\scriptsize(p02)} & {\scriptsize(p03)} & {\scriptsize(p04)} & {\scriptsize(p05)} & {\scriptsize(p06)} & {\scriptsize(p07)} & {\scriptsize(p08)} & {\scriptsize(p09)} & {\scriptsize(p10)} & & \\
\midrule
OR & \cellcolor{cBest} \textbf{.802} & \cellcolor{cWorst!33!cBest} .774 & \cellcolor{cWorst} .557 & \cellcolor{cWorst} .626 & \cellcolor{cWorst} .521 & \cellcolor{cBest} \textbf{.706} & \cellcolor{cBest} \textbf{.676} & \cellcolor{cBest} \textbf{.700} & \cellcolor{cBest} \textbf{.651} & \cellcolor{cBest} \textbf{.756} & \cellcolor{cWorst!45!cBest} .677 & \cellcolor{cWorst} .486 \\
LLM & \cellcolor{cWorst} .703 & \cellcolor{cWorst} .743 & \cellcolor{cBest} \textbf{.627} & \cellcolor{cBest} \textbf{.738} & \cellcolor{cBest} \textbf{.651} & \cellcolor{cWorst} .615 & \cellcolor{cWorst} .616 & \cellcolor{cWorst} .649 & \cellcolor{cWorst} .625 & \cellcolor{cWorst} .667 & \cellcolor{cWorst} .664 & \cellcolor{cBest} \textbf{.526} \\
OR$\to$LLM & \cellcolor{cWorst!75!cBest} .728 & \cellcolor{cWorst!94!cBest} .746 & \cellcolor{cWorst!28!cBest} .608 & \cellcolor{cBest} \textbf{.738} & \cellcolor{cWorst!34!cBest} .607 & \cellcolor{cWorst!67!cBest} .644 & \cellcolor{cWorst!66!cBest} .637 & \cellcolor{cWorst!47!cBest} .676 & \cellcolor{cWorst!52!cBest} .638 & \cellcolor{cWorst!53!cBest} .709 & \cellcolor{cWorst!60!cBest} .673 & \cellcolor{cWorst!5!cBest} .524 \\
LLM$\to$OR & \cellcolor{cWorst!8!cBest} .794 & \cellcolor{cBest} \textbf{.789} & \cellcolor{cWorst!64!cBest} .582 & \cellcolor{cWorst!27!cBest} .709 & \cellcolor{cWorst!70!cBest} .560 & \cellcolor{cWorst!16!cBest} .691 & \cellcolor{cWorst!6!cBest} .672 & \cellcolor{cWorst!6!cBest} .697 & \cellcolor{cWorst!46!cBest} .639 & \cellcolor{cWorst!15!cBest} .743 & \cellcolor{cBest} \textbf{.688} & \cellcolor{cWorst!51!cBest} .506 \\
\bottomrule
\end{tabular}%
}
\end{table}

\begin{table}[H]
\centering
\caption{GPT-5 Mini: implicit critical fractiles, averaged across the instances with a given value of $\rho$.}
\label{tab:gpt5m_implicit_cr}
\begin{tabular}{lcccc}
\toprule
\textbf{Method} & \textbf{$\rho=0.50$} & \textbf{$\rho=0.80$} & \textbf{$\rho=0.95$} & \textbf{Range} \\
\midrule
OR          & 0.470 & 0.527 & 0.597 & 0.127 \\
LLM                         & 0.544 & 0.546 & 0.581 & 0.037 \\
OR$\to$LLM                  & 0.549 & 0.550 & 0.591 & 0.042 \\
LLM$\to$OR                  & 0.508 & 0.530 & 0.600 & 0.092 \\
\bottomrule
\end{tabular}
\end{table}

\section{Detailed Performance Tables} \label{sec:detailed_tables}

Tables~\ref{tab:synthetic_detailed_gemini3flash}--\ref{tab:synthetic_detailed_gpt5mini} provide complete per-pattern, per-lead-time, per-critical-fractile breakdowns for all 720 synthetic instances under each of the three LLMs. Table~\ref{tab:real_detailed} provides the corresponding breakdown for real instances.

\begin{table*}
\centering
\caption{Complete performance breakdown for synthetic instances (Gemini 3 Flash). Each cell shows the average normalized reward over 8 instances. Best method per row is in bold.}
\label{tab:synthetic_detailed_gemini3flash}
\small
\begin{tabular}{@{}llrrrrr@{\hspace{0.8em}}|llrrrrr@{}}
\toprule
\textbf{Pat} & \textbf{L} & \textbf{$\rho$} & \textbf{OR} & \textbf{LLM} & \shortstack{\textbf{OR$\to$}\\\textbf{LLM}} & \shortstack{\textbf{LLM$\to$}\\\textbf{OR}} & \textbf{Pat} & \textbf{L} & \textbf{$\rho$} & \textbf{OR} & \textbf{LLM} & \shortstack{\textbf{OR$\to$}\\\textbf{LLM}} & \shortstack{\textbf{LLM$\to$}\\\textbf{OR}} \\
\midrule
p01 & L=0 & 0.50 & \textbf{0.778} & 0.619 & 0.727 & 0.765 & p06 & L=0 & 0.50 & \textbf{0.684} & 0.552 & 0.623 & 0.664 \\
p01 & L=0 & 0.80 & \textbf{0.905} & 0.883 & 0.900 & 0.896 & p06 & L=0 & 0.80 & \textbf{0.856} & 0.838 & 0.845 & 0.850 \\
p01 & L=0 & 0.95 & \textbf{0.971} & 0.966 & 0.971 & 0.967 & p06 & L=0 & 0.95 & \textbf{0.950} & 0.935 & 0.948 & 0.947 \\
p01 & L=4 & 0.50 & \textbf{0.519} & 0.015 & 0.408 & 0.445 & p06 & L=4 & 0.50 & \textbf{0.294} & 0.000 & 0.166 & 0.145 \\
p01 & L=4 & 0.80 & \textbf{0.764} & 0.593 & 0.751 & 0.740 & p06 & L=4 & 0.80 & \textbf{0.637} & 0.414 & 0.614 & 0.563 \\
p01 & L=4 & 0.95 & \textbf{0.876} & 0.842 & 0.870 & 0.870 & p06 & L=4 & 0.95 & \textbf{0.813} & 0.761 & 0.798 & 0.793 \\
p01 & L=S & 0.50 & 0.069 & 0.028 & \textbf{0.219} & 0.105 & p06 & L=S & 0.50 & 0.077 & 0.001 & \textbf{0.137} & 0.096 \\
p01 & L=S & 0.80 & 0.094 & 0.509 & \textbf{0.523} & 0.236 & p06 & L=S & 0.80 & 0.113 & 0.483 & \textbf{0.499} & 0.254 \\
p01 & L=S & 0.95 & 0.122 & \textbf{0.705} & 0.612 & 0.380 & p06 & L=S & 0.95 & 0.143 & \textbf{0.663} & 0.607 & 0.381 \\
p02 & L=0 & 0.50 & 0.710 & 0.760 & 0.803 & \textbf{0.818} & p07 & L=0 & 0.50 & \textbf{0.699} & 0.584 & 0.641 & 0.683 \\
p02 & L=0 & 0.80 & 0.875 & 0.909 & 0.915 & \textbf{0.915} & p07 & L=0 & 0.80 & \textbf{0.866} & 0.852 & 0.860 & 0.862 \\
p02 & L=0 & 0.95 & 0.955 & 0.961 & \textbf{0.964} & 0.964 & p07 & L=0 & 0.95 & \textbf{0.959} & 0.954 & 0.957 & 0.958 \\
p02 & L=4 & 0.50 & \textbf{0.574} & 0.110 & 0.468 & 0.495 & p07 & L=4 & 0.50 & \textbf{0.144} & 0.000 & 0.079 & 0.108 \\
p02 & L=4 & 0.80 & 0.721 & 0.661 & 0.747 & \textbf{0.754} & p07 & L=4 & 0.80 & \textbf{0.576} & 0.367 & 0.536 & 0.560 \\
p02 & L=4 & 0.95 & 0.809 & 0.819 & 0.838 & \textbf{0.853} & p07 & L=4 & 0.95 & \textbf{0.814} & 0.751 & 0.788 & 0.794 \\
p02 & L=S & 0.50 & 0.059 & 0.077 & \textbf{0.246} & 0.093 & p07 & L=S & 0.50 & 0.057 & 0.013 & \textbf{0.091} & 0.056 \\
p02 & L=S & 0.80 & 0.079 & \textbf{0.511} & 0.482 & 0.237 & p07 & L=S & 0.80 & 0.100 & 0.409 & \textbf{0.484} & 0.216 \\
p02 & L=S & 0.95 & 0.098 & \textbf{0.645} & 0.524 & 0.336 & p07 & L=S & 0.95 & 0.142 & \textbf{0.687} & 0.641 & 0.346 \\
p03 & L=0 & 0.50 & 0.574 & 0.542 & 0.609 & \textbf{0.709} & p08 & L=0 & 0.50 & 0.708 & 0.667 & 0.677 & \textbf{0.740} \\
p03 & L=0 & 0.80 & 0.804 & 0.839 & 0.847 & \textbf{0.868} & p08 & L=0 & 0.80 & 0.865 & 0.871 & 0.873 & \textbf{0.877} \\
p03 & L=0 & 0.95 & 0.937 & 0.953 & 0.950 & \textbf{0.953} & p08 & L=0 & 0.95 & 0.950 & 0.950 & 0.953 & \textbf{0.955} \\
p03 & L=4 & 0.50 & 0.033 & 0.000 & 0.023 & \textbf{0.093} & p08 & L=4 & 0.50 & 0.231 & 0.000 & \textbf{0.241} & 0.193 \\
p03 & L=4 & 0.80 & 0.286 & 0.325 & 0.477 & \textbf{0.539} & p08 & L=4 & 0.80 & 0.630 & 0.497 & \textbf{0.639} & 0.625 \\
p03 & L=4 & 0.95 & 0.707 & 0.744 & 0.762 & \textbf{0.775} & p08 & L=4 & 0.95 & \textbf{0.815} & 0.784 & 0.813 & 0.807 \\
p03 & L=S & 0.50 & \textbf{0.105} & 0.004 & 0.092 & 0.051 & p08 & L=S & 0.50 & 0.065 & 0.031 & \textbf{0.189} & 0.131 \\
p03 & L=S & 0.80 & 0.147 & 0.421 & \textbf{0.500} & 0.284 & p08 & L=S & 0.80 & 0.099 & 0.433 & \textbf{0.484} & 0.246 \\
p03 & L=S & 0.95 & 0.185 & \textbf{0.715} & 0.629 & 0.492 & p08 & L=S & 0.95 & 0.131 & \textbf{0.623} & 0.569 & 0.370 \\
p04 & L=0 & 0.50 & 0.506 & 0.737 & 0.759 & \textbf{0.771} & p09 & L=0 & 0.50 & 0.700 & 0.669 & 0.692 & \textbf{0.752} \\
p04 & L=0 & 0.80 & 0.724 & 0.879 & \textbf{0.898} & 0.885 & p09 & L=0 & 0.80 & 0.846 & 0.873 & 0.868 & \textbf{0.881} \\
p04 & L=0 & 0.95 & 0.896 & 0.950 & \textbf{0.952} & 0.947 & p09 & L=0 & 0.95 & 0.944 & 0.950 & \textbf{0.954} & 0.952 \\
p04 & L=4 & 0.50 & 0.408 & 0.390 & 0.431 & \textbf{0.569} & p09 & L=4 & 0.50 & \textbf{0.105} & 0.000 & 0.026 & 0.073 \\
p04 & L=4 & 0.80 & 0.556 & 0.704 & 0.719 & \textbf{0.742} & p09 & L=4 & 0.80 & 0.542 & 0.341 & 0.561 & \textbf{0.572} \\
p04 & L=4 & 0.95 & 0.670 & \textbf{0.835} & 0.806 & 0.830 & p09 & L=4 & 0.95 & 0.769 & 0.737 & 0.766 & \textbf{0.779} \\
p04 & L=S & 0.50 & 0.061 & 0.189 & \textbf{0.284} & 0.146 & p09 & L=S & 0.50 & 0.065 & 0.044 & \textbf{0.168} & 0.099 \\
p04 & L=S & 0.80 & 0.083 & \textbf{0.487} & 0.445 & 0.248 & p09 & L=S & 0.80 & 0.104 & 0.441 & \textbf{0.450} & 0.256 \\
p04 & L=S & 0.95 & 0.106 & \textbf{0.595} & 0.482 & 0.314 & p09 & L=S & 0.95 & 0.133 & 0.567 & \textbf{0.571} & 0.308 \\
p05 & L=0 & 0.50 & 0.514 & 0.689 & 0.604 & \textbf{0.766} & p10 & L=0 & 0.50 & \textbf{0.770} & 0.674 & 0.714 & 0.755 \\
p05 & L=0 & 0.80 & 0.787 & 0.888 & 0.850 & \textbf{0.894} & p10 & L=0 & 0.80 & \textbf{0.900} & 0.889 & 0.895 & 0.890 \\
p05 & L=0 & 0.95 & 0.937 & 0.962 & 0.958 & \textbf{0.967} & p10 & L=0 & 0.95 & 0.968 & 0.964 & \textbf{0.968} & 0.959 \\
p05 & L=4 & 0.50 & 0.000 & 0.000 & 0.000 & \textbf{0.046} & p10 & L=4 & 0.50 & \textbf{0.369} & 0.010 & 0.192 & 0.242 \\
p05 & L=4 & 0.80 & 0.231 & 0.310 & 0.449 & \textbf{0.513} & p10 & L=4 & 0.80 & \textbf{0.688} & 0.487 & 0.656 & 0.645 \\
p05 & L=4 & 0.95 & 0.655 & 0.730 & 0.739 & \textbf{0.754} & p10 & L=4 & 0.95 & \textbf{0.840} & 0.796 & 0.821 & 0.813 \\
p05 & L=S & 0.50 & \textbf{0.100} & 0.000 & 0.043 & 0.055 & p10 & L=S & 0.50 & 0.077 & 0.057 & \textbf{0.217} & 0.099 \\
p05 & L=S & 0.80 & 0.151 & 0.418 & \textbf{0.459} & 0.337 & p10 & L=S & 0.80 & 0.108 & \textbf{0.491} & 0.485 & 0.282 \\
p05 & L=S & 0.95 & 0.197 & \textbf{0.671} & 0.591 & 0.417 & p10 & L=S & 0.95 & 0.132 & \textbf{0.676} & 0.598 & 0.431 \\
\bottomrule
\end{tabular}
\vspace{0.5em}

\raggedright
\textbf{Pattern abbreviations:} p01=Stationary IID, p02=Mean Increase, p03=Mean Decrease, p04=Increasing Trend, p05=Decreasing Trend, p06=Variance Change, p07=Seasonal, p08=Multi Changepoint, p09=Temp Spike/Dip, p10=Autocorrelated
\end{table*}

\begin{table*}
\centering
\caption{Complete performance breakdown for synthetic instances (Grok 4.1 Fast). Each cell shows the average normalized reward over 8 instances. Best method per row is in bold.}
\label{tab:synthetic_detailed_grok41}
\small
\begin{tabular}{@{}llrrrrr@{\hspace{0.8em}}|llrrrrr@{}}
\toprule
\textbf{Pat} & \textbf{L} & \textbf{$\rho$} & \textbf{OR} & \textbf{LLM} & \shortstack{\textbf{OR$\to$}\\\textbf{LLM}} & \shortstack{\textbf{LLM$\to$}\\\textbf{OR}} & \textbf{Pat} & \textbf{L} & \textbf{$\rho$} & \textbf{OR} & \textbf{LLM} & \shortstack{\textbf{OR$\to$}\\\textbf{LLM}} & \shortstack{\textbf{LLM$\to$}\\\textbf{OR}} \\
\midrule
p01 & L=0 & 0.50 & \textbf{0.778} & 0.716 & 0.760 & 0.766 & p06 & L=0 & 0.50 & \textbf{0.684} & 0.594 & 0.654 & 0.670 \\
p01 & L=0 & 0.80 & \textbf{0.905} & 0.896 & 0.901 & 0.898 & p06 & L=0 & 0.80 & \textbf{0.856} & 0.844 & 0.852 & 0.848 \\
p01 & L=0 & 0.95 & \textbf{0.971} & 0.957 & 0.969 & 0.960 & p06 & L=0 & 0.95 & 0.950 & 0.934 & \textbf{0.952} & 0.944 \\
p01 & L=4 & 0.50 & \textbf{0.519} & 0.022 & 0.472 & 0.447 & p06 & L=4 & 0.50 & \textbf{0.294} & 0.004 & 0.180 & 0.180 \\
p01 & L=4 & 0.80 & \textbf{0.764} & 0.298 & 0.762 & 0.736 & p06 & L=4 & 0.80 & \textbf{0.637} & 0.151 & 0.623 & 0.562 \\
p01 & L=4 & 0.95 & \textbf{0.876} & 0.752 & 0.876 & 0.865 & p06 & L=4 & 0.95 & \textbf{0.813} & 0.543 & 0.808 & 0.792 \\
p01 & L=S & 0.50 & 0.069 & \textbf{0.107} & 0.099 & 0.000 & p06 & L=S & 0.50 & 0.077 & 0.070 & \textbf{0.110} & 0.000 \\
p01 & L=S & 0.80 & 0.094 & \textbf{0.512} & 0.444 & 0.132 & p06 & L=S & 0.80 & 0.113 & \textbf{0.498} & 0.388 & 0.117 \\
p01 & L=S & 0.95 & 0.122 & \textbf{0.707} & 0.637 & 0.548 & p06 & L=S & 0.95 & 0.143 & \textbf{0.703} & 0.563 & 0.486 \\
p02 & L=0 & 0.50 & 0.710 & 0.790 & 0.795 & \textbf{0.820} & p07 & L=0 & 0.50 & \textbf{0.699} & 0.647 & 0.678 & 0.684 \\
p02 & L=0 & 0.80 & 0.875 & 0.904 & 0.908 & \textbf{0.914} & p07 & L=0 & 0.80 & \textbf{0.866} & 0.857 & 0.860 & 0.860 \\
p02 & L=0 & 0.95 & 0.955 & 0.951 & 0.960 & \textbf{0.963} & p07 & L=0 & 0.95 & 0.959 & 0.943 & \textbf{0.959} & 0.951 \\
p02 & L=4 & 0.50 & \textbf{0.574} & 0.207 & 0.465 & 0.521 & p07 & L=4 & 0.50 & \textbf{0.144} & 0.000 & 0.119 & 0.094 \\
p02 & L=4 & 0.80 & 0.721 & 0.462 & 0.750 & \textbf{0.760} & p07 & L=4 & 0.80 & \textbf{0.576} & 0.152 & 0.560 & 0.549 \\
p02 & L=4 & 0.95 & 0.809 & 0.802 & 0.848 & \textbf{0.856} & p07 & L=4 & 0.95 & \textbf{0.814} & 0.656 & 0.812 & 0.788 \\
p02 & L=S & 0.50 & 0.059 & \textbf{0.178} & 0.169 & 0.000 & p07 & L=S & 0.50 & \textbf{0.057} & 0.012 & 0.051 & 0.023 \\
p02 & L=S & 0.80 & 0.079 & \textbf{0.499} & 0.457 & 0.186 & p07 & L=S & 0.80 & 0.100 & 0.399 & \textbf{0.412} & 0.210 \\
p02 & L=S & 0.95 & 0.098 & \textbf{0.665} & 0.562 & 0.430 & p07 & L=S & 0.95 & 0.142 & \textbf{0.683} & 0.571 & 0.505 \\
p03 & L=0 & 0.50 & 0.574 & 0.610 & 0.641 & \textbf{0.700} & p08 & L=0 & 0.50 & 0.708 & 0.680 & 0.716 & \textbf{0.744} \\
p03 & L=0 & 0.80 & 0.804 & 0.859 & 0.851 & \textbf{0.873} & p08 & L=0 & 0.80 & 0.865 & 0.860 & 0.869 & \textbf{0.878} \\
p03 & L=0 & 0.95 & 0.937 & \textbf{0.952} & 0.949 & 0.948 & p08 & L=0 & 0.95 & 0.950 & 0.939 & \textbf{0.952} & 0.952 \\
p03 & L=4 & 0.50 & 0.033 & 0.000 & 0.032 & \textbf{0.079} & p08 & L=4 & 0.50 & \textbf{0.231} & 0.008 & 0.165 & 0.172 \\
p03 & L=4 & 0.80 & 0.286 & 0.076 & 0.400 & \textbf{0.539} & p08 & L=4 & 0.80 & \textbf{0.630} & 0.261 & 0.618 & 0.618 \\
p03 & L=4 & 0.95 & 0.707 & 0.459 & 0.739 & \textbf{0.778} & p08 & L=4 & 0.95 & \textbf{0.815} & 0.689 & 0.815 & 0.812 \\
p03 & L=S & 0.50 & \textbf{0.105} & 0.009 & 0.095 & 0.010 & p08 & L=S & 0.50 & \textbf{0.065} & 0.032 & 0.059 & 0.010 \\
p03 & L=S & 0.80 & 0.147 & 0.388 & \textbf{0.403} & 0.313 & p08 & L=S & 0.80 & 0.099 & \textbf{0.407} & 0.373 & 0.275 \\
p03 & L=S & 0.95 & 0.185 & \textbf{0.723} & 0.543 & 0.555 & p08 & L=S & 0.95 & 0.131 & \textbf{0.671} & 0.550 & 0.525 \\
p04 & L=0 & 0.50 & 0.506 & 0.753 & \textbf{0.765} & 0.752 & p09 & L=0 & 0.50 & 0.700 & 0.698 & 0.706 & \textbf{0.749} \\
p04 & L=0 & 0.80 & 0.724 & 0.889 & \textbf{0.892} & 0.877 & p09 & L=0 & 0.80 & 0.846 & 0.869 & 0.869 & \textbf{0.879} \\
p04 & L=0 & 0.95 & 0.896 & \textbf{0.951} & 0.949 & 0.945 & p09 & L=0 & 0.95 & 0.944 & 0.942 & \textbf{0.954} & 0.949 \\
p04 & L=4 & 0.50 & 0.408 & 0.202 & \textbf{0.558} & 0.549 & p09 & L=4 & 0.50 & \textbf{0.105} & 0.000 & 0.024 & 0.049 \\
p04 & L=4 & 0.80 & 0.556 & 0.566 & \textbf{0.744} & 0.689 & p09 & L=4 & 0.80 & 0.542 & 0.150 & 0.558 & \textbf{0.574} \\
p04 & L=4 & 0.95 & 0.670 & 0.819 & \textbf{0.825} & 0.807 & p09 & L=4 & 0.95 & 0.769 & 0.557 & 0.773 & \textbf{0.773} \\
p04 & L=S & 0.50 & 0.061 & 0.158 & \textbf{0.242} & 0.000 & p09 & L=S & 0.50 & \textbf{0.065} & 0.025 & 0.053 & 0.006 \\
p04 & L=S & 0.80 & 0.083 & \textbf{0.452} & 0.430 & 0.160 & p09 & L=S & 0.80 & 0.104 & \textbf{0.392} & 0.381 & 0.232 \\
p04 & L=S & 0.95 & 0.106 & \textbf{0.651} & 0.516 & 0.465 & p09 & L=S & 0.95 & 0.133 & \textbf{0.627} & 0.529 & 0.498 \\
p05 & L=0 & 0.50 & 0.514 & 0.714 & 0.673 & \textbf{0.753} & p10 & L=0 & 0.50 & \textbf{0.770} & 0.717 & 0.754 & 0.764 \\
p05 & L=0 & 0.80 & 0.787 & \textbf{0.897} & 0.868 & 0.894 & p10 & L=0 & 0.80 & \textbf{0.900} & 0.890 & 0.898 & 0.891 \\
p05 & L=0 & 0.95 & 0.937 & 0.957 & 0.960 & \textbf{0.966} & p10 & L=0 & 0.95 & \textbf{0.968} & 0.952 & 0.965 & 0.960 \\
p05 & L=4 & 0.50 & 0.000 & 0.000 & 0.000 & \textbf{0.063} & p10 & L=4 & 0.50 & \textbf{0.369} & 0.067 & 0.276 & 0.238 \\
p05 & L=4 & 0.80 & 0.231 & 0.053 & 0.395 & \textbf{0.541} & p10 & L=4 & 0.80 & \textbf{0.688} & 0.362 & 0.665 & 0.645 \\
p05 & L=4 & 0.95 & 0.655 & 0.511 & 0.730 & \textbf{0.770} & p10 & L=4 & 0.95 & \textbf{0.840} & 0.670 & 0.831 & 0.815 \\
p05 & L=S & 0.50 & \textbf{0.100} & 0.012 & 0.058 & 0.000 & p10 & L=S & 0.50 & 0.077 & 0.079 & \textbf{0.153} & 0.026 \\
p05 & L=S & 0.80 & 0.151 & \textbf{0.416} & 0.359 & 0.191 & p10 & L=S & 0.80 & 0.108 & \textbf{0.483} & 0.438 & 0.195 \\
p05 & L=S & 0.95 & 0.197 & \textbf{0.651} & 0.556 & 0.510 & p10 & L=S & 0.95 & 0.132 & \textbf{0.666} & 0.524 & 0.486 \\
\bottomrule
\end{tabular}
\vspace{0.5em}

\raggedright
\textbf{Pattern abbreviations:} p01=Stationary IID, p02=Mean Increase, p03=Mean Decrease, p04=Increasing Trend, p05=Decreasing Trend, p06=Variance Change, p07=Seasonal, p08=Multi Changepoint, p09=Temp Spike/Dip, p10=Autocorrelated
\end{table*}

\begin{table*}
\centering
\caption{Complete performance breakdown for synthetic instances (GPT-5 Mini). Each cell shows the average normalized reward over 8 instances. Best method per row is in bold.}
\label{tab:synthetic_detailed_gpt5mini}
\small
\begin{tabular}{@{}llrrrrr@{\hspace{0.8em}}|llrrrrr@{}}
\toprule
\textbf{Pat} & \textbf{L} & \textbf{$\rho$} & \textbf{OR} & \textbf{LLM} & \shortstack{\textbf{OR$\to$}\\\textbf{LLM}} & \shortstack{\textbf{LLM$\to$}\\\textbf{OR}} & \textbf{Pat} & \textbf{L} & \textbf{$\rho$} & \textbf{OR} & \textbf{LLM} & \shortstack{\textbf{OR$\to$}\\\textbf{LLM}} & \shortstack{\textbf{LLM$\to$}\\\textbf{OR}} \\
\midrule
p01 & L=0 & 0.50 & \textbf{0.778} & 0.751 & 0.766 & 0.766 & p06 & L=0 & 0.50 & \textbf{0.684} & 0.664 & 0.676 & 0.679 \\
p01 & L=0 & 0.80 & \textbf{0.905} & 0.866 & 0.887 & 0.902 & p06 & L=0 & 0.80 & \textbf{0.856} & 0.816 & 0.840 & 0.856 \\
p01 & L=0 & 0.95 & \textbf{0.971} & 0.923 & 0.956 & 0.970 & p06 & L=0 & 0.95 & \textbf{0.950} & 0.883 & 0.921 & 0.946 \\
p01 & L=4 & 0.50 & \textbf{0.519} & 0.187 & 0.229 & 0.493 & p06 & L=4 & 0.50 & \textbf{0.294} & 0.053 & 0.090 & 0.238 \\
p01 & L=4 & 0.80 & \textbf{0.764} & 0.682 & 0.702 & 0.761 & p06 & L=4 & 0.80 & \textbf{0.637} & 0.528 & 0.588 & 0.614 \\
p01 & L=4 & 0.95 & \textbf{0.876} & 0.809 & 0.831 & 0.875 & p06 & L=4 & 0.95 & 0.813 & 0.744 & 0.751 & \textbf{0.815} \\
p01 & L=S & 0.50 & 0.069 & 0.031 & 0.066 & \textbf{0.093} & p06 & L=S & 0.50 & 0.077 & 0.060 & 0.085 & \textbf{0.105} \\
p01 & L=S & 0.80 & 0.094 & \textbf{0.189} & 0.163 & 0.145 & p06 & L=S & 0.80 & 0.113 & \textbf{0.232} & 0.187 & 0.172 \\
p01 & L=S & 0.95 & 0.122 & \textbf{0.257} & 0.223 & 0.179 & p06 & L=S & 0.95 & 0.143 & \textbf{0.338} & 0.256 & 0.213 \\
p02 & L=0 & 0.50 & 0.710 & 0.816 & \textbf{0.819} & 0.805 & p07 & L=0 & 0.50 & 0.699 & 0.693 & 0.696 & \textbf{0.704} \\
p02 & L=0 & 0.80 & 0.875 & 0.887 & 0.899 & \textbf{0.909} & p07 & L=0 & 0.80 & \textbf{0.866} & 0.838 & 0.845 & 0.861 \\
p02 & L=0 & 0.95 & 0.955 & 0.919 & 0.952 & \textbf{0.961} & p07 & L=0 & 0.95 & \textbf{0.959} & 0.902 & 0.946 & 0.956 \\
p02 & L=4 & 0.50 & \textbf{0.574} & 0.323 & 0.336 & 0.490 & p07 & L=4 & 0.50 & \textbf{0.144} & 0.018 & 0.082 & 0.133 \\
p02 & L=4 & 0.80 & 0.721 & 0.714 & 0.689 & \textbf{0.739} & p07 & L=4 & 0.80 & \textbf{0.576} & 0.501 & 0.507 & 0.572 \\
p02 & L=4 & 0.95 & 0.809 & 0.802 & 0.782 & \textbf{0.832} & p07 & L=4 & 0.95 & \textbf{0.814} & 0.742 & 0.742 & 0.808 \\
p02 & L=S & 0.50 & 0.059 & \textbf{0.085} & 0.070 & 0.084 & p07 & L=S & 0.50 & 0.057 & 0.030 & 0.019 & \textbf{0.091} \\
p02 & L=S & 0.80 & 0.079 & 0.132 & 0.125 & \textbf{0.133} & p07 & L=S & 0.80 & 0.100 & \textbf{0.246} & 0.171 & 0.155 \\
p02 & L=S & 0.95 & 0.098 & 0.157 & \textbf{0.157} & 0.141 & p07 & L=S & 0.95 & 0.142 & \textbf{0.360} & 0.211 & 0.220 \\
p03 & L=0 & 0.50 & 0.574 & \textbf{0.715} & 0.698 & 0.663 & p08 & L=0 & 0.50 & 0.708 & 0.739 & \textbf{0.744} & 0.721 \\
p03 & L=0 & 0.80 & 0.804 & 0.851 & \textbf{0.865} & 0.845 & p08 & L=0 & 0.80 & 0.865 & 0.858 & \textbf{0.870} & 0.870 \\
p03 & L=0 & 0.95 & 0.937 & 0.911 & 0.945 & \textbf{0.951} & p08 & L=0 & 0.95 & 0.950 & 0.906 & 0.941 & \textbf{0.952} \\
p03 & L=4 & 0.50 & 0.033 & 0.030 & \textbf{0.047} & 0.032 & p08 & L=4 & 0.50 & \textbf{0.231} & 0.023 & 0.105 & 0.189 \\
p03 & L=4 & 0.80 & 0.286 & \textbf{0.509} & 0.345 & 0.290 & p08 & L=4 & 0.80 & 0.630 & 0.596 & 0.610 & \textbf{0.633} \\
p03 & L=4 & 0.95 & 0.707 & \textbf{0.748} & 0.746 & 0.712 & p08 & L=4 & 0.95 & \textbf{0.815} & 0.773 & 0.784 & 0.814 \\
p03 & L=S & 0.50 & 0.105 & 0.019 & 0.057 & \textbf{0.122} & p08 & L=S & 0.50 & 0.065 & 0.057 & 0.095 & \textbf{0.105} \\
p03 & L=S & 0.80 & 0.147 & 0.252 & \textbf{0.257} & 0.218 & p08 & L=S & 0.80 & 0.099 & \textbf{0.199} & 0.171 & 0.161 \\
p03 & L=S & 0.95 & 0.185 & \textbf{0.395} & 0.330 & 0.260 & p08 & L=S & 0.95 & 0.131 & \textbf{0.307} & 0.179 & 0.216 \\
p04 & L=0 & 0.50 & 0.506 & \textbf{0.771} & 0.761 & 0.644 & p09 & L=0 & 0.50 & 0.700 & 0.734 & \textbf{0.734} & 0.722 \\
p04 & L=0 & 0.80 & 0.724 & \textbf{0.854} & 0.853 & 0.838 & p09 & L=0 & 0.80 & 0.846 & 0.849 & \textbf{0.869} & 0.865 \\
p04 & L=0 & 0.95 & 0.896 & 0.892 & 0.912 & \textbf{0.938} & p09 & L=0 & 0.95 & 0.944 & 0.904 & 0.938 & \textbf{0.945} \\
p04 & L=4 & 0.50 & 0.408 & 0.479 & \textbf{0.480} & 0.460 & p09 & L=4 & 0.50 & \textbf{0.105} & 0.000 & 0.019 & 0.025 \\
p04 & L=4 & 0.80 & 0.556 & 0.661 & \textbf{0.668} & 0.639 & p09 & L=4 & 0.80 & \textbf{0.542} & 0.524 & 0.524 & 0.505 \\
p04 & L=4 & 0.95 & 0.670 & \textbf{0.773} & 0.756 & 0.733 & p09 & L=4 & 0.95 & 0.769 & 0.741 & 0.741 & \textbf{0.773} \\
p04 & L=S & 0.50 & 0.061 & \textbf{0.114} & 0.086 & 0.108 & p09 & L=S & 0.50 & 0.065 & 0.042 & 0.075 & \textbf{0.083} \\
p04 & L=S & 0.80 & 0.083 & 0.168 & \textbf{0.173} & 0.165 & p09 & L=S & 0.80 & 0.104 & \textbf{0.204} & 0.197 & 0.149 \\
p04 & L=S & 0.95 & 0.106 & \textbf{0.238} & 0.184 & 0.178 & p09 & L=S & 0.95 & 0.133 & 0.308 & \textbf{0.322} & 0.201 \\
p05 & L=0 & 0.50 & 0.514 & \textbf{0.786} & 0.758 & 0.671 & p10 & L=0 & 0.50 & 0.770 & 0.762 & \textbf{0.771} & 0.762 \\
p05 & L=0 & 0.80 & 0.787 & 0.892 & \textbf{0.900} & 0.847 & p10 & L=0 & 0.80 & \textbf{0.900} & 0.867 & 0.890 & 0.896 \\
p05 & L=0 & 0.95 & 0.937 & 0.943 & \textbf{0.962} & 0.943 & p10 & L=0 & 0.95 & \textbf{0.968} & 0.920 & 0.958 & 0.966 \\
p05 & L=4 & 0.50 & 0.000 & \textbf{0.012} & 0.007 & 0.000 & p10 & L=4 & 0.50 & \textbf{0.369} & 0.075 & 0.233 & 0.322 \\
p05 & L=4 & 0.80 & 0.231 & \textbf{0.514} & 0.309 & 0.242 & p10 & L=4 & 0.80 & \textbf{0.688} & 0.604 & 0.612 & 0.678 \\
p05 & L=4 & 0.95 & 0.655 & \textbf{0.760} & 0.707 & 0.657 & p10 & L=4 & 0.95 & \textbf{0.840} & 0.776 & 0.790 & 0.832 \\
p05 & L=S & 0.50 & 0.100 & 0.042 & 0.077 & \textbf{0.127} & p10 & L=S & 0.50 & 0.077 & 0.052 & 0.070 & \textbf{0.096} \\
p05 & L=S & 0.80 & 0.151 & \textbf{0.258} & 0.192 & 0.219 & p10 & L=S & 0.80 & 0.108 & \textbf{0.223} & 0.164 & 0.149 \\
p05 & L=S & 0.95 & 0.197 & \textbf{0.342} & 0.341 & 0.300 & p10 & L=S & 0.95 & 0.132 & \textbf{0.290} & 0.223 & 0.191 \\
\bottomrule
\end{tabular}
\vspace{0.5em}

\raggedright
\textbf{Pattern abbreviations:} p01=Stationary IID, p02=Mean Increase, p03=Mean Decrease, p04=Increasing Trend, p05=Decreasing Trend, p06=Variance Change, p07=Seasonal, p08=Multi Changepoint, p09=Temp Spike/Dip, p10=Autocorrelated
\end{table*}

\begin{table}
\centering
\caption{Performance breakdown for real instances (H\&M fashion products) across all three LLMs. Each group of 3 rows (same lead time) averages over all 200 articles. Best method per row is in bold.}
\label{tab:real_detailed}
\begin{tabular}{llrrrr}
\toprule
\textbf{Lead Time} & \textbf{$\rho$} & \textbf{OR} & \textbf{LLM} & \shortstack{\textbf{OR$\to$}\\\textbf{LLM}} & \shortstack{\textbf{LLM$\to$}\\\textbf{OR}} \\
\midrule
\multicolumn{6}{l}{\textit{Gemini 3 Flash}} \\
L=0          & 0.50 & 0.433 & 0.546 & 0.554 & \textbf{0.642} \\
L=0          & 0.80 & 0.698 & \textbf{0.779} & 0.770 & 0.769 \\
L=0          & 0.95 & 0.882 & 0.889 & \textbf{0.909} & 0.901 \\
L=4          & 0.50 & \textbf{0.046} & 0.000 & 0.016 & 0.020 \\
L=4          & 0.80 & 0.255 & 0.117 & \textbf{0.293} & 0.275 \\
L=4          & 0.95 & 0.647 & 0.629 & \textbf{0.667} & 0.666 \\
Stochastic   & 0.50 & \textbf{0.152} & 0.001 & 0.070 & 0.055 \\
Stochastic   & 0.80 & 0.222 & 0.279 & \textbf{0.385} & 0.236 \\
Stochastic   & 0.95 & 0.250 & \textbf{0.623} & 0.578 & 0.404 \\
\midrule
\multicolumn{6}{l}{\textit{Grok 4.1 Fast}} \\
L=0          & 0.50 & 0.433 & 0.501 & 0.452 & \textbf{0.628} \\
L=0          & 0.80 & 0.698 & 0.727 & 0.722 & \textbf{0.777} \\
L=0          & 0.95 & 0.882 & 0.892 & 0.902 & \textbf{0.903} \\
L=4          & 0.50 & \textbf{0.046} & 0.000 & 0.012 & 0.024 \\
L=4          & 0.80 & 0.255 & 0.065 & 0.246 & \textbf{0.286} \\
L=4          & 0.95 & 0.647 & 0.417 & 0.665 & \textbf{0.668} \\
Stochastic   & 0.50 & \textbf{0.152} & 0.021 & 0.058 & 0.006 \\
Stochastic   & 0.80 & 0.222 & 0.248 & \textbf{0.312} & 0.136 \\
Stochastic   & 0.95 & 0.250 & \textbf{0.622} & 0.517 & 0.431 \\
\midrule
\multicolumn{6}{l}{\textit{GPT-5 Mini}} \\
L=0          & 0.50 & 0.433 & \textbf{0.659} & 0.631 & 0.556 \\
L=0          & 0.80 & 0.698 & 0.778 & \textbf{0.779} & 0.744 \\
L=0          & 0.95 & 0.882 & 0.857 & 0.884 & \textbf{0.895} \\
L=4          & 0.50 & \textbf{0.046} & 0.004 & 0.012 & 0.014 \\
L=4          & 0.80 & 0.255 & \textbf{0.268} & 0.266 & 0.225 \\
L=4          & 0.95 & \textbf{0.647} & 0.644 & 0.624 & 0.643 \\
Stochastic   & 0.50 & \textbf{0.152} & 0.039 & 0.061 & 0.118 \\
Stochastic   & 0.80 & \textbf{0.222} & 0.166 & 0.199 & 0.167 \\
Stochastic   & 0.95 & 0.250 & \textbf{0.275} & 0.274 & 0.214 \\
\bottomrule
\end{tabular}
\end{table}

\section{LLM and Human Reasoning Anecdotes} \label{sec:llm_reasoning_anecdotes}

This appendix presents representative examples of LLM reasoning from the benchmark experiments, illustrating key capabilities (and one downside) that distinguish LLM-based decision-making from pure OR methods. \Cref{tab:human_reasoning_examples} further shows examples of human guidance messages from the classroom experiment, illustrating how humans add value through the same contextual reasoning channels.

\begin{example}[LLM detects lost orders] \label{ex:llm_lost_orders}
In instance 108775044 (an H\&M strap top) under stochastic lead times, at period 10, the OR algorithm recommends ordering 0 units because it counts 936 units as pipeline inventory. The Gemini-based OR$\to$LLM hybrid detects this error and overrides:

\textit{``Critically, several shipments (P1, P3, P4) are confirmed lost, as they never arrived despite being 4+ periods old\ldots\ The OR recommends 0 because it counts the full 936 units as pipeline inventory. I must ignore the OR recommendation because the pipeline is corrupted by lost orders. With an effective pipeline of 390 units and demand averaging $\sim$230 per period\ldots\ Current effective inventory = 390. Shortfall $\approx$ 300 units.''}

The LLM recalculates effective inventory (936 $\to$ 390 units) and overrides OR's recommendation of 0, ordering 210 units instead.
\end{example}

\begin{example}[LLM overfits] \label{ex:llm_overfits}
While LLMs excel at detecting genuine demand shifts, they can also overfit to natural variance in stationary settings. Consider a stationary IID instance drawn from $N(100, 25^2)$ (actual mean: 102.4, std: 22.9). In Period~9, demand was 145---a high draw but well within two standard deviations. Gemini interpreted this as a regime shift, reasoning at Period~10: \textit{``Demand analysis shows an upward trend over the last three periods: P7~(122), P8~(121), and P9~(145), indicating the mean has shifted from the previous $\sim$100 range to approximately 130+.''} It created a carry-over insight: \textit{``Demand regime shift at Period~9: Demand increased from an average of $\sim$100 to 145 (+45\%), potentially indicating a new higher demand baseline,''} and forecast Period~10 demand at 145--150.

However, subsequent periods (P10:~115, P11:~89, P12:~102, P13:~73, P14:~63) all fell near the true mean of~100, confirming no regime shift occurred. Despite this, Gemini continued pattern-seeking, updating its carry-over insights at Period~12 (\textit{``Demand regime stabilized further: average demand from Period~9--12 is $\sim$113, adjusting previous Period~10 insight down from 145''}), Period~14 (\textit{``Recent demand (P12--P14) shows a notable downward trend (avg~79) compared to the previous regime (P9--P11 avg~116), suggesting a cooling of the earlier demand spike''}), and Period~24 (\textit{``Demand stability extended: Period~20--24 demand mean is 105.0 with a range of 102--107, confirming a high-stability regime following earlier volatility''}). This illustrates how LLMs can impose narratives on random noise, a weakness that helps explain why pure OR outperforms LLM on stationary IID instances (OR:~0.78 vs.\ LLM:~0.62 at $\rho=0.50$, $L=0$; see Table~\ref{tab:synthetic_detailed_gemini3flash}).
\end{example}

\begin{example}[LLM detects synthetic demand shift] \label{ex:llm_demand_shift}
Consider a mean increase instance (pattern p02) where demand shifts from 100 to 200 units at period 16. At period 17, the LLM reasons:

\textit{``The recent demand in Period 16 spiked massively to 273 units, far exceeding the previous range of 46--137. This suggests a potential high-demand outlier or new seasonal regime\ldots\ Before the P16 spike, the demand average was approximately 100.''}

By period 18, the LLM confirms the shift:

\textit{``Sustained high-demand regime confirmed: Period 16 (273) and Period 17 (170) indicate a significant upward shift from the previous average of $\sim$100. Calculating safety stock based on a mean of $\sim$220 for the near term.''}

Over successive periods, the LLM detects the changepoint, adapts its forecast (100 $\to$ 220), and monitors for confirmation.
\end{example}

\begin{example}[LLM uses world knowledge] \label{ex:llm_world_knowledge}
Consider instance 599580017, the ``Timeless Midrise Brief'' (women's swimwear). In early spring, the LLM reasons:

\textit{``Based on the date 2019-03-11, we are entering the Spring season (Northern Hemisphere), which is a high-demand period for beachwear.''}

By November, the same LLM adjusts:

\textit{``We are currently in mid-November (Period 40), which is deep into the off-season for swimwear.''}

This calendar-aware reasoning allows the LLM to incorporate seasonal knowledge that the OR algorithm cannot access.
\end{example}

\begin{table}[H]
    \centering
    \small
    \begin{tabular}{p{0.30\textwidth}p{0.30\textwidth}p{0.30\textwidth}}
    \toprule
    \textbf{Instance 1 (Swimwear)} & \textbf{Instance 2 (Blazer)} & \textbf{Instance 3 (Trousers)} \\
    \textbf{Demand trend detection} & \textbf{Seasonal reasoning} & \textbf{Detecting anomalies (lost orders)} \\
    \midrule

    \textit{``it seems that the demand is going down, so I would rather keep the conservative strategy''}

    \vspace{3pt}

    \textit{``demand decreased sharply... go one week without ordering''}
    \vspace{3pt}

    \textit{``As demand has shifted to a low level, reduce the inventory target''}
    \vspace{3pt}

    \textit{``seems like the demand is lower than before, so make conservative choices''}
    \vspace{3pt}

    &
    \textit{``Blazers are a very seasonal product, make sure to rather over estimate during fall/winter. Estimate very conservatively during spring/summer.''}

    \vspace{3pt}

    \textit{``given that a blazer is for spring/autumn weather... Be conservative for summer, but don't understock in autumn.''}

    \vspace{3pt}

    \textit{``less blazers in summer... peak before fall/autumn''}
    &
    \textit{``some orders may never arrive, make more generous orders''}

    \vspace{3pt}

    \textit{``in-transit inventory is inaccurate... ignore the in-transit inventory''}

    \vspace{3pt}

    \textit{``make way more generous orders... hedge against orders that may not arrive.''}

    \vspace{3pt}

    \textit{``be very aggressive since very few orders are arriving''}
    \\
    \bottomrule
    \end{tabular}
    \caption{Examples of human guidance messages for the three instances illustrating three types of reasoning.
    } \label{tab:human_reasoning_examples}
\end{table}

\clearpage
\section{Details for Human Experiments and Inventory Game} \label{sec:human_instances}

\begin{figure}
    \centering
    \includegraphics[width=0.85\textwidth]{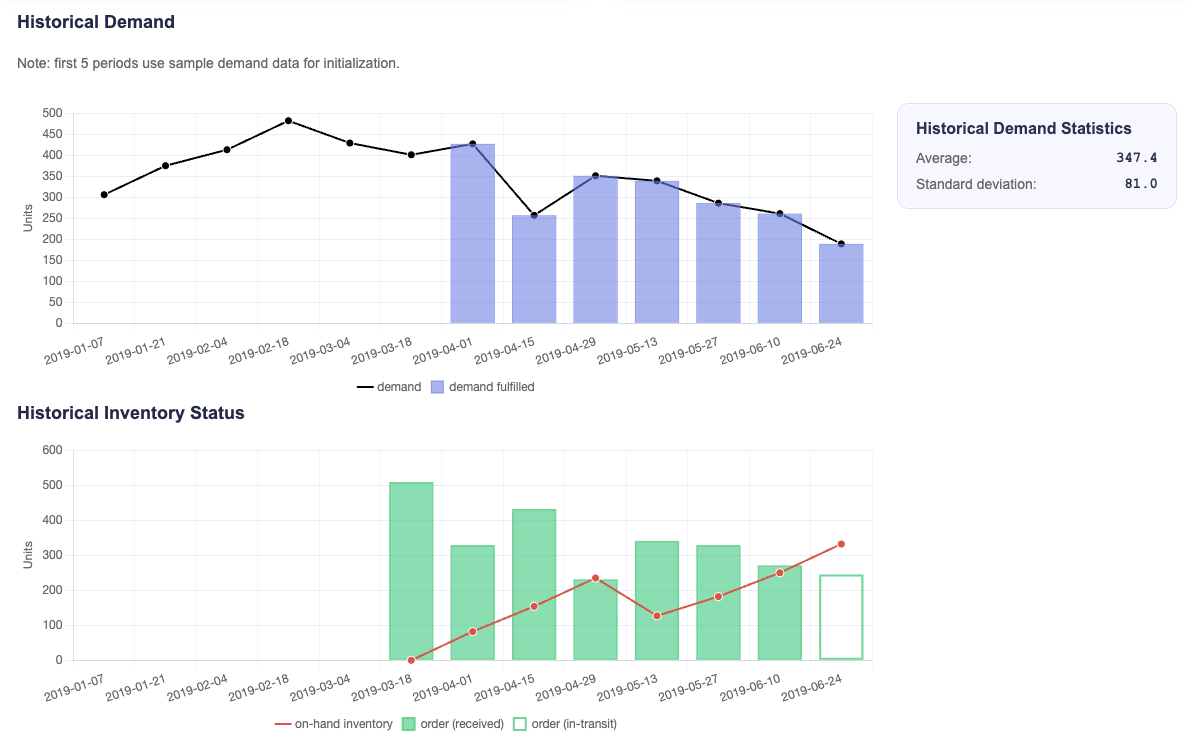}
    \caption{Analytics panel of the inventory game: historical demand chart with summary statistics (left) and historical inventory status (right). Together with the decision panel shown in Figure~\ref{fig:game_interface}, these displays give participants all information needed for each ordering decision.}
    \label{fig:interface_bottom}
\end{figure}

\cref{fig:human_instances_description} shows the product image and description for each of the three instances that were displayed to the users, and \cref{fig:human_instances_demands} plots their realized demand patterns over time.
The instances also differ in their lead-time configurations: Instance~1 has a lead time of~0, Instance~2 has a lead time of~1, and Instance~3 has a stochastic lead time that equals~1 with probability 75\% and $\infty$ otherwise.

\begin{figure}
    \centering
    \begin{subfigure}{0.68\textwidth}
        \includegraphics[width=\linewidth]{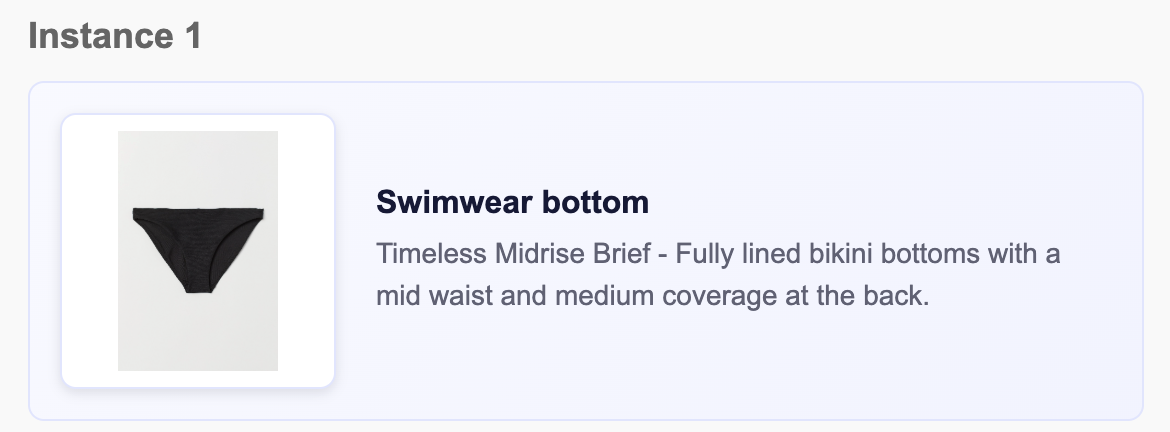}
    \end{subfigure}
    \vspace{-0.8em}
    \begin{subfigure}{0.68\textwidth}
        \includegraphics[width=\linewidth]{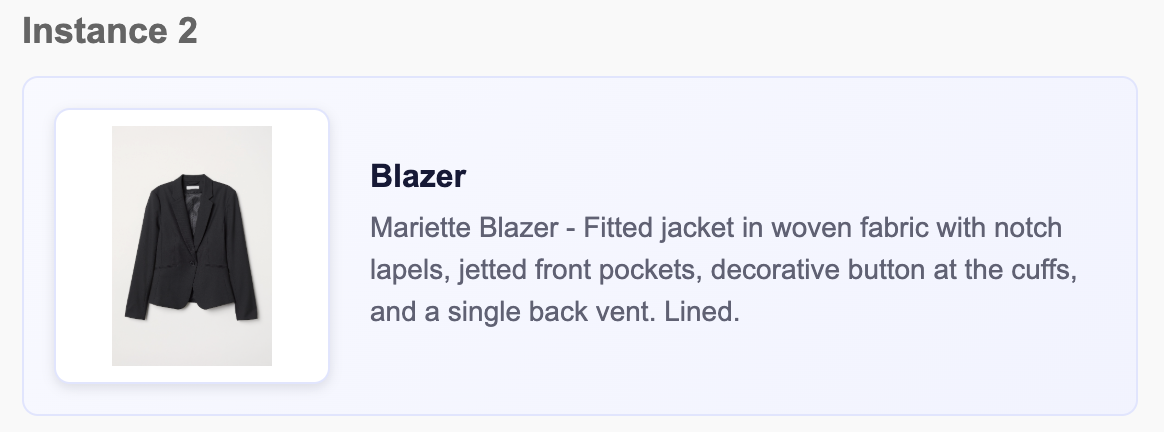}
    \end{subfigure}
    \vspace{-0.8em}
    \begin{subfigure}{0.68\textwidth}
        \includegraphics[width=\linewidth]{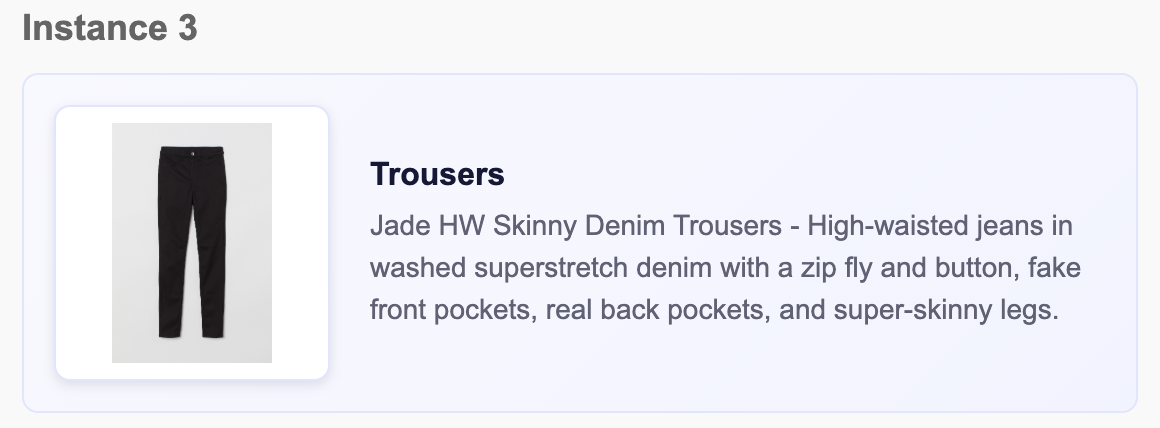}
    \end{subfigure}
    \caption{Product images and descriptions for the three instances used in the human experiments.}
    \label{fig:human_instances_description}
\end{figure}

\begin{figure}
    \centering
    \includegraphics[width=0.78\textwidth]{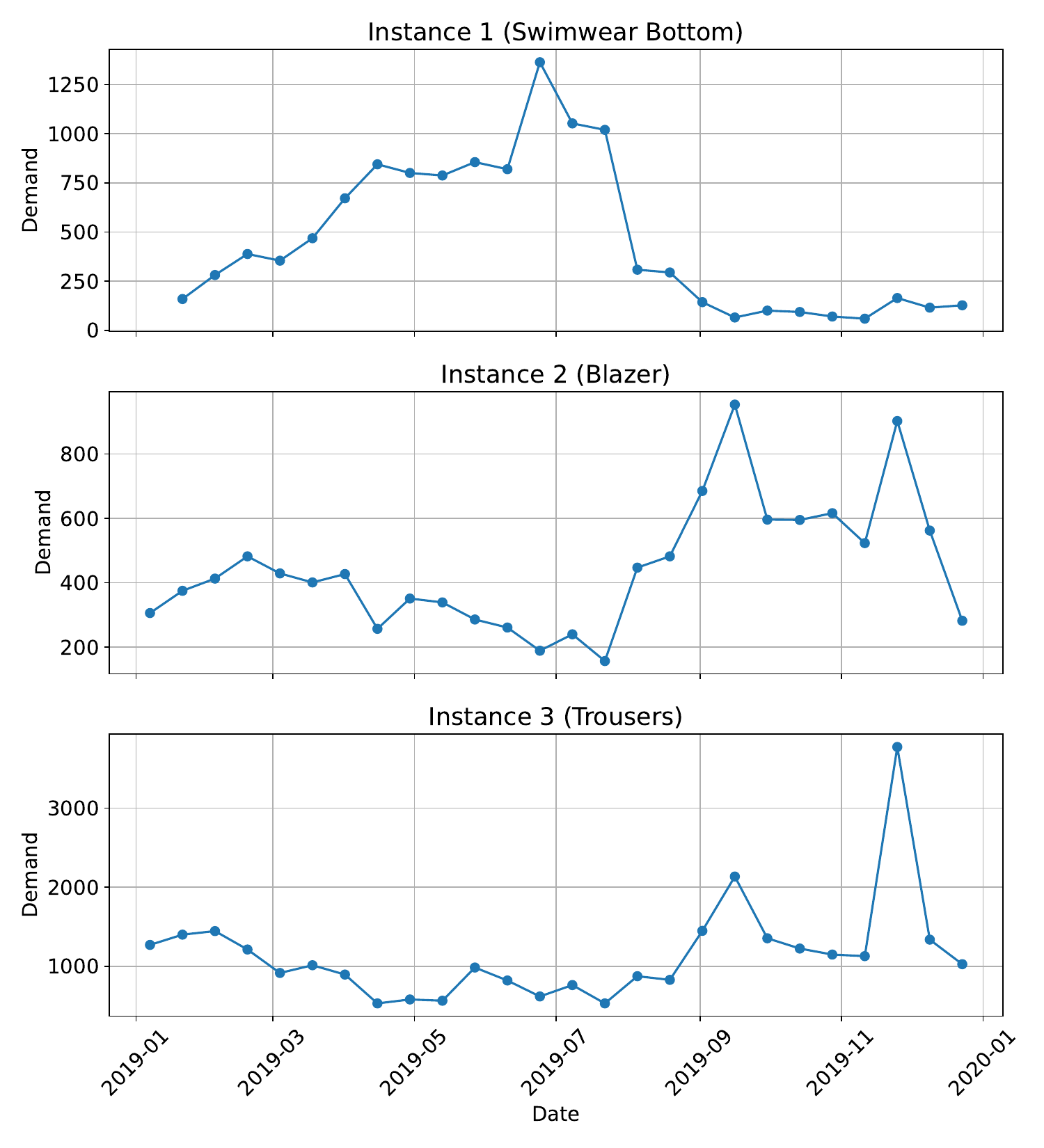}
    \caption{The demand realizations for the three instances used for human experiments. Each panel shows realized demand over time for one instance.
    }
    \label{fig:human_instances_demands}
\end{figure}

\section{Details of Regression Analysis} \label{sec:regression_details}

We assess whether Mode B outperforms an automated baseline by comparing human-in-the-loop Mode B outcomes to automated runs conducted on the same three instances (100 runs per instance). We pool these observations and estimate a linear model with instance fixed effects and a Mode B indicator:
\begin{equation}
\text{Outcome}_{i} = \gamma_0 + \gamma_2\,\mathbbm{1}[\text{instance}=2] + \gamma_3\,\mathbbm{1}[\text{instance}=3] + \delta\,\mathbbm{1}[\text{Mode B}] + u_i,
\end{equation}
where instance 1 is the baseline. The coefficient \(\delta\) captures the average advantage of Mode B over the automated method after controlling for instance differences.

We conduct a one-sided test of \(H_0: \delta \le 0\) versus \(H_1: \delta > 0\). Standard errors are cluster-robust: Mode B observations are clustered by subject, and automated observations are clustered by run. This accounts for within-subject dependence on the human side and repeated structure across automated runs.

\paragraph{Mode A vs.\ OR (Deterministic Algorithm).}
We additionally compare Mode A to a deterministic OR baseline. The analysis mirrors the specification above, replacing the Mode B indicator with a Mode A indicator and using the OR outcomes for each instance. The coefficient on the Mode A indicator captures the average advantage of Mode A over OR after controlling for instance fixed effects. We test \(H_0: \delta \le 0\) vs.\ \(H_1: \delta > 0\) with cluster-robust standard errors, clustering Mode A observations by subject and OR observations by run.

\section{Proof of \texorpdfstring{\cref{thm:ce_lower_bound}}{Theorem 1}} \label{sec:proof_complementarity}

\begin{proof}
\emph{Lower bound.}
Recall that $\mathrm{CE}_i = b(i) - a(i)$. Fix an arbitrary threshold $t \in \bR$. First, observe that if $b(i) > t + \delta$ and $a(i) \leq t$, then $b(i) - a(i) > \delta$, so the event $\{b(i) > t + \delta\} \cap \{a(i) \leq t\}$ is a subset of $\{\mathrm{CE}_i > \delta\}$. Therefore,
\begin{align}
\bP[\mathrm{CE}_i > \delta] \;\geq\; \bP\bigl[b(i) > t + \delta \;\text{and}\; a(i) \leq t\bigr]. \label{eq:subset_bound}
\end{align}
Applying the Fr\'{e}chet inequality ($\bP[A \cap B] \geq \bP[A] + \bP[B] - 1$) with $A = \{b(i) > t + \delta\}$ and $B = \{a(i) \leq t\}$:
\begin{align}
\bP\bigl[b(i) > t + \delta,\; a(i) \leq t\bigr]
&\;\geq\; \bigl(1 - F_b(t + \delta)\bigr) + F_a(t) - 1
\;=\; F_a(t) - F_b(t + \delta). \label{eq:frechet_applied}
\end{align}
Combining \eqref{eq:subset_bound} and \eqref{eq:frechet_applied} and taking the supremum over $t$ yields the bound. The supremum is nonnegative since both CDFs tend to $0$ as $t \to -\infty$.

\emph{Tightness.}
Let $\varepsilon := \sup_{t \in \bR}\{F_a(t) - F_b(t+\delta)\}$.
The case $\varepsilon = 1$ is trivial: then $a \leq t^*$ a.s.\ and $b > t^*+\delta$ a.s.\ for some $t^*$, so $b - a > \delta$ a.s.\ under every coupling.

Assume $\varepsilon < 1$. Define two sub-distributions of total mass $1-\varepsilon$:
\[
F_{a^+}(t) := \max\{F_a(t) - \varepsilon,\, 0\}, \qquad F_{b^-}(t) := \min\{F_b(t),\, 1-\varepsilon\}.
\]
Intuitively, $a^+$ removes the lower $\varepsilon$-mass of $a$ and $b^-$ removes the upper $\varepsilon$-mass of $b$.
By the definition of $\varepsilon$, $F_a(t) - \varepsilon \leq F_b(t+\delta)$ for all $t$, so
\[
F_{a^+}(t) = \max\{F_a(t)-\varepsilon, 0\} \leq F_b(t+\delta),
\]
and since $F_{a^+}(t) \leq 1-\varepsilon$ as well,
\[
F_{a^+}(t) \leq \min\{F_b(t+\delta),\, 1-\varepsilon\} = F_{b^-}(t+\delta).
\]
Normalizing by $1-\varepsilon$, this gives $\widetilde{F}_{a^+}(t) \leq \widetilde{F}_{b^-}(t+\delta)$ for all $t$, meaning the retained distribution of $a$ first-order stochastically dominates the retained distribution of $b - \delta$. By the monotone coupling theorem, the retained masses can therefore be coupled so that $a \geq b - \delta$, equivalently $b - a \leq \delta$, almost surely.

Now couple the remaining mass (total probability $\varepsilon$) arbitrarily. The event $\{b - a > \delta\}$ can occur only within this remaining $\varepsilon$-mass, so $\Pr(b - a > \delta) \leq \varepsilon$. Combined with the lower bound $\Pr(b - a > \delta) \geq \varepsilon$, we obtain $\Pr(b - a > \delta) = \varepsilon$, completing the proof.
\end{proof}

\section{Use of AI Writing Assistance} \label{sec:ai_writing}

Portions of this manuscript were edited and polished with the assistance of large language models (including Claude and GPT-5 Mini). The AI tools were used to help with prose editing, LaTeX formatting, and stylistic improvements. All AI-generated suggestions were reviewed, verified, and revised by the authors, who take full responsibility for the content of this paper.

\end{document}